\pdfoutput=1
\documentclass{article}

\PassOptionsToPackage{numbers, compress}{natbib}
\usepackage[preprint]{neurips_2026}

\usepackage{amsmath,amsfonts,bm}

\def\eqref#1{equation~\ref{#1}}

\def\1{\bm{1}}

\DeclareMathAlphabet{\mathsfit}{\encodingdefault}{\sfdefault}{m}{sl}
\SetMathAlphabet{\mathsfit}{bold}{\encodingdefault}{\sfdefault}{bx}{n}

\def\gC{{\mathcal{C}}}
\def\gD{{\mathcal{D}}}

\def\gJ{{\mathcal{J}}}

\def\gS{{\mathcal{S}}}

\def\gV{{\mathcal{V}}}

\newcommand{\E}{\mathbb{E}}

\newcommand{\R}{\mathbb{R}}

\DeclareMathOperator{\Tr}{Tr}

\usepackage[utf8]{inputenc} 
\usepackage[T1]{fontenc}    
\usepackage{hyperref}       
\usepackage{url}            
\usepackage{booktabs}       
\usepackage{amsfonts}       
\usepackage{graphicx}       
\usepackage{amsthm}         
\usepackage{nicefrac}       
\usepackage{microtype}      
\usepackage{xcolor}         
\usepackage{thmtools,thm-restate}
\usepackage{amssymb} 
\usepackage{enumitem}
\usepackage{subcaption}
\usepackage[most]{tcolorbox}
\usepackage{wrapfig}

\newtheorem{theorem}{Theorem}[section]

\newtheorem{prop}[theorem]{Proposition}

\newtheorem{remark}[theorem]{Remark}

\newcommand{\norm}[1]{\left\lVert#1\right\rVert}

\newcommand{\N}{\mathbb{N}}

\def\RMS{\mathop{\rm RMS}\nolimits}

\def\id{\mathop{\rm id}\nolimits}
\def\lin{\mathop{\rm lin}\nolimits}
\def\block{\mathop{\rm B}\nolimits}

\def\attn{\mathop{\rm ATTN}\nolimits}

\def\Tr{\mathop{\rm tr}\nolimits}
\newcommand{\topl}{^{(l)}}

\usepackage{xspace}
\makeatletter
\DeclareRobustCommand\onedot{\futurelet\@let@token\@onedot}
\def\@onedot{\ifx\@let@token.\else.\null\fi\xspace}

\def\ie{{i.e}\onedot}

\makeatother

\renewcommand{\eqref}[1]{(\ref{#1})} 

\title{Sink vs. diagonal patterns as mechanisms\\ for attention switch and oversmoothing prevention}

\author{%
  Peter Súkeník\\
  Institute of Science and Technology (ISTA)\\
  Austria\\
  \texttt{peter.sukenik@ista.ac.at} \\
  \And
  Cristina López Amado\\
  Institute of Science and Technology (ISTA)\\
  Austria\\
  \texttt{cristina.lopezamado@ista.ac.at} \\
  \AND
  Christoph H. Lampert\thanks{Equal contribution} \\
  Institute of Science and Technology (ISTA)\\
  Austria\\
  \texttt{chl@ist.ac.at} \\
  \And
  Marco Mondelli\footnotemark[1] \\
  Institute of Science and Technology (ISTA)\\
  Austria\\
  \texttt{marco.mondelli@ista.ac.at}
  }

\begin{document}

\looseness=-1

\maketitle

\begin{abstract}
This paper studies the role of sinks and diagonal patterns as attention switch and anti-oversmoothing mechanisms. We analyze geometric conditions under which sinks can be represented, showing a necessary alignment between the embedding of the sink and all other embeddings. Next, we refine the current understanding of the role of sinks in oversmoothing prevention: we specify the conditions under which dense attention provably smooths more than sparse attention, and empirically verify that such conditions are often satisfied in practice. We further prove an equivalence between sinks and hard attention switch, in which the output of the attention is identically 0. Finally, we relax the hard attention switch by allowing token self-communication: we provide a quantitative comparison of the costs of representing sinks vs.\ diagonal patterns, showing why sinks are favored in pretrained transformers. The introduction and analysis of diagonal patterns and the generalization of the attention switch close the gap between what oversmoothing prevention requires and what sinks provide, while also establishing when and why attention layers act like MLPs if token communication is not necessary. 
\end{abstract}

\section{Introduction}

Softmax self-attention is the key 
component of transformers \cite{vaswani2017attention} that enables communication between tokens and, hence, makes it possible for non-recurrent architectures to solve sequential modeling tasks. 
Unlike in GNNs \cite{scarselli2008graph}, the full-graph
structure of attention allows for much denser and long-range communication patterns, a core feature that made transformers outperform vanilla RNN-like architectures \cite{bengio1970recurrent} in language modeling. Thus, it came as a surprise when \cite{xiao2024efficient} revealed that many tokens in many attention heads do not communicate at all. More precisely, \cite{xiao2024efficient} showed heads in which all tokens in a sequence idly attended to the first, beginning-of-sequence (BOS) token. 
This pattern was termed \textit{attention sinks} due to the 
resemblance of visualized attention maps (a vertical line) to sinks. Sinks have since been reproduced 
in different language settings \cite{gu2025when}, vision transformers \cite{darcet2024registers}, vision-language models \cite{luo2025sink}, multi-modal models \cite{kang2025see} and diffusion-based models \cite{wen2025vdit, rulli2025dlm}.

Sinks have received significant attention from both practitioners and theoreticians. Practitioners primarily studied 
how to take practical advantage of their existence or how to mitigate their emergence. \cite{xiao2024efficient} used sinks explicitly to generate long-form content without any fine-tuning and suggested to assign a dedicated sink token during pre-training. Similar concepts were discussed in the follow-ups \cite{li2024streamingdialogue, cai2024pyramidkv}. \cite{yu2024unveiling} presented mixed results where not all sinks improve the accuracy. On the negative side, sinks have been a target of blame in the context of resource effectiveness and interpretability \cite{kang2025see}, hallucination \cite{tu2025attentionreallocation, wang2025mirage}, quantization issues \cite{bondarenko2023quantizable}, and training stability \cite{qiu2025gated}. 
Thus, some works \cite{qiu2025gated, gu2025when} suggested methods to mitigate them, mostly by 
relaxing the softmax-normalization constraint. 

Theoreticians 
focused on explaining \textit{why} attention sinks emerge (see Section~\ref{sec:related} for an overview). 
Some works \cite{barbero2024glasses, barbero2025llms} claimed that sinks help transformers reduce oversmoothing and oversquashing. Others \cite{guo2024active, ranmilo2026necessary} connected sinks with the active-dormant mechanism (which we call attention switch), where the head is 
inactive on some tokens and outputs zero. 
However, existing oversmoothing results do not cover all architectural components of transformers, 
and they 
do not exclude other sparse attention patterns, such as diagonal attention. 
Similarly, \cite{guo2024active} neither proves that sinks are the only mechanism for 
attention switch, nor it 
argues why switching off attention 
means that it outputs 0, instead of e.g.\ acting as an MLP on idle tokens. Finally, it remains unclear how 
heads 
exhibit the sink from a geometric point of view. Our work closes the outlined gaps via the following contributions:

\vspace{-.5em}


\begin{itemize}[leftmargin=*]
    \item Section \ref{sec:how_sinks_represented} studies how sinks are represented. We show that, for the attention layer to be able to represent the sink, it 
    suffices that all token embeddings are aligned  (either positively or negatively) 
    with the sink embedding. We empirically verify this is the case in pretrained transformers. 
    \item Section \ref{sec:oversmoothing} provides the first quantitative analysis of oversmoothing considering all architectural components. 
    We analyze a single 
    attention step 
    and derive 
    conditions under which increasing attention density increases or decreases 
    token cosine similarity. 
    Our analysis shows that, in principle, skip connections and value-output transformations may counteract the mixing induced by attention, even under uniform attention. However, measurements on trained LLMs show that this 
    is rare in practice: 
    in most heads, token similarity increases as attention becomes more uniform. 
    \item While earlier work \cite{guo2024active, ranmilo2026necessary} defines an attention switch when the output on the residual stream is 0, we regard this as a hard attention switch and define a relaxation, dubbed the \textit{soft attention switch}, which only prohibits inter-token communication, but doesn't force zero output. 
    Soft switches allow for diagonal attention, which we empirically verify to be common in transformers.

    \item Section \ref{sec:sink_switch}
shows that, under mild conditions (which we verify in pretrained LLMs), 
a head is a hard attention switch if and only if it is a sink. This equivalence goes beyond the results of \cite{guo2024active}, which suggest sinks only as a possible mechanism for hard attention switch and show the emergence of sinks 
close to convergence 
(i.e., when attention already exhibits some amount of sink behavior). 

\item Section \ref{sec:sink_diag} focuses on the soft attention switch, and it establishes a trade-off between sink and diagonal attention in terms of the $\ell_2$ cost necessary to represent them. 
Because sinks are provably cheaper in a wide range of hyperparameters and distributions, this supports their larger relative prevalence w.r.t.\ diagonal attention in pretrained transformers.

\end{itemize}

\vspace{-.5em}

\section{Related work}\label{sec:related}

\vspace{-.5em}

\textbf{Attention sinks and diagonal patterns.} 
\cite{guo2024active} argues that sinks 
implement the active-dormant mechanism 
and justifies the emergence of sinks in a near-convergence regime on a bigram-backcopy task. \cite{zhang2025attention} claims that sinks play a more active role in computation by tagging future tokens with a shared embedding which can be exploited geometrically in subsequent layers, e.g., for averaging. \cite{barbero2024glasses, barbero2025llms} study sinks from the representational viewpoint and show that they help prevent oversmoothing and oversquashing, while \cite{queipo2025attention} connects sinks with compression valleys arguing they are both manifestations of massive activations in the residual stream. \cite{ruscio2025you} proposes a geometric interpretation in which sinks act as reference-frame anchors, simplifying the geometry and subsequent attention allocation. \cite{qiu2026unified} 
argues that attention and residual sinks 
rescale and refine non-outlier features and attention weights. The 
contemporaneous
work \cite{ranmilo2026necessary} proves 
that sinks are necessary in 
an adaptation of the bigram-backcopy task from \cite{guo2024active}. 
We note that this paper 
relies on the following assumption: if a pair of token embeddings $x_t, x_q$ appears at positions $t, q$ in a sequence with non-vanishing probability, then the same pair must appear with non-vanishing probability precisely at positions $2,3$. 
Our theory in Section \ref{sec:sink_switch} 
explicitly takes into account positional embeddings, and it does not require such a strong assumption. Moreover, the main theorem of \cite{ranmilo2026necessary} can be recovered from the more general version of our Proposition \ref{thm:sink_switch_cppaste_generic} presented in Appendix~\ref{appx:proofs6} for context lengths smaller than dimension.
Finally, while diagonal patterns have been discussed e.g.\ in \cite{hankemeier2026stochastic, zhai2026exclusive}, to the best of our knowledge we are the first to explicitly consider them in the context of oversmoothing and sinks.

\vspace{-.5em}

\paragraph{Oversmoothing.} 

\cite{dong2021attention} proves that rank collapse occurs doubly exponentially with depth  in multi-head transformers without skip connections, layer normalization, or MLPs. Furthermore, while the role of skip connections and MLPs is considered, \cite{dong2021attention} does not analyze their joint effect. 
\cite{geshkovski2023emergence} shows that token representations converge to a clustered configuration for single-head attention 
without layer normalization or MLPs, assuming that query, key and value matrices are shared across layers. 
\cite{dovonon2025setting} analyzes a similar setting without such assumptions on learnable matrices, showing that oversmoothing can be avoided for some 
configurations. Oversmoothing at initialization is considered in 
\cite{noci2022signal,giorlandino2025two}. 
\cite{wu2024role} shows 
that 
oversmoothing occurs in single-head self-attention 
without skip connections regardless of the attention mask, and layer normalization may prevent it. 
\cite{karagodin2024clustering} extends the clustering results of \cite{geshkovski2023emergence} to causal masking, while still making strong assumptions on learnable matrices.
Related work studies the loss of token information as context length increases (rather than as depth increases): \cite{saada2024mind} proves rank collapse at initialization 
for single-head attention without skip connections, layer normalization, or MLPs;  \cite{barbero2024glasses,barbero2025llms} study oversquashing, i.e., the 
phenomenon in which 
the output is less sensitive to specific input tokens. 
Overall, 
prior work lacks a characterization of causal self-attention beyond initialization that jointly accounts for normalization and skip connections under weak assumptions on learnable matrices, and our work addresses this gap. 

\vspace{-.5em}

\section{Problem setup}


\vspace{-.5em}

\textbf{Model and task.}
We use the shorthand $[n]:=\{1,\dots,n\}.$ Let $\norm{\cdot}_2$ and $\norm{\cdot}_F$ be Euclidean and Frobenius norms. We denote the all-one vector of length $T$ by $\mathbf{1}\in\R^T$. For a matrix $M$, we denote its $i$-th row by $M_{i,:}$ and its $j$-th column by $M_{:,j}$ or $M_j$.
We consider 
single-head causal self-attention transformers $f_\Theta=\lin_{L+1}\circ \RMS \circ \block_L\circ \block_{L-1}\circ \dots \circ \block_1$. Each block is the composition of self-attention and MLP sub-blocks with residual connections: $\block_l=(\id+\lin_{l,2}\circ \sigma\circ\lin_{l,1}\circ \RMS)\circ (\id + \attn_l \circ \RMS),$ with $\RMS$ denoting the RMSNorm layer (in our theoretical results we assume it has non-learnable parameters), $\id$ the identity mapping, $\lin$ a fully-connected layer, $\sigma$ the ReLU layer and $\attn$ the single-head attention defined below. For an input embedding matrix $X^{(1)}=X_e+X_p\in\R^{d\times (T+1)},$ with $X_e, X_p$ token and positional embeddings, $X^{(l)}$ is the output of the first $l-1$ blocks and $Z^{(l)}=\RMS (X^{(l)}).$ An extension to relative positional embeddings is discussed in Section \ref{sec:sink_diag}. The attention layer $\attn_l$ computes the (row-wise) attention score matrix $A\topl_{ij}=\frac{\operatorname{exp}(S\topl_{ij})}{\sum_{k=0}^i\operatorname{exp}(S\topl_{ik})}$ if $j\le i$ and $0$ otherwise, 
where $S^{(l)}=(W^{(l)}_QZ^{(l)})^\top(W^{(l)}_KZ^{(l)})/\sqrt{d}$ is the score matrix and $W_Q,W_K\in\R^{d\times d}$ are query and key matrices. 
Then, $\attn_l(Z\topl)= W_{VO}^{(l)}Z^{(l)}(A^{(l)})^\top,$   
where $W_{VO}^{(l)}\in\R^{d\times d}$ is the multiplication of the value and the output matrix. When focusing on a single layer, we drop layer-wise indexing. 
We consider transformers pretrained by minimizing $\mathbb{E}_{\rho(X)}\mathcal{L}(f_\Theta(X), Y)+\lambda \norm{\Theta}^2_2$, where $\rho(X)$ is the training data empirical distribution and $\mathcal{L}$ is a loss (e.g., next-token cross-entropy). 

\textbf{Attention sink, diagonal attention and attention switch mechanisms.} 
%
A 
head 
exhibits \emph{attention sink}, if a fraction of 
context embeddings 
attends exclusively to BOS. 
This does not need to be global for the entire sequence. Instead, it suffices that 
individual tokens within the sequence attend to BOS. 
A 
head 
exhibits \emph{diagonal attention}, if a fraction of 
context embeddings 
attends exclusively to themselves. As for the sink, this property is also defined on the individual token level as opposed to the sequence level. 
A 
head 
exhibits a \emph{hard attention switch}, if its output on a fraction of context embeddings is zero. A head exhibits a \emph{soft attention switch}, if its output on a fraction of context embeddings is either zero or equal to the 
context embeddings themselves. 

\textbf{Oversmoothing measure.} We quantify token similarity 
via average cosine similarity. For a token matrix $X\in\R^{d\times T}$, the similarity between 
$X_i$ and $X_j$ is 
$\rho_{ij}(X):=\frac{(X_{i})^\top X_j}{\norm{X_i}_2\norm{X_j}_2}$, 
and the average cosine similarity of $X$ is $\rho(X)=\frac{2}{T(T-1)}\sum_{i=1}^T\sum_{j=i+1}^T\rho_{ij}$. Other notions of oversmoothing studied in the literature include input similarity \cite{wang2022antioversmoothing,wu2024role} and rank collapse \cite{shi2022revisiting,noci2022signal,giorlandino2025two} (see \cite{dovonon2025setting} for details).

\textbf{Empirical head pattern labeling.}
We empirically 
classify a head as a certain type when the relative attention mass on the corresponding pattern exceeds a threshold 
(e.g., a head is a \textit{sink} if BOS attention weights make up $\ge 40\%$ of the total mass). For dual-pattern heads, e.g. sink--lower-diagonal heads, we require both the joint pattern mass to exceed the threshold and the sub-patterns to be relatively balanced (e.g., the lower-diagonal mass is $\ge 10\%$ of the joint sink--lower-diagonal mass, and the joint mass is $\ge 60\%$ of the total mass).

\vspace{-.6em}

\section{Warm-up: 
How transformers represent sinks
}\label{sec:how_sinks_represented}

\vspace{-.6em}


While sinks frequently appear in many heads per block \cite{ivanitskiy2025motifs, gu2025when}, it is not a priori clear when the sink can be geometrically represented (
e.g., if two embedding vectors are opposite to each other, they cannot be both positively aligned with the sink's key). To address this, the result below provides conditions under which a group of tokens attends to the sink. The proof is deferred to Appendix \ref{appx:theorySec4}.

\begin{restatable}{prop}{representatbilityofsink}
\label{thm:representability_sink}
Let $Z\in \R^{d\times (T+1)}$ be a matrix of input embeddings $z_0, z_1, \dots, z_T$, and let $\mathcal{J}\subset [T]$ be a set of indices. Then, there exists $W \in \R^{d\times d}$ such that $z_j^T W z_0 > z_j^TWz_i$ for all $j\in \mathcal{J}$ and $i\in [T]$ if and only if the two sets of vectors $\{z_0-z_i, \, i \in [T]\}$ and $\{z_j, \, j \in \mathcal{J}\}$ lie in half-spaces.
\end{restatable}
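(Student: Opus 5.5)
The plan is to reduce the statement to a bilinear positivity condition and then verify both directions with rank-one constructions. Throughout, "lie in half-spaces" will mean that each set lies in an open half-space through the origin: there is $u\in\R^d$ with $u^\top(z_0-z_i)>0$ for all $i\in[T]$, and there is $v\in\R^d$ with $v^\top z_j>0$ for all $j\in\gJ$. I also assume $\gJ\neq\emptyset$, since otherwise the statement is vacuous. The first step is the observation that the desired inequalities $z_j^\top W z_0 > z_j^\top W z_i$ are equivalent, by bilinearity, to
\begin{equation*}
z_j^\top W (z_0 - z_i) > 0 \quad \text{for all } j\in\gJ,\ i\in[T].
\end{equation*}
Both directions will be argued from this reformulation.

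For sufficiency, suppose $u$ and $v$ as above exist. I will take the rank-one matrix $W = v u^\top$. Then
\begin{equation*}
z_j^\top W (z_0-z_i) = (v^\top z_j)\,\bigl(u^\top (z_0-z_i)\bigr),
\end{equation*}
which is a product of two strictly positive numbers for every $j\in\gJ$ and $i\in[T]$. This gives the required strict inequalities.

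For necessity, suppose $W$ satisfies the inequalities. To produce the normal for the differences, fix any $j_0\in\gJ$ and set $u := W^\top z_{j_0}$. Then $u^\top(z_0-z_i) = z_{j_0}^\top W(z_0-z_i) > 0$ for all $i\in[T]$, so $\{z_0-z_i\}$ lies in an open half-space. Symmetrically, fix any $i_0\in[T]$ and set $v := W(z_0-z_{i_0})$. Then $v^\top z_j = z_j^\top W (z_0-z_{i_0}) > 0$ for all $j\in\gJ$, so $\{z_j\}$ lies in an open half-space.

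I do not expect any real obstacle here. The only delicate points are conventions: half-spaces must be open and through the origin, $\gJ$ must be nonempty, and $[T]$ excludes the index $0$. The last point matters because $z_0-z_0=0$ could never lie in an open half-space. I would state these conventions explicitly at the start of the proof.
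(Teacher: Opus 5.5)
Your proof is correct, and it takes a more elementary route than the paper.

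The paper's route goes through duality:
\begin{itemize}
\item It rewrites the constraints as $\langle W, z_j(z_0-z_i)^\top\rangle_F>0$.
\item It invokes Farkas' lemma (really Gordan's alternative): $W$ exists iff there is no nonzero $\alpha\ge 0$ with $\sum_{i,j}\alpha_{ij}z_j(z_0-z_i)^\top=0$.
\item It shows this dual condition is equivalent to the two half-space conditions. Sufficiency follows by sandwiching a vanishing combination between $u^\top$ and $v$. Necessity takes a vanishing nonnegative combination of one of the two sets (again a Gordan-type step) and replicates it along the other index.
\end{itemize}

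You bypass the theorem of alternatives entirely. The factorization $z_j^\top vu^\top(z_0-z_i)=(v^\top z_j)\,(u^\top(z_0-z_i))$ gives sufficiency. The witnesses $u=W^\top z_{j_0}$ and $v=W(z_0-z_{i_0})$ give necessity in one line each.

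Beyond brevity, your route shows that whenever a sink is representable at all, it is representable by a rank-one $W$. This fits the paper's later argument in Section~\ref{sec:sink_diag} that sinks are cheap because their query-key matrix is low-rank. The paper's dual formulation instead makes the convex-geometric obstruction explicit, as a certificate $\alpha$ of infeasibility.

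One small correction on conventions: nonemptiness of $\mathcal{J}$ is a genuine hypothesis, not just a way to avoid a vacuous case. For $\mathcal{J}=\emptyset$ the left-hand side holds trivially for every $W$, while the right-hand side fails, e.g.\ when $z_1=z_0$. The paper's necessity argument also uses $\mathcal{J}\neq\emptyset$ implicitly. Your reading of ``half-space'' as open and through the origin matches the strict inequalities used in the paper.
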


The existence of $W$ s.t.\ $z_j^T W z_0 > z_j^TWz_i$ for all $j\in \mathcal J$ and $i\in [T]$ is equivalent to the head being able to represent a sink. Thus, Proposition \ref{thm:representability_sink} shows that, if the sink is representable, one can project the sink furthest away from 
other embeddings. Moreover, 
context embeddings attending to the sink must lie in a cone. A simple sufficient condition is $z_0^Tz_i \le 0$ for all $i\in[T]$, i.e., context embeddings are jointly rotated against the BOS embedding. 

In Figure~\ref{fig:bos_cos}, we verify empirically that this is indeed the case for all middle layers of 
three pretrained transformers (gpt2-small, Llama 3.1-8B and Gemma 7B). We measure the cosine similarity between BOS embedding and all other embeddings across multiple sequences and datasets. The maximum over all cosine similarities does not cross 0 in the inner layers of the models, allowing each attention head to produce any sink pattern. 


\begin{figure}
    \centering
    \includegraphics[width=0.9\linewidth]{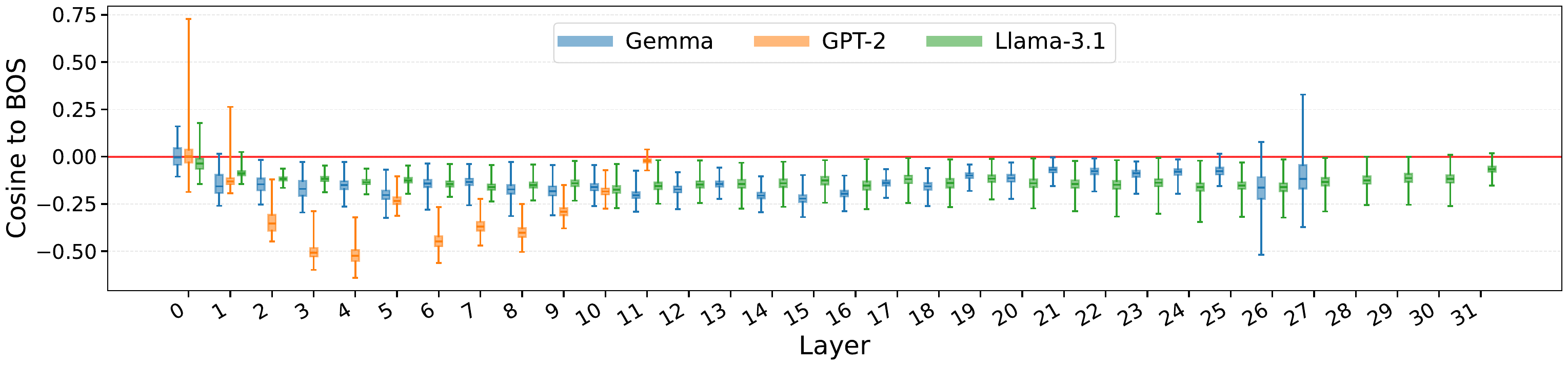}
    \caption{Cosine similarity between BOS and all other 256 token embeddings at the input to the attention layer, pooled across heads and 50 sequences across 4 language datasets (\texttt{TinyStories} \protect\cite{eldan2023tinystories}, \texttt{tinyshakespeare} \protect\cite{karpathy2015char-rnn}, \texttt{WikiText} \protect\cite{merity2017pointer}, and \texttt{CodeSearchNet-Python} \protect\cite{husain2019codesearchnet}). The whiskers of the box plot point to absolute minimum and maximum.}
    \vspace{-1em}
    \label{fig:bos_cos}
\end{figure}

    \vspace{-.5em}

\section{Sparse attention patterns as an oversmoothing prevention mechanism}\label{sec:oversmoothing}

    \vspace{-.5em}

In this section, we investigate whether sparse attention is necessary to avoid oversmoothing, 
or instead other transformer components can counteract the effect of near-uniform attention patterns. As shown in Appendix~\ref{app:existence}, in the presence of skip connections and layer normalization, for a fixed token matrix, one can choose $W_{VO}\topl$ to prevent oversmoothing 
even under uniform attention. 
However, the key question is whether a single choice of $W_{VO}\topl$ prevents oversmoothing over the full range of token matrices arising from the data distribution, rather than only for particular inputs. 


\textbf{Theoretical analysis.}
We study how token similarity changes after a single step of attention, under a distributional assumption on token representations. For simplicity, we omit the layer index $l$ ($X_i^{(l)}\equiv X_i$, $W_{VO}^{(l)}\equiv W$). 
As token representations tend to exhibit positive cosine similarity across layers 
(see Figure~\ref{fig:avgcossim-across-layers} in  Appendix~\ref{appx:exp-oversmoothing}), we model the normalized token representations as \(Z_i = \bar{z} + \epsilon_i\). Here, $\bar{z}$ is the average token representation and $\epsilon_i$ is such that 
$\E[\epsilon_i] = 0$. In addition, we assume that each unnormalized token can be written as \(X_i=\beta Z_i.\)
To study how the sparsity of the attention matrix affects oversmoothing, for  $\lambda\in[0,1]$ we define 
$Y(\lambda)= \beta Z+ WZ((1-\lambda)I+\lambda A_{u})^\top$, 
where $A_u$ is the $T\times T$ uniform causal attention matrix, i.e., $(A_u)_{ij}=1/i$ for $j\leq i$. The matrix $(1-\lambda)I+\lambda A_{u}$ interpolates between the identity ($\lambda=0$) and uniform attention ($\lambda=1$). Accordingly, $Y(\lambda)$ represents the token matrix after one attention update with skip connection, when the attention pattern varies 
from no mixing to uniform causal mixing, while keeping token representations and weight matrix fixed. This  isolates the effect of attention sparsity on token mixing.
We approximate the expected pairwise cosine similarity of the token matrix $Y(\lambda)$ by 
\begin{equation}\label{eq:approx-cos-sim}
     \E[\rho_{ij}(Y(\lambda))]\approx\hat\rho_{ij}(Y(\lambda)):=\frac{\E[Y_i(\lambda)^\top Y_j(\lambda)]}{\sqrt{\E[\norm{Y_i(\lambda)}^2]}\sqrt{\E[\norm{Y_j(\lambda)}^2]}}.
\end{equation}
This yields the approximation $\hat\rho(Y(\lambda))=\frac{2}{T(T-1)}\sum_{i=1}^T\sum_{j=i+1}^T\hat\rho_{ij}(Y(\lambda))$ for the average cosine similarity. This approximation is expected to be accurate in high dimensions when the bilinear terms $Y_i(\lambda)^\top Y_j(\lambda),\norm{Y_i(\lambda)}^2,\norm{Y_j(\lambda)}^2$ are sufficiently concentrated around their means. 

The result below provides sufficient conditions under which $\hat\rho(Y(\lambda))$ increases/decreases with $\lambda$, i.e., as the attention pattern becomes denser. The proof is deferred to Appendix~\ref{app:pf:thm:incr-lambda-skip}.
\begin{restatable}{theorem}{thIncrLambdaSkip}\label{thm:incr-lambda-skip}
    Assume 
    $\E[\epsilon_i\epsilon_i^\top]=\Sigma_V$ for all $i\in[T]$ and $\E[\epsilon_j\epsilon_i^\top]=\Sigma_C$ for all $i\neq j$. Let $B:=\Sigma_V-\Sigma_C$ and $C=\bar z\bar z^\top+\Sigma_C$. Then, for all $\lambda\in[0,1]$ and $i<j$, $k\in [T]$, we have
       \vspace{-.5em} \begin{equation}\label{eq:approx}
        \begin{split}
    \E[Y_i(\lambda)^\top Y_j(\lambda)]&=
    \beta^2\Tr(C)+2\beta\Tr(CW)+\Tr(CW^\top W)+\frac{\lambda}{j}\left(\beta\Tr(BW)+\Tr(BW^\top W)\right),\\
\E[\norm{Y_k(\lambda)}^2]&=\beta^2\Tr(B+C)+2\beta\Tr((B+C)W)+\Tr((B+C)W^\top W)
    \\&-2\left(1-\frac{1}{k}\right)(\beta\Tr(BW)+\Tr(BW^\top W))\lambda+\left(1-\frac{1}{k}\right)\Tr(BW^\top W)\lambda^2.
    \end{split}
    \end{equation}
     Thus, assuming $\E[Y_i(\lambda)^\top Y_j(\lambda)]>0$ $\forall i,j\in[T]$, we have that \emph{(i)} if $\beta\Tr(BW)> 0$, $\hat\rho(Y(\lambda))$ is strictly increasing in  $\lambda$; and \emph{(ii)} if $\beta\Tr(BW)+\Tr(BW^\top W)<0$, $\hat\rho(Y(\lambda))$ is strictly decreasing.
\end{restatable}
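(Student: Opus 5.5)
The plan is to compute both second moments in closed form by writing each output column as a matrix-weighted combination of the normalized tokens, and then read the monotonicity off the signs of the $\lambda$-derivatives.

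\textbf{Setting up the moments.} Let $M:=(1-\lambda)I+\lambda A_u$, so that $Y=\beta Z+WZM^\top$ and
\[
Y_i=\sum_{a}\bigl(\beta\delta_{ia}I+M_{ia}W\bigr)Z_a .
\]
From $Z_a=\bar z+\epsilon_a$ and the covariance assumptions we get $\E[Z_bZ_a^\top]=C+\delta_{ab}B$. Hence, for any matrices $G_a,H_b$,
\[
\E\bigl[(G_aZ_a)^\top H_bZ_b\bigr]=\Tr\bigl(G_a^\top H_b(C+\delta_{ab}B)\bigr).
\]
This reduces $\E[Y_i^\top Y_j]$ to a double sum of traces, which I split into a $C$-part (all pairs $a,b$) and a $B$-part (only $a=b$).

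\textbf{The $C$-part.} Every row of $M$ sums to $1$, so the $C$-part collapses to $\Tr\bigl((\beta I+W)^\top(\beta I+W)C\bigr)$. Since $C$ is symmetric, $\Tr(CW^\top)=\Tr(CW)$, and this equals $\beta^2\Tr(C)+2\beta\Tr(CW)+\Tr(CW^\top W)$. The same expression appears in both formulas.

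\textbf{The $B$-part of the inner product, $i<j$.} There are three contributions.
\begin{itemize}
\item The term $\delta_{ia}\delta_{ja}$ vanishes because $i\neq j$.
\item The cross terms give $\beta M_{ji}\Tr(BW)=\frac{\lambda}{j}\beta\Tr(BW)$ plus $\beta M_{ij}(\cdot)$, and the latter is $0$ by causality.
\item The term $\sum_a M_{ia}M_{ja}\Tr(BW^\top W)$ equals $\frac{\lambda}{j}\Tr(BW^\top W)$, because $M_{ja}=\lambda/j$ on the support $a\le i$ of row $i$, whose weights sum to $1$.
\end{itemize}

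\textbf{The $B$-part of the squared norm.} It is $\beta^2\Tr(B)+2\beta M_{kk}\Tr(BW)+\bigl(\sum_a M_{ka}^2\bigr)\Tr(BW^\top W)$. Here $M_{kk}=1-\lambda(1-\frac1k)$, and
\[
\sum_a M_{ka}^2=\Bigl(1-\lambda\bigl(1-\tfrac1k\bigr)\Bigr)^2+\frac{(k-1)\lambda^2}{k^2}=1-2\lambda\bigl(1-\tfrac1k\bigr)+\lambda^2\bigl(1-\tfrac1k\bigr).
\]
Rearranging gives exactly the stated formula for $\E[\norm{Y_k(\lambda)}^2]$.

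\textbf{Monotonicity.} Write $b:=\beta\Tr(BW)+\Tr(BW^\top W)$ and $s:=\Tr(BW^\top W)$. A key observation is that $B=\tfrac12\E[(\epsilon_i-\epsilon_j)(\epsilon_i-\epsilon_j)^\top]\succeq0$, so $s=\Tr(WBW^\top)\ge0$.

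The numerator is $N_{ij}(\lambda)=N_{ij}(0)+\lambda b/j$. The denominator factors $D_k(\lambda)=\E\norm{Y_k}^2$ satisfy
\[
D_k'(\lambda)=2\bigl(1-\tfrac1k\bigr)(s\lambda-b).
\]
The assumption $N_{ij}>0$ together with Cauchy--Schwarz gives $D_k>0$, so each $\hat\rho_{ij}=N_{ij}/\sqrt{D_iD_j}$ is differentiable with positive numerator.

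In case (ii), $b<0$, so $N_{ij}$ is strictly decreasing. Also $s\lambda-b>0$, so each $D_k$ is nondecreasing. Hence every $\hat\rho_{ij}$ is strictly decreasing.

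In case (i), $\beta\Tr(BW)>0$ gives $b\ge\beta\Tr(BW)>0$, so $N_{ij}$ is strictly increasing (note $j\ge2$). Moreover
\[
s\lambda-b=-\beta\Tr(BW)-s(1-\lambda)<0,
\]
so each $D_k$ is nonincreasing, and $\hat\rho_{ij}$ is strictly increasing. Averaging over pairs $i<j$ preserves strict monotonicity.

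The main obstacle is the bookkeeping in the $B$-part. One has to track the causal structure of $M$ carefully: the asymmetry $M_{ij}=0$ versus $M_{ji}=\lambda/j$, and the overlap sum $\sum_a M_{ia}M_{ja}$. The other point that needs care is recognizing $B\succeq0$, which is what makes the denominator's monotonicity align with the numerator's in both cases.
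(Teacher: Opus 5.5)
Your proposal is correct and follows essentially the same route as the paper. Both use the identity $\E[Z_bZ_a^\top]=C+\delta_{ab}B$, the row-sum-one collapse of the $C$-part, the overlap sums $\sum_a M_{ia}M_{ja}$, and a sign analysis of $D_k'(\lambda)=2(1-\frac1k)(s\lambda-b)$ based on $B\succeq 0$. Two of your details are slightly more careful than the paper's wording: you describe $D_k$ as non-increasing or non-decreasing rather than strictly monotone (it is constant when $k=1$), and you justify $D_k>0$ explicitly via Cauchy--Schwarz.
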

This result shows that denser attention does not, by itself, imply stronger oversmoothing, and the behavior depends on 
$W$. 
Compared to \cite{karagodin2024clustering} which proves a clustering result for $W=I$, we show that 
$W$ can 
counteract the mixing effect of attention. Theorem \ref{thm:incr-lambda-skip} also sheds light on the role of skip connections in mitigating oversmoothing, which has been discussed in the literature \cite{dong2021attention,dovonon2025setting,noci2022signal}. 
This effect depends jointly on the skip-connection strength, the token distribution, and the transformation $W$: in the absence of skip connections ($\beta=0$), condition (ii) (i.e., $\beta\Tr(BW)+\Tr(BW^\top W)<0$ implying $\hat\rho(Y(\lambda))$  decreasing) never holds; furthermore, when $\Tr(BW)<0$, strengthening the skip connection decreases $\beta\Tr(BW)+\Tr(BW^\top W)$, which favors $\hat\rho(Y(\lambda))$ to be decreasing. A further interpretation of these conditions is in Appendix~\ref{app:pf:thm:incr-lambda-skip} (Remark~\ref{rem:trace-conditions-theorem}). 
We note that the requirement $\E[Y_i(\lambda)^\top Y_j(\lambda)]>0$ is not restrictive, since token representations typically have positive cosine similarity across layers. While generalizations to 
position-dependent scaling ($X_i=\beta_i Z_i$) and 
general covariances ($\E[\epsilon_j\epsilon_i^\top]=\Sigma_{ij}$) are possible, 
the experiments discussed below show that the approximation obtained under the current assumptions is already very close to the empirical quantity. 

\textbf{Experimental analysis.} 
We empirically 
assess whether the approximation in \eqref{eq:approx} 
matches the empirical average cosine similarity. Then, we study whether there are weight matrices $W$ that, even under uniform attention, counteract the mixing induced by attention, as part \textit{ii} of Theorem~\ref{thm:incr-lambda-skip} suggests. 
We consider 
four models (LLaMA3-8B, Gemma-7B, GPT2-XL, Mistral-7B), average across heads and over a batch of 50 sequences. We report results for the C4 dataset 
\cite{raffel2020exploring}, and show in Figure~\ref{fig:avgcossim-approx-datasets} in Appendix~\ref{appx:exp-oversmoothing} that the same conclusions hold for other datasets. 
For each attention head, $W$ is the composition of learnable normalization scaling, value matrix, and output projection matrix. 
    
\begin{figure}[]
    \centering
    \begin{subfigure}[b]{0.49\textwidth}
        \centering
        \includegraphics[width=\textwidth]{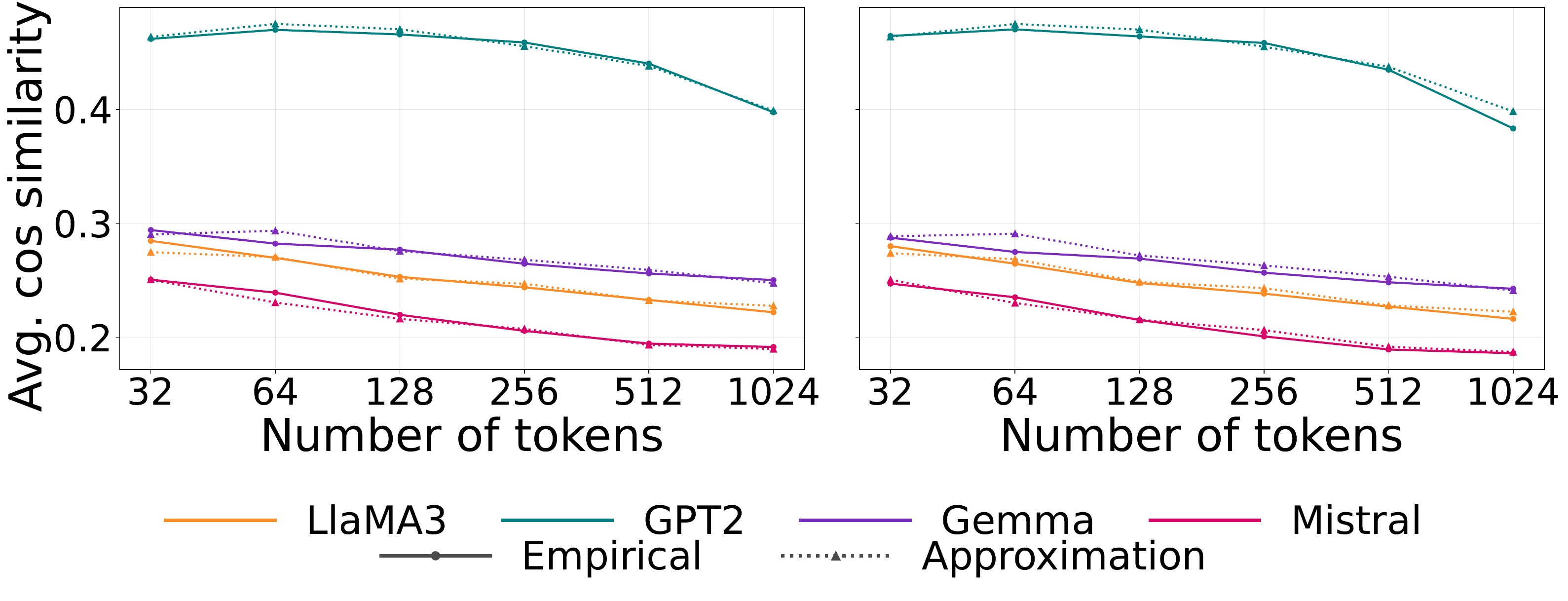}
    \end{subfigure}
    \hfill 
    \begin{subfigure}[b]{0.49\textwidth}
        \centering
        \includegraphics[width=\textwidth]{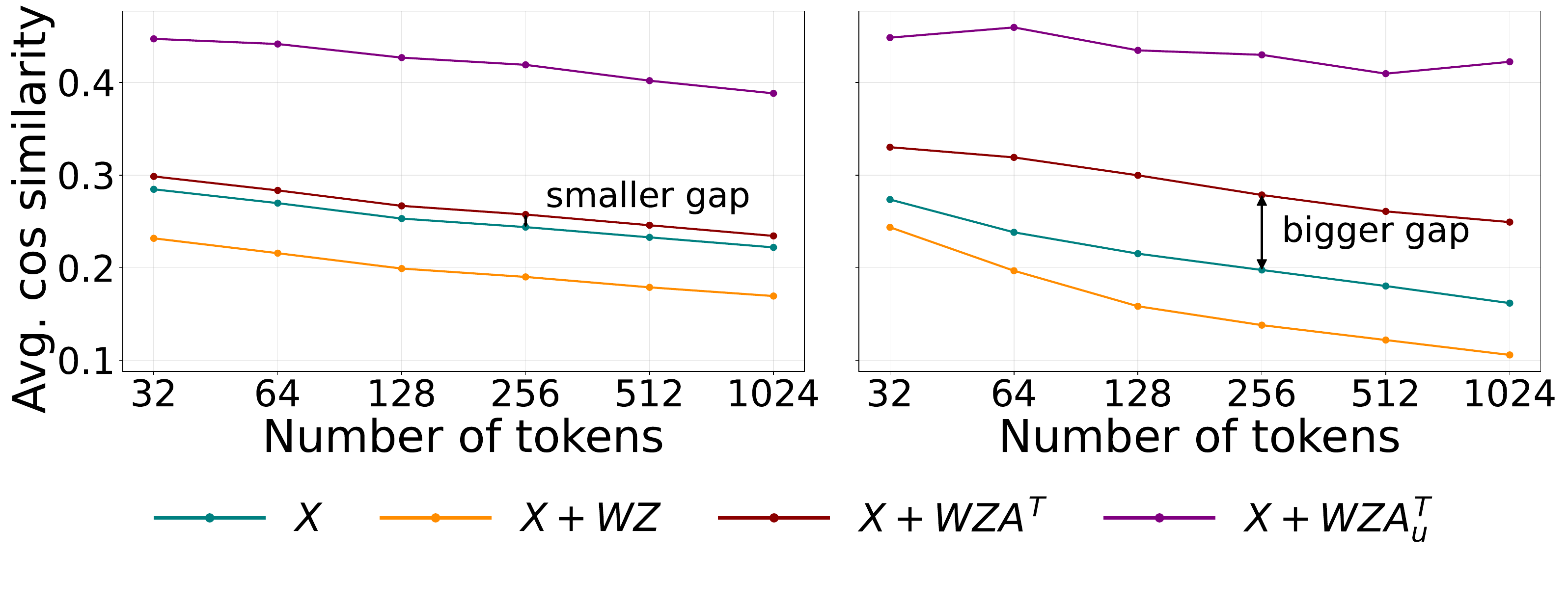}
    \end{subfigure}

    \caption{\textbf{Left:} empirical average cosine similarity (solid) and theoretical approximation \eqref{eq:approx} (dashed) for $X$ (first plot) and $X+WZ$ (second plot), across models and context lengths. \textbf{Right:} empirical average cosine similarity in LLaMA3-8B for different components of the attention step, averaging over all heads (first plot) and over heads with uniformity coefficient (see \eqref{eq:unifcoeff} in Appendix~\ref{appx:exp-oversmoothing}) larger than 0.6 (second plot). Cosine similarity generally increases as the attention matrix becomes denser.}
    \label{fig:avgcossim-theoryvsempirical}
    \vspace{-1em}
\end{figure}

The left two plots of Figure~\ref{fig:avgcossim-theoryvsempirical} show an excellent agreement between empirical average cosine similarity 
and the  approximation \eqref{eq:approx} 
. This holds both for $X$ and $X+WZ$, 
across several models 
and context lengths. 
In fact, the approximation remains accurate not only after averaging across heads, but also for most individual attention heads, see Figure \ref{fig:cossim-approx-ind-heads} in Appendix \ref{appx:exp-oversmoothing}. 

The right two plots of Figure~\ref{fig:avgcossim-theoryvsempirical} show the 
empirical average cosine similarity of token representations in LLaMA3-8B 
for different attention components. 
The attention contribution is rescaled to match the magnitude of the original token matrix, since in practice contributions of different heads are summed within each layer (see \eqref{eq:normATTN} in Appendix~\ref{appx:exp-oversmoothing}).
Comparing curves corresponding to $X$, $X+WZ$ and $X+WZA^\top$, we note that the application of $W$ decreases the average cosine similarity, which is then increased by the attention matrix. On average, the effect of the attention matrix is slightly stronger than that of $W$, i.e., the cosine similarity for $X+WZA^\top$ is larger than that for $X$. Comparing third and fourth plots, we note that the effect of the attention matrix is more pronounced (i.e., the gap between the curves for $X+WZA^\top$ and $X$ is bigger), when averaging only over heads closer to being uniform. Finally, the curve for $X+WZA_u^\top$ is obtained by replacing the attention matrix with a uniform one, which leads to a significant increase in cosine similarity. Figure~\ref{fig:avgcossim-empirical-models-datasets} in Appendix~\ref{appx:exp-oversmoothing} shows that this behavior is consistent across models and datasets, and Figure~\ref{fig:unif-vs-relChange} provides a finer head-wise view of this phenomenon.  
Figures~\ref{fig:trace-cond-theorem}-\ref{fig:trace-cond-theorem-2} 
show that \emph{(a)} the monotonicity of $\hat\rho(Y(\lambda))$ predicted by Theorem~\ref{thm:incr-lambda-skip} replicates well in practice, and that \emph{(b)} 
roughly half of the heads 
satisfy condition (i) 
(which leads to $\hat\rho(Y(\lambda))$ increasing), whereas 
few heads satisfy condition (ii) (which leads to  $\hat\rho(Y(\lambda))$ decreasing). This suggests that 
oversmoothing increases as attention becomes more uniform. 

\textbf{Summary.} Sparse attention is not the only way to prevent oversmoothing: skip connections and value-output transformations can counteract the effect of attention. In trained transformers, however, this rarely happens, and denser attention typically increases oversmoothing.



\vspace{-.5em}

\section{Equivalence between sink and hard attention switch
}\label{sec:sink_switch}

\vspace{-.5em}

In this section, we prove that in globally optimal transformers, 
the sink is the \textit{only} possible mechanism for hard attention switch. We generalize the input-output distribution of \cite{guo2024active}: for each context position $t$, let the set of possible input context embeddings $x_t$ be $\gS_t=\gD_t \cup \gC_t$, where embeddings in $\gD_t$ are dormant (i.e., the output of the head on them is 0) and embeddings in $\gC_t$ are copy-paste (i.e., their output on the head is any past context embedding $x_{s}$ except BOS). We assume $W_{VO}x_0=0$ as in \cite{guo2024active} which also supports it empirically 
(see Figure 1(b) therein). 
%
%
While \cite{guo2024active} claims that sink is \textit{the} solution for hard attention switch, there could, in theory, be other solutions that also implement it. In particular, a context embedding that has zero update in the attention layer could: \emph{(i)} attend to a convex combination of past context embeddings that sums up to zero, or \emph{(ii)} attend to itself while having zero value embedding. 
In the result below, we identify 
geometric conditions 
guaranteeing that neither of these two options is possible and, therefore, the only solution to switch off the head is to use the sink. 



\begin{restatable}{prop}{attnsinkattnswitchcppastegeneric}
\label{thm:sink_switch_cppaste_generic}
Consider the input-output distribution above. Assume that, for all input embedding sequences, 
either the sequence is full column-rank or it 
lies in a strict half-space. Assume also that the span of all copy-paste embeddings equals the span of all input embeddings. Then, in order to realize the hard attention switch, dormant tokens have to attend fully to BOS.
\end{restatable}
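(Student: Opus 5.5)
We consider a dormant token at position $t$ with embedding $z_t\in\gD_t$. Its head output is $W_{VO}\sum_{s=0}^t A_{ts} z_s = \sum_{s=1}^t A_{ts} W_{VO} z_s$, using $W_{VO}z_0=0$. We must show that requiring this to vanish for every admissible sequence forces $A_{ts}=0$ for all $s\ge 1$, i.e.\ $A_{t0}=1$. The argument has two stages. First we pin down $W_{VO}$ using the copy-paste constraint. Then we use the geometric assumption on the sequence to rule out the two alternatives (i) and (ii) listed before the statement.

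\textbf{Step 1: $W_{VO}$ is injective on the span of input embeddings.} A copy-paste token $z\in\gC_t$ must output some past context embedding $x_s\neq 0$ (with $s\ge 1$). Hence $W_{VO}$ applied to the corresponding convex combination of past embeddings is nonzero and equals an input embedding. Our plan is to use the assumption $\operatorname{span}(\text{copy-paste embeddings})=\operatorname{span}(\text{all inputs})=:V$ to conclude that $W_{VO}$ restricted to $V$ is injective. The simplest route is the following.
\begin{itemize}
\item The outputs $W_{VO}(\cdot)$ over all copy-paste situations range over all input embeddings, so $W_{VO}(V)\supseteq V$.
\item By dimension count, $W_{VO}|_V:V\to V$ is then surjective, hence bijective, so $\ker W_{VO}\cap V=\{0\}$.
\end{itemize}
This is the step I expect to be most delicate. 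One has to check carefully that the images of $W_{VO}$ over copy-paste tokens really span $V$; this is where the span assumption enters. One also has to handle the possibility that only convex combinations, not individual embeddings, are mapped. If direct surjectivity is awkward, I would instead argue that $W_{VO}u=x$ for some $u\in V$ and every embedding $x$, which already gives $\dim W_{VO}(V)\ge\dim V$.

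\textbf{Step 2: reduce to a linear-algebra statement on the sequence.} By injectivity on $V$, the dormant condition becomes $\sum_{s=1}^t A_{ts} z_s=0$ with $A_{ts}\ge 0$. This is exactly the setting in which option (i) (a zero convex-type combination) or option (ii) (attending to itself with $W_{VO}z_t=0$) could occur. Option (ii) is excluded at once, since $W_{VO}z_t\neq0$ for $z_t\neq 0$. For option (i) we split into the two cases of the assumption on the sequence $z_1,\dots,z_t$.
\begin{itemize}
\item \textbf{Full column rank.} The vectors $z_1,\dots,z_t$ are linearly independent, so all coefficients $A_{ts}$, $s\ge1$, vanish.
\item \textbf{Strict half-space.} There is $u$ with $u^\top z_s>0$ for all $s$. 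Pairing with $u$ gives $\sum_s A_{ts}u^\top z_s=0$ with nonnegative weights and positive terms, so again every $A_{ts}=0$.
\end{itemize}
In both cases the whole attention mass of the dormant token sits on BOS.

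\textbf{Step 3: clean-up.} Finally, we note that the argument applies to every dormant token in every sequence, which proves the claim. We also make the softmax caveat explicit: exact zeros in $A_{ts}$ are only approached in the limit of unbounded scores, so the statement is understood for the realization as a limit or supremum. I would phrase the result as $A_{t0}\to1$ if needed.
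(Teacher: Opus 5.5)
Your proposal is correct and follows essentially the paper's route: the span (rank) equality forces $W_{VO}$ to be injective on the span of the input embeddings, and then full column rank or the strict half-space rules out a vanishing nonnegative combination of non-BOS embeddings, so all attention mass must sit on BOS. Your half-space case is slightly cleaner than the paper's: you pull back through injectivity to get $\sum_{s\ge 1}A_{ts}z_s=0$ before invoking the half-space, whereas the paper asserts directly that the images $W_{VO}a_j$ inherit the half-space property, which itself needs injectivity to be justified.
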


This result reveals 
why the hard attention switch has to be implemented via sinks: tokens cannot mix up value embeddings of past tokens or themselves to zero, because their own context embeddings might be used in the future. 
Sink is then the only option if either \emph{(i)} input embeddings lie in a strict half-space, or \emph{(ii)} both input embeddings and the value matrix on the span of input embeddings are full rank. Remarkably, both these conditions empirically hold in pretrained transformers: the first is verified in Figure \ref{fig:bos_cos}, and the second in Figure \ref{fig:value_ranks} in Appendix \ref{appx:exp-value_ranks}. 
We also note that a sufficient condition for the weight matrix to be full rank under the statement's conditions is that all embedding dimensions are useful (i.e., copy-pasted) for some future context tokens. Finally, while Proposition \ref{thm:sink_switch_cppaste_generic} shows that the hard attention switch must be implemented via the sink, the converse statement is obvious: in optimal transformers, if the head exhibits an attention sink (with zero BOS value embedding), then it must implement the attention switch. In fact, the only alternative is that all output embeddings of the head are zero, which contradicts the optimality of the transformer.
 
Proposition \ref{thm:sink_switch_cppaste_generic} follows from the more general Theorem \ref{thm:sink_switch_generic}  which does not require a precise copy-paste pattern, and it  directly implies that the sink is the only solution for the distribution of \cite{guo2024active}, see Corollary \ref{thm:copypaste_sink_concrete}. Both Theorem \ref{thm:sink_switch_generic} and Corollary \ref{thm:copypaste_sink_concrete} are in Appendix \ref{appx:proofs6}. There, we discuss the expected sample complexity due to the span equality condition (Remark \ref{rem:dataset_size}), and we show that heads implementing the behavior predicted by our theory are ubiquitous in pretrained LLMs (Remark \ref{rmk:exp}).

\textbf{Summary.}  Sinks are the only mechanism that can implement the hard attention switch under mild geometric assumptions (which empirically hold in pretrained transformers) and for a rather general input-output distribution (which includes the ones from  \cite{guo2024active, ranmilo2026necessary}).


\vspace{-.5em}

\section{Sink vs.\ diagonal patterns for soft attention switch
}
\label{sec:sink_diag}

\vspace{-.5em}


Section~\ref{sec:sink_switch} proves that sinks are necessary for \textit{hard} attention switch. 
However, Section~\ref{sec:oversmoothing} only gives evidence that \textit{dense} attention patterns (and, hence, significant inter-token communication) lead to oversmoothing. This leaves a gap between controlling oversmoothing and the current notion of attention switch. Note that oversmoothing would be prevented if the transformer implemented a ResNet by setting attention weights on the diagonal to $1$ and the rest to $0$. This diagonal pattern can be interpreted as an attention switch-off, since there is no inter-token communication.

Similarly to sinks, diagonal attention is common in pretrained LLMs. Across the sequences and datasets considered in Figure \ref{fig:bos_cos}, we find that $6\%, 0.6\%$ and $9.5\%$ of the  heads of gpt2-medium, Llama 3.1-8B and Gemma 7B have $\ge 40\%$ of the attention mass on the main diagonal (in Llama the number represents $>10\%$ of all non-sink heads). Thus, we introduce a notion of \emph{soft} attention switch allowing for both sink and diagonal attention, and we study the trade-off between these two patterns. 
The first aspect in this trade-off is the 
$\ell_2$ norm of the matrix  $W_{QK}\equiv W_Q^T W_K$ representing either sink or diagonal attention. For simplicity, 
assume the sequence is the identity 
$I_{T+1}$. To represent the sink, it suffices to set $W_{QK}=\kappa \mathbf{1}_{T+1}e_1^T$, with $e_1$ the first element of the canonical basis and $\kappa$ large enough to make attention weights approximately $1$ after the softmax. 
This matrix has a single non-zero singular value given by $\kappa \sqrt{T+1}$. To represent the diagonal attention, one needs to set $W_{QK}=\kappa I_{T+1}$, which has $T+1$ singular values equal to $\kappa$. As the regularization cost of $W_{QK}$ is its nuclear norm \cite{kobayashi2024weight}, the costs of sink and diagonal attention are respectively $\kappa \sqrt{T+1}$ and $\kappa (T+1)$. Hence, the sink is cheaper due to its low-rank structure. 


The second aspect of the trade-off is the re-use of resources in fully-connected layers. If the attention pattern is diagonal, the switched-off tokens receive a non-trivial update in the residual stream, which 
is missing in the sink case. There are then two options. \emph{(i)} The update is \textit{useful} for tokens in the following layers. Here the sink, which ignores the update, compensates via the MLP and incurs an extra cost. 
Hence, there is a non-trivial trade-off between the two patterns, depending on the token geometry and the role of the head. \emph{(ii)} The update is \emph{not useful} as it is inconsistent with messages sent to the following tokens. Here the diagonal attention is not desirable even when it is cheaper, as it harms the next-token prediction capabilities. Hence, there is a clear preference towards the sink. 


In the following, we focus on the case of \emph{useful} updates where there is no clear preference towards the sink. We
%
analyze 
two settings with different 
generality and compare the costs of a single-head attention block that represents the same task via either the diagonal or the sink attention patterns. 


\textbf{Large context lengths favor sinks for the backcopy task.}
We adapt the bigram-backcopy task from \cite{guo2024active} to better suit a single-block optimality analysis (rather than an end-to-end result for transformers with a single layer) -- keeping the backcopy part of the distribution which remains well-defined, while dropping the irrelevant bigram part. More precisely, 
let the input embedding be $x_{ti}=\sqrt{d/2}(h_i+p_t),$ where $h_i$ is the $i$-th context embedding and $p_t$ the positional embedding. All $h_i$'s and $p_t$'s are assumed to be orthonormal. 
The context embeddings are split into two groups: the dormant set $\gD$ and the copy-paste set $\gC$ -- plus the BOS, always indexed by $0.$ The output of the block on the BOS token $x_{00}$ equals $x_{00}$ itself; the output for a dormant embedding $x_{tc}$ equals  $2x_{tc}$; and the output for a copy-paste embedding $x_{tc}$ equals  $x_{tc}+x_{t-1,i}$, where $i$ is the index of the preceding context. We say that the block implements a sink if the dormant embeddings attend to the BOS in the attention layer. We say it implements a diagonal attention if the dormant embeddings attend to themselves. As above, let $\kappa$ be large enough so that, if the dominant attention logit 
is larger than all other logits by at least $\kappa$, we can treat the dominant weight after softmax as exactly $1$. 
Note that heads of this type occur in pretrained transformers, as displayed in Figure~\ref{fig:sink_diag_attention_maps} in Appendix \ref{app:patterns}.

\begin{restatable}{theorem}{sinkdiagcppaste}
\label{thm:sink_vs_diag_positional}
Let a single-head transformer block represent exactly the input-output distribution above. Let $c_1$ be such that $c_1 \ge d/(T+|\gC|+|\gD|+2).$
We have that, if
\begin{equation}\label{eq:sink_diag_cppaste_comparison}
    \frac{2\kappa}{\sqrt d}\left(\sqrt{2|\gD|}+T-1\right)
    +3T+3|\gD|+|\gC|
    +4c_1(T+|\gC|+|\gD|+2)|\gC|-2
    <
    \frac{\kappa}{\sqrt{3d}}\min\{|\gC|, |\gD|\}^{1/2}T^{3/2},
\end{equation}
then it is cheaper to represent this mapping with a sink rather than a diagonal attention.
\end{restatable}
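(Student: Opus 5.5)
The plan is to compare an explicit \emph{upper} bound on the $\ell_2$ cost of a sink construction with a \emph{lower} bound on the cost of any parameter set implementing the mapping through diagonal attention. The left-hand side of \eqref{eq:sink_diag_cppaste_comparison} will be the sink upper bound and the right-hand side the diagonal lower bound.

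\textbf{Sink upper bound.} I would write down the parameters explicitly. Because the $h_i,p_t$ are orthonormal and $\RMS$ maps $x_{ti}$ to $h_i+p_t$ up to the fixed $\sqrt{d/2}$ scaling, the construction can be done in this basis.
\begin{itemize}
\item \textbf{Query--key.} Choose $W_{QK}$ so that dormant contexts $h_c$, $c\in\gD$, put logit $\kappa$ on the BOS direction. This is a rank-one term $\kappa(\sum_{c\in\gD}h_c)h_0^\top$ with nuclear norm $\kappa\sqrt{|\gD|}$, up to the $\sqrt{d}$ factors, giving the $\sqrt{2|\gD|}$ term. Copy-paste positions $p_t$ should attend to $p_{t-1}$, via a shift $\kappa\sum_t p_{t-1}p_t^\top$ of nuclear norm $\kappa(T-1)$. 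After splitting $W_{QK}=W_Q^\top W_K$ in the balanced way, the Frobenius cost is twice the nuclear norm divided by the $\sqrt d$ normalisation, which yields $\frac{2\kappa}{\sqrt d}(\sqrt{2|\gD|}+T-1)$.
\item \textbf{Value--output.} $W_{VO}$ should copy the previous token into the copy-paste positions, should kill BOS ($W_{VO}x_0=0$), and should have cost linear in $T+|\gC|$.
\item \textbf{MLP.} Dormant tokens need the update $x_{tc}$ (output $2x_{tc}$), which the sink does not provide. I would supply it with a ReLU MLP gated on $h_c$, $c\in\gD$, with $\ell_2$ cost linear in $T+|\gD|$. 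I would also need an MLP correction term for the copy-paste tokens, which is where the $4c_1(T+|\gC|+|\gD|+2)|\gC|$ term enters, with $c_1$ absorbing the RMS rescaling of the residual stream.
\end{itemize}
Summing these pieces should reproduce the left-hand side; the routine part is just adding up the Frobenius norms.

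\textbf{Diagonal lower bound.} Here the argument must hold for \emph{any} parameters, so it has to come from the attention constraints alone; I would drop the MLP and $W_{VO}$ costs, which are nonnegative. Diagonal attention forces, for each dormant $x_{tc}$, the logit $x_{tc}^\top W_{QK}x_{tc}$ to exceed $x_{tc}^\top W_{QK}x_{sj}$ by $\kappa$ for every earlier $s$ and token $j$. Copy-paste tokens must prefer $x_{t-1,i}$ over themselves and over all other previous tokens. In the coordinates $(h,p)$, these constraints force the $T\times T$ positional block of $W_{QK}$ to carry, simultaneously, a diagonal-dominant structure (from dormant tokens) and a sub-diagonal-dominant structure (from copy-paste tokens), with cross terms involving the context blocks.

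The key step is to pick dual certificates: a matrix $M$ with $\norm{M}_{op}\le 1$ such that $\langle M,W_{QK}\rangle\ge$ (a sum of the margin constraints) $\gtrsim\kappa\min\{|\gC|,|\gD|\}^{1/2}T^{3/2}/\sqrt3$, using nuclear norm $\ge\langle M,W_{QK}\rangle$. I would try $M$ as a normalised signed sum over pairs $(t,s)$, $s<t$, of rank-one terms $(h_c+p_t)(h_c+p_t-h_j-p_s)^\top$, pairing dormant and copy-paste contexts to cancel the context-only contributions. Summing over about $T^2/2$ position pairs gives $\Theta(\kappa T^2)$ against an operator norm of order $\sqrt{T/\min\{|\gC|,|\gD|\}}$, matching the $T^{3/2}$ scaling and the $\sqrt3$ from a three-term normalisation.

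I expect the main obstacle to be bounding the operator norm of this certificate. A plain triangle inequality loses powers of $T$, so I would try a Cauchy--Schwarz/Schur-type bound, treating it as a lower-triangular all-ones structure in position space tensored with a balanced context part. The argument also assumes that the weight-decay cost of the pair $(W_Q,W_K)$ is at least twice $\norm{W_{QK}}_*$; this holds by the standard factorization inequality. Since the sink's cost is at most the left-hand side and the diagonal's cost is at least the right-hand side, \eqref{eq:sink_diag_cppaste_comparison} implies the sink is cheaper.
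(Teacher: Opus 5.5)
Your overall structure matches the paper's. The left-hand side is an explicit sink construction, and the right-hand side is a lower bound on $\norm{W_K^\top W_Q}_*$ obtained from the attention margins alone, dropping the nonnegative $W_{VO}$ and MLP costs. However, the diagonal lower bound, which carries the entire $T^{3/2}$ scaling, has a genuine gap: the certificate you sketch cannot deliver it. Suppose the dormant constraints for all $\Theta(T^2)$ pairs $s<t$ enter with unit weight. Then the position--position block of $\sum_k y_kA_k$ has an entry $\le -\frac d2$ at every $(t,s)$ with $s<t-1$. The only constraints touching that entry are dormant ``self over $s$'' and copy-paste ``$t-1$ over $s$'', and both contribute negatively, so nothing with a nonnegative weight can cancel it. (A genuinely \emph{signed} combination of margin inequalities is not a valid certificate at all.) Testing this block with the normalized indicators of $\{t>T/2\}$ and $\{s\le T/4\}$ gives operator norm $\Omega(dT)$, against an objective of order $\kappa\sqrt d\,T^2$. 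This is simply the $\approx 2T/\pi$ norm of the lower-triangular all-ones matrix, not a loss from the triangle inequality, so no Schur or Cauchy--Schwarz refinement helps. The resulting bound is $O(\kappa T/\sqrt d)$, the same order as the sink's own query--key cost, so it proves nothing.

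The missing idea is to use only \emph{adjacent} positions and to pair constraints so that the positional (not the context-only) terms cancel.
\begin{itemize}
\item A copy-paste query $x_{ta}$ whose predecessor is $x_{t-1,a}$ must prefer the key at $t-1$ over itself.
\item A dormant query $x_{td_0}$ whose predecessor is $x_{t-1,d_0}$ must prefer itself over $t-1$.
\end{itemize}
Add the two and substitute $x_{ti}=\sqrt{d/2}(h_i+p_t)$. Every position--position and context-only term cancels, leaving $(p_t-p_{t-1})^\top W_K^\top W_Q(h_{d_0}-h_a)\ge 4\kappa/\sqrt d$. Hence $g(t)=p_t^\top W_K^\top W_Q(h_{d_0}-h_a)$ increases by at least $4\kappa/\sqrt d$ per step. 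On the longer side of its sign change, $|g|$ therefore grows at least like $4\kappa i/\sqrt d$ over $i\le T/2$ positions.

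Now take $\min\{|\gC|,|\gD|\}$ disjoint pairs, so that the vectors $(h_{d_0}-h_a)/\sqrt2$ are orthonormal, and use the orthonormality of the $p_t$. You get $\norm{W_K^\top W_Q}_*\ge\bigl(\min\{|\gC|,|\gD|\}\,\frac{8\kappa^2}{d}\sum_{i\le T/2}i^2\bigr)^{1/2}\approx\frac{\kappa}{\sqrt{3d}}\min\{|\gC|,|\gD|\}^{1/2}T^{3/2}$, where the first inequality passes through the Frobenius norm. The $\sqrt3$ comes from $\sum_{i\le T/2}i^2\approx T^3/24$. In dual terms, this is a certificate supported only on the adjacent-pair constraints (e.g.\ with weights $y_t=t(T-t)$), not a uniform sum over all pairs.

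On the sink side there is a smaller issue. A purely positional shift that gives the predecessor logit $\kappa$ also acts on dormant queries, so with BOS logit $\kappa$ you get a tie. The paper's BOS term $\frac{2\kappa}{\sqrt d}(h_0+p_0)\sum_{k\in\gD}h_k^\top$ gives logit $2\kappa$. That margin, together with the $\sqrt{d/2}$ scaling, is the source of the factor $2$ in the left-hand side; it does not come from the balanced factorization, since the paper counts $\norm{W_{QK}}_*$ once on both sides. Finally, $c_1$ only converts the $4d|\gC|$ Frobenius cost of the $-2\mathbf 1\sum_{i\in\gC}h_i^\top$ term in $W_1$, which kills copy-paste tokens at the ReLU, via $d\le c_1(T+|\gC|+|\gD|+2)$.
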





\begin{figure}
\centering
\includegraphics[width=0.8\textwidth]{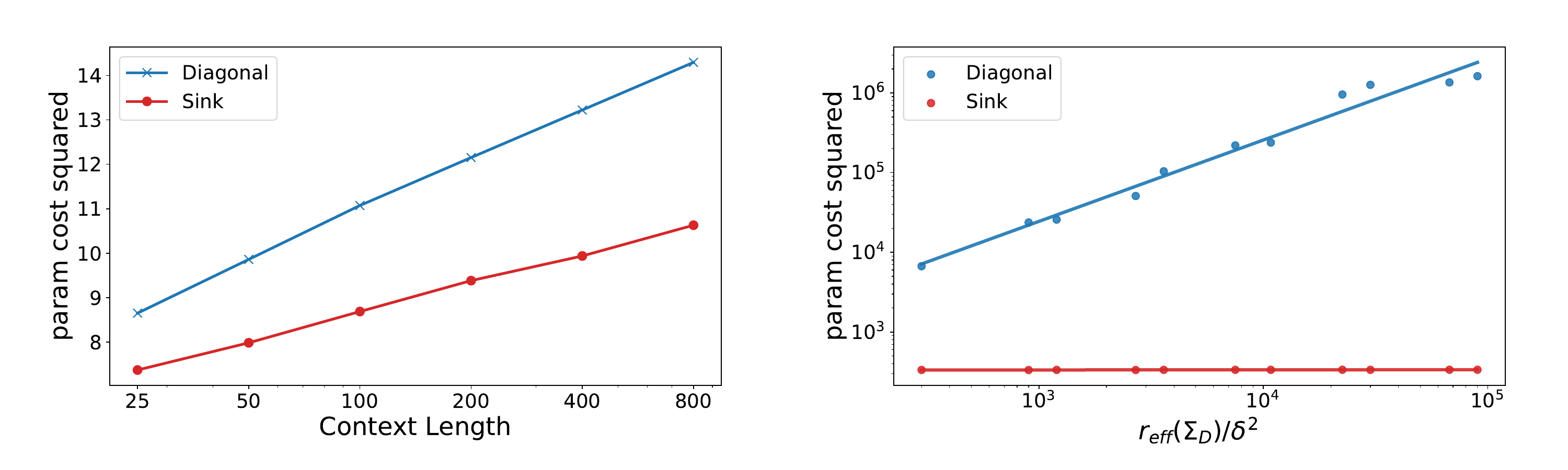}
\vspace{-.5em}
\caption{Frobenius squared cost of a single transformer block if the attention pattern is either sink or diagonal. \textbf{Left:} Backcopy task of Theorem \ref{thm:sink_vs_diag_positional} and cost as a function of the context length. Results are averaged over 3 independent seeds, and the variance of the results is negligible. \textbf{Right:} Generic task of Theorem \ref{thm:sink_diag_generic} and cost as a function of $r_{\text{eff}}(\Sigma_D)/\delta^2.$}
\label{fig:sink_diag_cppaste}
\vspace{-1em}
\end{figure}

This result (proved in Appendix~\ref{appx:proof7}) provides an analytic condition, in terms of context length $T$ and number of dormant and copy-paste tokens $|\mathcal D|, |\mathcal C|$, for a sink to incur lower cost than a diagonal attention. First, we note that the LHS of \eqref{eq:sink_diag_cppaste_comparison} scales linearly in $T$, while the RHS scales as $T^{3/2}$. Hence, larger context lengths favor sinks, in agreement with \cite{barbero2025llms}. Second, the LHS of \eqref{eq:sink_diag_cppaste_comparison} scales quadratically in $|\mathcal D|, |\mathcal C|$, while the RHS scales as $\min\{|\mathcal C|, |\mathcal D|\}^{1/2}$. Hence, dense copy-paste patterns favor diagonal attention. Finally, a large dimension $d$ reduces the RHS of \eqref{eq:sink_diag_cppaste_comparison} while keeping the LHS of order $c_1(T+|\gC|+|\gD|)|\gC|$ and thus favoring diagonal attention. This can be deduced from the fact that a large $d$ makes the $W_{QK}$ matrix cheaper compared to fully-connected layers (perfectly aligned vectors of length $\sqrt{d}$ have dot-product equal to $d$, while the softmax only divides by $\sqrt{d}$).  

Figure~\ref{fig:sink_diag_cppaste} compares the total cost of a single transformer block exhibiting either diagonal or sink attention patterns. We train on the distribution described above, set $|\gD|=|\gC|=5, d=T+|\gD|+|\gC|$ and use an additional $1/\sqrt{d}$ normalization of the softmax parameters to de-bias the results across context lengths. The plot confirms the dependence on $T$ predicted by Theorem~\ref{thm:sink_vs_diag_positional}: on a logarithmic scale, the sink slope is close to $1$, while the diagonal slope is close to $3/2$.

\textbf{Trade-off 
for a generic input-output mapping.}
We next consider a generalized distribution, leading to a more nuanced trade-off between sink and diagonal attention. Let $C\ge 1$ be the number of copy-paste groups. Let
$\gS$ denote the set of $d$-dimensional vectors $\{b_{\mathrm{BOS}}, \bar d_1,\dots,\bar d_C, c_1,\dots,c_C\}$, where 
$b_{\mathrm{BOS}}$ is the BOS token, $\bar d_1,\dots,\bar d_C$ are the dormant group directions, and $c_1,\dots,c_C$ the copy-paste context embeddings. Vectors in $\mathcal S$ have the same length (i.e., $\|b_{\mathrm{BOS}}\|_2=\|\bar d_c\|_2=\|c_c\|_2=\sqrt d$ for every $c$) and are near-orthogonal (i.e., $|\langle x,y\rangle|\le \phi d$, for any two distinct vectors $x, y$ in $\mathcal{S}$). Each dormant context in group $c$ has the form
$d_{ci}=\lambda \bar d_c+\eta_{ci}$, where $\eta_{ci}\perp \mathcal S$, 
$\|\eta_{ci}\|_2=\delta\sqrt d$ for $\delta\in[0,1)$ and
$\lambda:=\sqrt{1-\delta^2}$. 
For each group $c$, let $\mathcal P_c$ be the set of unordered pairs $\{i,j\}$ s.t.\ both relative orders of $d_{ci}$ and $d_{cj}$ occur in the data. Denote
$\Sigma_D:=\sum_{c=1}^C\sum_{\{i,j\}\in\mathcal P_c}(d_{ci}-d_{cj})(d_{ci}-d_{cj})^\top$, $r_{\mathrm{eff}}(\Sigma_D):=\frac{\operatorname{tr}(\Sigma_D)}{\|\Sigma_D\|_2}$ and $r_\eta$ as the rank of all noise vectors.

The output of the block on the BOS token is itself; 
the output on every dormant context $d_{ci}$ is $(1+\phi_{ci})d_{ci}$, with $\phi_{ci}\in[1-\epsilon, 1+\epsilon]$ and $\epsilon\in[0,1)$ a tolerance parameter; the output on a copy-paste context $c_c$ is $c_c$ if there is no dormant context from group $c$ before it in the sequence, and otherwise
$c_c+\phi_c\frac{1}{|R_c|}\sum_{\tilde c\in R_c} \tilde c$, where $R_c$ is the set of preceding dormant contexts from group $c$ and $\phi_{c}\in[1-\epsilon, 1+\epsilon]$. Note that $\phi_c, \phi_{ci}$ 
might 
depend on the particular sequence and position of the copy-paste context.
We say that the block implements a sink if: \emph{(i)} the BOS embedding attends only to itself; \emph{(ii)} every dormant embedding attends to $b_{\mathrm{BOS}}$; and \emph{(iii)} every copy-paste embedding attends uniformly to the preceding dormant embedding from its own group, if any such embedding is present, and otherwise attends to $b_{\mathrm{BOS}}$. We say that the block implements a diagonal attention if: \emph{(i)} the BOS embedding attends only to itself; \emph{(ii)} every dormant embedding attends to itself; and \emph{(iii)} every copy-paste embedding attends uniformly to the preceding dormant embedding from its own group, if any such embedding is present, and otherwise attends to itself.
Finally, we assume that, 
for every copy-paste token $c_c$ and every dormant key $d_{ki}$, the score depends only on the centroid component of the key: since
$d_{ki}=\lambda\bar d_k+\eta_{ki}$ and $\eta_{ki}\perp \mathcal S$, we write this as $c_c^\top W_{QK}d_{ki}=\lambda\,c_c^\top W_{QK}\bar d_k.$ Thus, the query-key mechanism only tests whether a dormant key has the correct centroid component.

In summary, there are $C$ context collectors that check whether past contexts have the correct semantic meaning (the average direction $\bar d_c$), and then they collect both this meaning as well as the nuanced low-level meaning ($\eta_{ci}$). As an example, tokens that are ends of some words can try to collect the previous tokens of that word by querying the ``high-level'' word membership feature and then copy-pasting the nuanced content of the tokens belonging to the same word. Note that heads of this type occur in pretrained transformers, as displayed in Figure~\ref{fig:sink_diag_attention_maps} in Appendix \ref{app:patterns}.

\begin{restatable}{theorem}{sinkdiaggeneric}\label{thm:sink_diag_generic}
Let $2C\phi\le \frac14, \phi\le \frac1{20}, \delta\le \frac12$ and $\frac1{20}\le \epsilon\le \frac14.$
If the following holds: \begin{equation}\label{eq:sink_diag_generic}
12\,\frac{\kappa C}{\sqrt d}+5(r_\eta+C) < \frac{\kappa}{\delta^2\sqrt d}\,r_{\mathrm{eff}}(\Sigma_D),
\end{equation}
then it is cheaper to represent this mapping 
with a sink rather than a diagonal attention. 
\end{restatable}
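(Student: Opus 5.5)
The argument compares an explicit upper bound on the cheapest sink implementation (the LHS of \eqref{eq:sink_diag_generic}) with a lower bound valid for every diagonal implementation (the RHS). As in Theorem \ref{thm:sink_vs_diag_positional}, the cost is the squared $\ell_2$ regularizer of the block. The $W_{QK}$ part is measured by the nuclear norm of $W_{QK}$, following \cite{kobayashi2024weight}, so that $\norm{W_Q}_F^2+\norm{W_K}_F^2\ge 2\norm{W_{QK}}_*$ with equality attainable. The value-output and MLP parts are measured by their Frobenius costs. The ingredients are therefore a sink construction and a diagonal lower bound.

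\textbf{Sink upper bound.} First I build the sink. Take $W_{QK}$ of rank at most $2C+1$, which is the source of the $12\kappa C/\sqrt d$ term. Concretely, $W_{QK}=\frac{\kappa'}{d}\big(\sum_c c_c\bar d_c^\top+u\,b_{\mathrm{BOS}}^\top\big)$ with a constant-factor margin $\kappa'\approx\kappa\sqrt d\cdot O(1)/d$. Here $u$ is a query vector that gives a BOS logit advantage to $b_{\mathrm{BOS}}$, to every $d_{ci}$, and to every $c_c$, and that advantage is overridden by the group term when a same-group dormant key is present. Near-orthogonality $|\langle x,y\rangle|\le\phi d$ together with $2C\phi\le 1/4$ and $\lambda\ge\sqrt3/2$ (from $\delta\le1/2$) guarantee margins at least $\kappa$ after the $1/\sqrt d$ scaling, at nuclear cost $O(\kappa C/\sqrt d)$. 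The assumption that copy-paste scores depend only on the centroid is consistent with this construction. Next, set $W_{VO}$ to be the projector onto $\mathrm{span}\{\bar d_c\}\cup\mathrm{span}\{\eta\}$, adjusted so that $W_{VO}b_{\mathrm{BOS}}=0$. This gives the copy-paste outputs with $\phi_c\in[1-\epsilon,1+\epsilon]$, and its Frobenius cost is about $r_\eta+C$. The dormant update $d_{ci}\mapsto 2d_{ci}$ is missing in the sink case, so the MLP must supply it. A ReLU pair $w_2\sigma(w_1^\top z)$ per group, plus an identity-like block on the span of $\eta$, costs $O(r_\eta+C)$. The tolerance $\epsilon\ge1/20$ absorbs leakage from $\phi$-overlaps and from the RMS normalization. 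Collecting constants should yield the LHS.

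\textbf{Diagonal lower bound.} This is the key step. For each pair $\{i,j\}\in\mathcal P_c$, both orders occur. When $d_{ci}$ is the query and $d_{cj}$ precedes it, diagonal attention requires
\[
d_{ci}^\top W_{QK}d_{ci}-d_{ci}^\top W_{QK}d_{cj}\ge\kappa\sqrt d.
\]
The symmetric inequality holds with $i$ and $j$ swapped. Adding the two and writing $\Delta=d_{ci}-d_{cj}$ gives
\[
\Delta^\top W_{QK}\Delta\ge 2\kappa\sqrt d
\]
(up to a cross-term that vanishes after symmetrization). Summing over all pairs gives $\langle W_{QK},\Sigma_D\rangle\ge 2\kappa\sqrt d\,|\mathcal P|$. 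Since $\norm{\Delta}^2=2\delta^2 d$, we have $\Tr(\Sigma_D)=2\delta^2 d|\mathcal P|$. Hölder's inequality $\langle W,\Sigma\rangle\le\norm{W}_*\norm{\Sigma}_2$ then gives
\[
\norm{W_{QK}}_*\ge \frac{\kappa}{\delta^2\sqrt d}\,\frac{\Tr(\Sigma_D)}{\norm{\Sigma_D}_2},
\]
which is the RHS. The value and MLP costs in the diagonal case are nonnegative and are simply dropped.

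\textbf{Main obstacle.} The hard part is the sink construction. One must verify simultaneously, under RMS normalization, that the single rank-$(2C+1)$ score matrix gives margin $\kappa$ in every case: copy-paste to same-group dormant versus BOS, copy-paste versus other groups, and dormant versus BOS. One must also check that the outputs of $W_{VO}$ and the MLP stay within the $\epsilon$ tolerance despite $\phi$-cross-talk. I would start by fixing explicit margins using $2C\phi\le1/4$ and then track constants to recover $12$ and $5$.
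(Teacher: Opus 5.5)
Your diagonal lower bound follows the paper's route: add the two orderings, sum to $\langle W_{QK},\Sigma_D\rangle\ge 2\kappa\sqrt d\,N_D$, then apply nuclear/operator duality. The sink upper bound, however, has a genuine gap in how you split the work between $W_{VO}$ and the MLP.

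\textbf{The double-counting.} With $W_{VO}$ the identity on dormant content, a copy-paste token with same-group predecessors leaves attention as $c_c+\mu$, so $\phi_c=1$ at that stage. The MLP must add $\phi_{ci}d_{ci}$ with $\phi_{ci}\ge 1-\epsilon$ to every dormant token, including its token-specific $\eta_{ci}$ part. So the MLP needs an identity-like map on dormant content. It must also send $c_c$ to $0$, because of the no-predecessor copy-paste case. Apply such a map to $\mathrm{RMS}(c_c+\mu)=(c_c+\mu)/R$, where $R=\|c_c+\mu\|_2/\sqrt d\le\sqrt{2+2\phi}<3/2$. It adds at least $(1-\epsilon)/R$ times $\mu$ along $\mu$, since same-group dormant vectors have positive inner products. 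Hence $\phi_c\ge 1+(1-\epsilon)/R>3/2>1+\epsilon$, which violates the copy-paste label. This is an order-one error, not the $\phi$- or RMS-leakage you expect $\epsilon\ge 1/20$ to absorb.

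Rescuing your split would need ReLU gating that switches off the whole $(r_\eta+C)$-dimensional dormant block whenever a $c_c$ component is present. Each gated unit then needs a gate weight large enough to beat its $\sqrt d$-scale preactivation. Compare the negative all-ones bias on copy-paste directions in the construction for Theorem~\ref{thm:sink_vs_diag_positional}, which is one of its most expensive components. Nothing in your sketch bounds this gating cost by $O(r_\eta+C)$, let alone with constant $5$.

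\textbf{How the paper avoids this.} It never gates. Let $T_{\mathrm{base}}$ be the identity on dormant content that kills $b_{\mathrm{BOS}}$ and every $c_c$; it is $P_\eta$ plus an oblique projector onto the centroids. The paper sets $W_{VO}=\tfrac12T_{\mathrm{base}}$ and lets the MLP realize the linear map $(1+\epsilon)T_{\mathrm{base}}$, with ReLUs kept in their positive region by unregularized biases. Dormant tokens then end at $(2+\epsilon)d_{ci}$. Copy-paste tokens get coefficient $\theta_\mu=\tfrac12\bigl(1+(1+\epsilon)/R_\mu\bigr)$, where $R_\mu=\|c_c+\tfrac12\mu\|_2/\sqrt d\in[1,7/6]$. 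This lies in $[1-\epsilon,1+\epsilon]$ precisely because $\epsilon\ge1/20$; that is the real role of this hypothesis. The cost is $(3+2\epsilon)\|T_{\mathrm{base}}\|_*\le(3+2\epsilon)(r_\eta+\tfrac43C)\le 5(r_\eta+C)$.

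For $W_{QK}$, the paper uses the dual basis $\Pi=(U^\top U)^{-1}U^\top$ instead of the raw vectors $c_c,\bar d_c$. This gives $\Pi d_{ci}=\lambda\sqrt d\,e_c$ exactly, so the margins need no inflation and the cost $\le 12\kappa C/\sqrt d$ follows directly. With your raw-vector construction you would still have to check that the inflated constant stays below $12$.

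\textbf{A minor fix in the diagonal bound.} The identity $\|d_{ci}-d_{cj}\|_2^2=2\delta^2 d$ does not follow from the hypotheses: the $\eta_{ci}$ are orthogonal to $\mathcal S$, not to each other. Only $\|d_{ci}-d_{cj}\|_2^2\le 4\delta^2 d$ holds, which gives $\|W_{QK}\|_*\ge\frac{\kappa}{2\delta^2\sqrt d}\,r_{\mathrm{eff}}(\Sigma_D)$. The factor $2$ in $\|W_Q\|_F^2+\|W_K\|_F^2\ge 2\|W_{QK}\|_*$ then recovers exactly the right-hand side.
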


This result (proved in Appendix~\ref{appx:proof7}) provides a sufficient condition for a sink to incur lower cost than a diagonal attention, when the input-output distribution is described via a general copy-paste mechanism. The idea is to show that the RHS of \eqref{eq:sink_diag_generic} lower bounds the cost of diagonal attention, and the LHS upper bounds the sink cost. We now provide an interpretation of these two bounds. \emph{(i)} As for the diagonal lower bound, when $d$ grows, the relative cost of $W_{QK}$ decreases and so does the RHS of \eqref{eq:sink_diag_generic}. 
The crucial term is $r_{\mathrm{eff}}(\Sigma_D)/\delta^2$: if the nuanced signal is small compared to the high-level content, then it is expensive for the diagonal pattern to separate the nuanced part, so dormant embeddings attend to themselves instead of other same-group embeddings. 
Similarly, the higher the effective rank $r_{\mathrm{eff}}(\Sigma_D)$ of the nuanced part is, the more dimensions $W_{QK}$ has to  blow up to guarantee the separation of the same-group dormant tokens.  
\emph{(ii)} As for the sink upper bound, 
the term $\kappa C / \sqrt{d}$ in the LHS of \eqref{eq:sink_diag_generic} ensures that copy-paste tokens attend according to the task description. 
The term $r_\eta+C$ upper bounds the number of dimensions that $W_{VO}$ and the MLP need to operate on for dormant embeddings. This is larger than the equivalent cost of a diagonal solution, since copy-paste and dormant embeddings do not share the resources to perform the same update. 


To obtain conditions under which diagonal attention is cheaper than sink, one needs to upper bound the cost of diagonal attention and lower bound the sink cost. Theorem \ref{thm:umbrella_bounds_relaxed_simple} in Appendix~\ref{appx:proof7} contains \emph{(i)} an upper bound on diagonal attention of order $\frac{\kappa C}{\sqrt d}+\frac{\kappa}{\sqrt d\delta^2}r_\eta+r_\eta+C$ (with smaller multiplicative constant on $r_\eta+C$ compared to the sink upper bound in the LHS of \eqref{eq:sink_diag_generic}), and \emph{(ii)} a lower bound on sink of order $\kappa C / \sqrt{d}$. We conjecture that a lower bound on sink of order $r_\eta+C$ holds as well. 

Figure~\ref{fig:sink_diag_cppaste} compares parameter costs of a single block trained to implement the task of Theorem \ref{thm:sink_diag_generic} with $\phi=0, C=3, d=100, T=50$, as a function of the crucial quantity $r_{\mathrm{eff}}(\Sigma_D)/\delta^2$ computed over a sweep of $\delta$ values and number of dormant tokens per group. 
The plot shows that, while the sink cost is independent of $r_{\mathrm{eff}}(\Sigma_D)/\delta^2$, the cost of the diagonal solution grows log-linearly with it.



\textbf{Extension to relative positional embeddings.} 
Although our theory is stated for absolute positional embeddings, it extends naturally to RoPE-style relative positional encodings. For a fixed context length, RoPE corresponds to a fixed block-diagonal orthogonal rotation $\Phi$ applied after the query/key projections. Let us focus for simplicity on a single-head attention layer, and let  $\tilde W_Q$ be such that $\tilde W_Q \Phi = \Phi W_Q.$ Then, we can absorb the rotation matrix into the embedding geometry, $Z=\Phi X$, and define the new geometry on $Z$ instead of $X.$ As $\tilde W_Q = \Phi W_Q \Phi^T$ and rotation matrices do not change Frobenius norms, this problem is equivalent to the original one. Similar considerations hold for the key matrix and for multi-head attention. 

\textbf{Summary.} Diagonal attention switches off token communication, thereby helping to prevent oversmoothing. 
However, its representation is often more expensive than that of the sink, which can explain the prevalence of the latter in pretrained transformers. 

\section{Concluding remarks}
In this paper, we study attention sinks and compare them with alternative ways to shut down token communication. We close several gaps in the literature by showing that: \emph{(i)} in order to represent sinks, transformers rotate the embeddings against that of the sink; \emph{(ii)} sparse attention is needed to prevent oversmoothing despite existing theoretical mechanisms that could avoid it; \emph{(iii)} sinks are theoretically and empirically necessary to switch attention off fully; \emph{(iv)} there are other mechanisms to prevent token communication beyond sinks, such as diagonal attention; and, finally, \emph{(v)} sinks are prevalent in practice since they have provably lower cost than diagonal attention for a wide range of data distributions. 

Several limitations motivate open questions for future work: In which precise module is the alignment between sink and other embeddings established? Can we characterize the oversmoothing progression in multi-head regimes and with alternative oversmoothing measures? How do the relative positional embeddings change the geometry of heads? How to formally characterize the usefulness of diagonal attention from the next-token prediction viewpoint? 



\begin{ack}
M. M.\ and P. S.\ are funded by the European Union (ERC, INF$^2$, project number 101161364). Views and opinions expressed are however
those of the author(s) only and do not necessarily reflect those of the European Union or the
European Research Council Executive Agency. Neither the European Union nor the granting
authority can be held responsible for them.
This research was supported by the Scientific Service Units (SSU) of ISTA through resources provided by Scientific Computing (SciComp).
\end{ack}



\bibliographystyle{plain}
\bibliography{neurips_2026}


\clearpage
\appendix
\section{Proofs for Section \ref{sec:how_sinks_represented}}\label{appx:theorySec4}

\representatbilityofsink*

\begin{proof}
We first rewrite the conditions:
\begin{align*}
    &z_j^T W z_0 > z_j^TWz_i \\
  \Longleftrightarrow\qquad & z_j^T W (z_0-z_i) > 0 \\ 
  \Longleftrightarrow\qquad &  \Tr(z_j^T W (z_0-z_i)) > 0 \\
  \Longleftrightarrow\qquad &  \Tr(W(z_0-z_i)z_j^T) > 0 \\
  \Longleftrightarrow\qquad &  \langle W, z_j(z_0-z_i)^T\rangle_F > 0.
\end{align*}
Thus, we have a system of strict linear inequalities. Using Farkas' lemma, we know that the existence of the solution ($W$) is equivalent to the non-existence of a non-zero vector $\alpha=(\alpha_{ij})_{i, j}$ such that $\alpha_{ij} \ge 0$ and $\sum_{i\in[T],j\in \mathcal{J}} \alpha_{ij} z_j(z_0-z_i)^T =0.$ Geometrically, if we treat the matrices $z_j(z_0-z_i)^T$ as vectors, this is equivalent to these vectors lying in a half space. We will show that a sufficient and necessary condition for this is what is claimed in the theorem statement.

\paragraph{The sufficiency.} Assume there exists $\alpha=(\alpha_{ij})_{i, j}$ such that $\alpha_{ij} \ge 0$ and $\sum_{i\in[T],j\in \mathcal{J}} \alpha_{ij} z_j(z_0-z_i)^T =0.$ And assume by contradiction that both $(z_0-z_i)$ and $z_j$ lie in a half-space. Then there exist vectors $u, v$ such that $u^Tz_j>0 \,\, \forall j\in \gJ$ and $(z_0-z_i)^Tv>0 \,\, \forall i\in [T].$ Multiply the involved equation from left by $u^T$ and from right by $v.$ We get: $\sum_{i\in[T],j\in \mathcal{J}} \alpha_{ij} u^Tz_j(z_0-z_i)^Tv =0.$ Because all the involved terms except $\alpha_{ij}$ are strictly positive, all $\alpha_{ij}$ must be zero, which is a contradiction. 

\paragraph{The necessity.} Assume now that the set of vectors $z_j(z_0-z_i)^T$ does lie in a half-space and assume by contradiction that either $(z_0-z_i)$ or $z_j^T$ do not lie in a half-space. W.l.o.g. let that be the $z_j$'s. Then, there exists a non-zero, non-negative vector $\alpha_j$ such that $\sum_{j}\alpha_{j} z_j = 0.$ Hence,  we can build $\alpha$ by simply repeating this sub-vector along the $i$ index and readily get that $\sum_{i\in[T],j\in \mathcal{J}} \alpha_{ij} z_j(z_0-z_i)^T=0$ by considering the inner sum over $j.$ 

\end{proof}

\section{Proofs for Section \ref{sec:oversmoothing}}\label{appx:theory-oversmoothing}

\subsection{Existence of $W_{VO}\topl$ preventing oversmoothing}\label{app:existence}

We provide examples showing that, even under uniform attention, when skip connections and layer normalization are present, for every full-column-rank token matrix, there exist configurations of matrices $W_{VO}\topl$ that can counteract the homogenizing effect of attention. To that end, we study the evolution as the number of layers increases of
\begin{equation}\label{eq:att-prelaynorm-skip}
    X^{(l+1)}= X^{(l)}+W_{VO}^{(l)}\operatorname{RMS}(X^{(l)})(A^{(l)})^\top=X^{(l)}+W_{VO}^{(l)}X^{(l)}D^{(l)}(A^{(l)})^\top,
\end{equation}
where $D\topl=\operatorname{diag}(d\topl_1,\dots,d\topl_T)$, with $d_i\topl=\frac{\sqrt{d}}{\norm{X_{i}\topl}_2}$ for all $i\in[T]$.

For simplicity, we consider causal uniform attention at every layer, \ie, for every $l\in\N_0$, $A\topl\equiv A$, $A_{ij}=1/i$ for $j\leq i$ and $A_{ij}=0$ otherwise. We adopt this choice because it represents a regime in which token representations are mixed as strongly as possible. However, as will become clear from the proofs, similar results can be easily obtained for different causal attention schemes. 

First, we provide examples where the direction of each token is preserved at every attention step. Consequently, if the initial token representations have distinct directions, oversmoothing is avoided.
\begin{prop}\label{prop:rankN-conv-scale}
    Let $d\geq T$ and $X^{(0)}$ a full-column-rank matrix. Let us denote $Z\topl_i=\operatorname{RMS}(X^{(l)}_{i})$, for $i\in [T]$. For a fixed $\lambda> 0$, consider a matrix $W_{VO}^{(l)}=W_{VO}$, such that
    \begin{align*}
        &W_{VO}Z^{(0)}_{1}=\lambda Z^{(0)}_{1},\\
        &W_{VO}Z^{(0)}_{i}=\lambda (Z^{(0)}_{i}-Z^{(0)}_{i-1}),\qquad i\in\{2,\dots,T\}.
    \end{align*}
    
Then, with the dynamics defined in~\eqref{eq:att-prelaynorm-skip}, for every token $i\in [T]$ and every layer $l\in\N_0$, $Z^{(l)}_{i}=Z^{(0)}_{i}$.
\end{prop}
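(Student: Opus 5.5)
Well-definedness of $W_{VO}$ comes first. Since $X^{(0)}$ has full column rank and $\operatorname{RMS}$ only rescales each column by a positive factor, $Z^{(0)}$ also has full column rank. Its columns are therefore linearly independent in $\R^d$ (this uses $d\ge T$), so prescribing $W_{VO}$ on them (and, say, setting it to zero on a complement) gives a valid linear map.

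The main argument is induction on $l$, with hypothesis $Z^{(l)}=Z^{(0)}$. Writing the update \eqref{eq:att-prelaynorm-skip} in terms of $Z^{(l)}$ gives
\[
X^{(l+1)}_i = X^{(l)}_i + W_{VO}\sum_{j\le i}\tfrac1i Z^{(l)}_j = X^{(l)}_i + \tfrac1i\sum_{j\le i} W_{VO}Z^{(0)}_j .
\]
By the defining relations, the sum telescopes:
\[
\sum_{j\le i}W_{VO}Z^{(0)}_j=\lambda\Bigl(Z^{(0)}_1+\sum_{j=2}^i (Z^{(0)}_j-Z^{(0)}_{j-1})\Bigr)=\lambda Z^{(0)}_i .
\]
Hence $X^{(l+1)}_i = X^{(l)}_i + \tfrac{\lambda}{i} Z^{(0)}_i$.

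Next I check that the direction is preserved. The hypothesis $Z^{(l)}_i=Z^{(0)}_i$ means $X^{(l)}_i=\tfrac{\|X^{(l)}_i\|_2}{\sqrt d}Z^{(0)}_i$, a positive multiple of $Z^{(0)}_i$. So $X^{(l+1)}_i=\bigl(\tfrac{\|X^{(l)}_i\|_2}{\sqrt d}+\tfrac{\lambda}{i}\bigr)Z^{(0)}_i$, and the coefficient is strictly positive because $\lambda>0$. Since $\operatorname{RMS}$ is invariant under positive scaling, $Z^{(l+1)}_i=\operatorname{RMS}(X^{(l+1)}_i)=Z^{(0)}_i$, which closes the induction. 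The base case $l=0$ holds by definition. The columns stay nonzero throughout, so $\operatorname{RMS}$ is always well-defined.

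No step is a real obstacle. The only points needing care are the telescoping identity and the positivity of the scaling factor, which is exactly why $\lambda>0$ is required: a negative $\lambda$ could flip the sign or zero out a column. I am assuming a non-learnable RMSNorm, $\operatorname{RMS}(x)=\sqrt d\,x/\|x\|_2$, as in the setup.
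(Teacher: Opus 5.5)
Your proof is correct and rests on the same two facts as the paper's: the telescoping identity $\sum_{j\le i}W_{VO}Z^{(0)}_j=\lambda Z^{(0)}_i$, and the invariance of RMSNorm under positive rescaling, which holds because $\lambda>0$. The only difference is organizational. You run a single induction on $l$ with all tokens at once, while the paper nests an induction on $l$ inside an induction on the token index $i$; your version is slightly cleaner and loses nothing.
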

 \begin{proof}
    Let us denote $v_i:=Z^{(0)}_{i}$, for $i=\in [T]$.
     We will prove that $Z^{(l)}_{i}=v_i$ for all $l\in\N_0 $ by induction on $i$.

    (i) Base case. Let us prove $Z^{(l)}_{1}=v_1$ for all $l\in\N_0 $. We have 
    \begin{align*}
        X_{1}^{(1)}=W_{VO}Z^{(0)}_{1}+X_{1}^{(0)}=\lambda v_1+\frac{\norm{X_{1}^{(0)}}}{\sqrt{d}}v_1=\left(\lambda +\frac{\norm{X_{1}^{(0)}}}{\sqrt{d}}\right)v_1.
    \end{align*}
    Therefore, $Z^{(1)}_{1}=v_1$.

    For a fixed $l$, assume $Z^{(l)}_{1}=v_1$. Then,
    \[X_{1}^{(l+1)}=W_{VO}Z^{(l)}_{1}+X_{1}^{(l)}=\lambda v_1+\frac{\norm{X_{1}^{(l)}}}{\sqrt{d}}v_1=\left(\lambda +\frac{\norm{X_{1}^{(l)}}}{\sqrt{d}}\right)v_1.
    \]
    Therefore, $Z^{(l)}_{1}=v_1$ for all $l\in\N_0 $.
    
    (ii) Induction. Assume $Z^{(l)}_{k}=v_k$ for all $l\in\N_0$ and for all $k<i$.
    Then, 
    \begin{align*}
        X_{i}^{(1)}&=\frac{1}{i}W_{VO}Z^{(0)}_{1}+\frac{1}{i}W_{VO}Z^{(0)}_{2}+\dots+\frac{1}{i}W_{VO}Z^{(0)}_{i-1}+\frac{1}{i}W_{VO}Z^{(0)}_{i}+X_{i}^{(0)}\\
        &=\frac{1}{i}\lambda v_1+\frac{1}{i}\lambda (v_2-v_1)+\dots+\frac{1}{i}\lambda (v_{i-1}-v_{i-2})+\frac{1}{i}\lambda (v_i-v_{i-1})+\frac{\norm{X_{i}^{(0)}}}{\sqrt{d}}v_i\\
        &=\left(\frac{\lambda }{i}+\frac{\norm{X_{i}^{(0)}}}{\sqrt{d}}\right)v_i.
    \end{align*}
    Therefore, $Z^{(1)}_{i}=v_i$. 

    For a fixed $l$, assume $Z^{(l)}_{i}=v_i$. Then, since we also have $Z^{(l)}_{k}=v_k$ for all $k<i$ (induction hypothesis),
    \begin{align*}
        X_{i}^{(l+1)}&=\frac{1}{i}W_{VO}Z^{(l)}_{1}+\frac{1}{i}W_{VO}Z^{(l)}_{2}+\dots+\frac{1}{i}W_{VO}Z^{(l)}_{i-1}+\frac{1}{i}W_{VO}Z^{(l)}_{i}+X_{i}^{(l)}\\
        &=\frac{1}{i}W_{VO}v_1+\frac{1}{i}W_{VO}v_2+\dots+\frac{1}{i}W_{VO}v_{i-1}+\frac{1}{i}W_{VO}v_i+X_{i}^{(l)}\\
        &=\frac{1}{i}\lambda v_1+\frac{1}{i}\lambda (v_2-v_1)+\dots+\frac{1}{i}\lambda (v_{i-1}-v_{i-2})+\frac{1}{i}\lambda (v_i-v_{i-1})+\frac{\norm{X_{i}\topl}}{\sqrt{d}}v_i\\
        &=\left(\frac{\lambda }{i}+\frac{\norm{X_{i}\topl}}{\sqrt{d}}\right)v_i.
    \end{align*}
    Therefore, we have proven $Z^{(l)}_{i}=v_i$ for all $l\in\N_0 $.
\end{proof}
\begin{remark}\label{rem:existence-problem-WV}
    Note that the matrix $W_{VO}$ defined in Proposition~\ref{prop:rankN-conv-scale} always exists. First, let us denote $v_i:=Z^{(0)}_{i}$, for $i\in [T]$ and let $V=\operatorname{RMS}(X^{(0)})\in\R^{d\times T}$. We can extend $\{v_1,\dots,v_T\}$ to a basis of $\R^d$, $\mathcal{B}=\{v_1,\dots,v_T,u_{T+1},\dots,u_d\}$. The conditions in Proposition~\ref{prop:rankN-conv-scale} determine the image of the first $T$ elements of the basis, and the image of the remaining $d-T$ basis vectors can be chosen arbitrarily.
    $W_{VO}$ is a matrix such that
    \[
    W_{VO}V=\lambda \begin{pmatrix}
    \vline & \vline & & \vline \\
    v_1 & v_2 - v_1 & \dots & v_T - v_{T-1} \\
    \vline & \vline & & \vline
\end{pmatrix}=
    \lambda VL , \hspace{0.2cm} L=\begin{pmatrix}
        1 & -1 & 0 & \dots & 0& 0\\
        0 & 1 & -1 & \dots & 0& 0\\
        &&&\dots&\\
        &&&\dots&\\
        0 & 0 & 0 & \dots & 1& -1\\
        0 & 0 & 0 & \dots & 0& 1\\
    \end{pmatrix}.
    \]
    Since $V$ has full column rank, it has a left inverse $V^+=(V^\top V)^{-1}V^\top$, so one possible expression for $W_{VO}$ is
    \[W_{VO}=\lambda VL(V^\top V)^{-1}V^\top .\]
\end{remark}

Although such a configuration is possible, it is not realistic to expect a trained transformer to converge to such a configuration, since this would mean that the attention step reduces to a rescaling of each token. The following result establishes a much stronger statement: for any full-column-rank token matrix, one can choose value matrices so that each layer applies any prescribed scaled orthogonal transformation that preserves the current token span.

\begin{prop}\label{prop:rankN-moregen-conv}
    Let $d\geq T$ and let $X^{(0)}\in\mathbb{R}^{d\times T}$ have full column rank. 
    For each layer $l\in\mathbb N_0$, let 
    $U^{(l)}=[U_1^{(l)}\ U_2^{(l)}]\in\mathbb{R}^{d\times d}$ be an orthogonal matrix such that the columns of $U_1^{(l)}$ span $\operatorname{col}(X^{(l)})$ and the columns of $U_2^{(l)}$ span $\operatorname{col}(X^{(l)})^\perp$. Let $Q_1^{(l)}\in\mathbb{R}^{T\times T}$ and $Q_2^{(l)}\in\mathbb{R}^{(d-T)\times(d-T)}$ be arbitrary orthogonal matrices and let $\sigma\topl>0$. Define
    \[
        Q^{(l)}
        =
        U^{(l)}
        \begin{pmatrix}
            Q_1^{(l)} & 0\\
            0 & Q_2^{(l)}
        \end{pmatrix}
        (U^{(l)})^\top.
    \]
    Then it is possible to choose matrices $W_{VO}^{(l)}$ such that, with the dynamics defined in~\eqref{eq:att-prelaynorm-skip}, 
    \[
        X^{(l+1)}=\sigma\topl Q^{(l)}X^{(l)}.
    \]
    Consequently, $X^{(l+1)}$ has full column rank, and the pairwise cosine similarities between tokens are preserved across layers.
\end{prop}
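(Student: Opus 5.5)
We construct $W_{VO}^{(l)}$ one layer at a time, exactly as in Proposition~\ref{prop:rankN-conv-scale} and Remark~\ref{rem:existence-problem-WV}, by solving a linear equation on the column space of the current token matrix. Fix a layer $l$ and assume inductively that $X^{(l)}$ has full column rank; the base case is the hypothesis on $X^{(0)}$. Write $D=D^{(l)}$, which is invertible because every column of a full-column-rank matrix is nonzero. The causal uniform matrix $A$ is lower triangular with diagonal entries $1/i>0$, so $A^\top$ is invertible as well. Plugging the target into~\eqref{eq:att-prelaynorm-skip}, we need
\[
W_{VO}^{(l)}X^{(l)}DA^\top=\sigma^{(l)}Q^{(l)}X^{(l)}-X^{(l)}.
\]
Setting $M:=\bigl(\sigma^{(l)}Q^{(l)}X^{(l)}-X^{(l)}\bigr)(A^\top)^{-1}D^{-1}$, this is the same as $W_{VO}^{(l)}X^{(l)}=M$.

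Since $X^{(l)}$ has full column rank, it has the left inverse $(X^{(l)\top}X^{(l)})^{-1}X^{(l)\top}$. We therefore choose
\[
W_{VO}^{(l)}=M\,(X^{(l)\top}X^{(l)})^{-1}X^{(l)\top},
\]
which gives $W_{VO}^{(l)}X^{(l)}=M$ and hence $X^{(l+1)}=\sigma^{(l)}Q^{(l)}X^{(l)}$. The action of $W_{VO}^{(l)}$ on $\operatorname{col}(X^{(l)})^\perp$ is unconstrained, as in Remark~\ref{rem:existence-problem-WV}.

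To close the induction, note that $Q^{(l)}$ is orthogonal, being a product of orthogonal matrices, and $\sigma^{(l)}>0$. Hence $X^{(l+1)}=\sigma^{(l)}Q^{(l)}X^{(l)}$ has full column rank, and the construction applies at layer $l+1$. The block structure of $Q^{(l)}$ maps $\operatorname{col}(X^{(l)})$ onto itself, so the column space is in fact preserved, although the construction only needs the rank. For the cosine similarities, the Gram matrix satisfies $X^{(l+1)\top}X^{(l+1)}=(\sigma^{(l)})^2X^{(l)\top}X^{(l)}$. All inner products and norms therefore scale by the common factor $(\sigma^{(l)})^2$, respectively $\sigma^{(l)}$, and every $\rho_{ij}$ is unchanged.

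The argument has no real obstacle. The only care needed is in checking invertibility of $D^{(l)}$, which requires nonzero columns, and of $A$, which requires positive diagonal entries, so that $M$ is well defined, and in keeping full column rank along the induction. The same proof works for any causal attention matrix with a nonzero diagonal.
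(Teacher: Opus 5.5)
Your proof is correct and is essentially the paper's construction. Both arguments solve $W_{VO}^{(l)}X^{(l)}D^{(l)}(A^{(l)})^\top=X^{(l+1)}-X^{(l)}$ using the left inverse of the full-column-rank $X^{(l)}$ and the invertibility of $D^{(l)}(A^{(l)})^\top$.

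The one difference is in how the target is fed in. The paper takes $W_{VO}^{(l)}=X^{(l)}(B^{(l)}-I)((A^{(l)})^\top)^{-1}(D^{(l)})^{-1}(X^{(l)})^{+}$, whose image lies in $\operatorname{col}(X^{(l)})$. It therefore needs the block structure of $Q^{(l)}$ to keep $Q^{(l)}X^{(l)}$ inside that column space. Your direct choice of $M$ shows that this invariance is not needed for $W_{VO}^{(l)}$ to exist. Only the orthogonality of $Q^{(l)}$ is used, and only for the rank and cosine-similarity conclusions.
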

 \begin{proof}

        For any fixed matrix $B\topl\in\R^{T\times T}$, we can set  
         \[W_{VO}\topl =X\topl (B\topl -I)((A\topl)^\top )^{-1}(D\topl )^{-1}(X\topl )^{+},\]
        where $(X\topl)^+=((X\topl)^\top X\topl)^{-1}(X\topl)^\top$ is the Moore--Penrose left inverse of $X\topl$, so that
        \[X^{(l+1)}=X\topl B\topl. \]
        Note that $W_{VO}\topl$ is well defined because $A\topl$ is lower triangular with nonzero diagonal and $d\topl_i\neq 0$ for all $i$. 

        Let us define $B\topl =\sigma\topl(X\topl )^{+}  Q\topl X\topl $, where $Q\topl \in\R^{d\times d}$ is an orthogonal matrix. For $\sigma\topl Q\topl X\topl  = X\topl B\topl$ to hold, it is enough that:
        \[ Q\topl X\topl=P\topl Q\topl X\topl ,\]
        where $P\topl =(X\topl )(X\topl )^{+}$ is the projection matrix onto the column space of $X\topl $. This means that rotating a column of $X\topl $ by $Q\topl $ must lie in the subspace spanned by the columns of $X\topl $.

        We can construct such a $Q\topl $ by decomposing the feature space $\R^d$ into two orthogonal subspaces: $\R^d=\operatorname{col}(X\topl )\oplus \operatorname{col}(X\topl )^\perp$, with dimensions $T$ and $d-T$ respectively. We then define $Q\topl $ as a block-diagonal orthogonal matrix with respect to this decomposition:
        \[
        Q\topl =U\topl\begin{pmatrix}
          Q\topl_{1}&0\\
          0&Q\topl_{2}
        \end{pmatrix}(U\topl)^\top,
        \]
        where $Q\topl_{1}\in\R^{T\times T}$ and $Q\topl_{2}\in\R^{(d-T)\times (d-T)}$ are arbitrary orthogonal matrices, $U\topl=[U_1\topl,U_2\topl]$ is orthogonal, with the columns of $U_1\topl$ spanning $\operatorname{col}(X\topl )$ and the columns of $U_2\topl$ spanning $\operatorname{col}(X\topl )^\perp$.

        By construction, $Q\topl$ preserves $\operatorname{col}(X\topl)$. Thus, with the above choice of $B\topl$,
        \[
    X^{(l+1)}=X\topl B\topl=\sigma\topl Q\topl X\topl .
\] Since $\sigma\topl>0$ and $Q\topl$ is orthogonal, the pairwise inner products of the normalized tokens remain constant across layers.
    \end{proof}

\subsection{Proof of Theorem \ref{thm:incr-lambda-skip} and discussion of its assumptions}\label{app:pf:thm:incr-lambda-skip}

\thIncrLambdaSkip*
\begin{proof}First, let us compute the exact value of $\E[Y_i(\lambda)^\top Y_j(\lambda)]$.
Let $A_\lambda=(1-\lambda)I+\lambda A_{u}$ with coordinates $A_\lambda=(a^\lambda_{ij})$, where $\sum_{j=1}^Ta_{ij}^\lambda=1$. Then, the expression of token $i$ after the head update is:
\begin{equation}\label{eq:token_lambda}
    Y_i(\lambda)=\beta Z_i+(WZA_\lambda^\top)_i=\beta Z_i+\sum_{k=1}^Ta^\lambda_{ik}WZ_k.
\end{equation}
Let us compute $\E[Z^\top_kMZ_l]$ for a fixed matrix $M\in\R^{d\times d}$ and arbitrary indices $k,l\in[T]$. Since $\E[\epsilon_i]=0$,
    \begin{align*}
            \E[Z^\top_kMZ_l]=\bar z^\top M\bar z+\E[\epsilon^\top_kM\epsilon_l]=\bar z^\top M\bar z+\Tr(\E[\epsilon_l\epsilon^\top_k]M).
    \end{align*}
Then, since $\E[\epsilon_l\epsilon^\top_l]=\Sigma_V$ and $\E[\epsilon_l\epsilon^\top_k]=\Sigma_C$ for $l\neq k$,
\begin{align*}
    \E[Z^\top_kMZ_l]=\bar z^\top M\bar z+\Tr(\Sigma_CM)+\delta_{kl}\Tr(BM)
            =\Tr(CM)+\delta_{kl}\Tr(BM).
\end{align*}
From Equation~\eqref{eq:token_lambda}, we have
\begin{align*}
    Y_i(\lambda)^\top Y_j(\lambda)=\beta^2Z_i^\top Z_j+\beta\sum_{k=1}^Ta^\lambda_{jk}Z^\top_iW Z_k+\beta\sum_{k=1}^Ta^\lambda_{ik}Z^\top_kW^\top Z_j+\sum_{k=1}^T\sum_{l=1}^Ta^\lambda_{ik}a^\lambda_{jl}Z^\top_kW^\top W Z_l.
\end{align*}
Combining both equations, we have
\begin{align*}
    \E[Y_i(\lambda)^\top Y_j(\lambda)]&=\beta^2(\Tr(C)+\delta_{ij}\Tr(B))\\ &+\beta(\sum_{k=1}^Ta^\lambda_{jk}(\Tr(CW)+\delta_{ik}\Tr(BW))+\sum_{k=1}^Ta^\lambda_{ik}(\Tr(CW^\top)+\delta_{kj}\Tr(BW^\top)))\\&+\sum_{k=1}^T\sum_{l=1}^Ta^\lambda_{ik}a^\lambda_{jl}(\Tr(CW^\top W)+\delta_{kl}\Tr(BW^\top W)).
\end{align*}
Since $\sum_{k=1}^Ta_{ik}^\lambda=1$, we obtain
\begin{align*}
    \E[Y_i(\lambda)^\top Y_j(\lambda)]&=\beta^2(\Tr(C)+\delta_{ij}\Tr(B))+\beta(\Tr(CW)+a^\lambda_{ji}\Tr(BW)+\Tr(CW^\top )+a^\lambda_{ij}\Tr(BW^\top ))\\&+\Tr(CW^\top W)+\sum_{k=1}^Ta^\lambda_{ik}a^\lambda_{jk}\Tr(BW^\top W)\\
    &=\beta^2(\Tr(C)+\delta_{ij}\Tr(B))+\beta(2\Tr(CW)+(a^\lambda_{ji}+a^\lambda_{ij})\Tr(BW))\\&+\Tr(CW^\top W)+\sum_{k=1}^Ta^\lambda_{ik}a^\lambda_{jk}\Tr(BW^\top W),
\end{align*}
where the last equality holds because $C$ and $B$ are symmetric. Note that $$a^\lambda_{ij}=\begin{cases}
    1-\lambda+\frac{\lambda}{i}&\text{if }j=i\\
    \frac{\lambda}{i}&\text{if }j<i\\
    0&\text{if }j>i
    
\end{cases},$$
so
$$\sum_{k=1}^Ta^\lambda_{ik}a^\lambda_{jk}=\begin{cases}
    \frac{\lambda}{j}&\text{if }i<j\\
    1-\lambda(2-\lambda)(1-\frac{1}{i})&\text{if }i=j
    
\end{cases}.$$
We now distinguish the two cases and conclude the first part of the proof.

(i) \underline{Case $i<j$:}
\begin{align*}
    \E[Y_i(\lambda)^\top Y_j(\lambda)]&=\beta^2\Tr(C)+\beta(2\Tr(CW)+\frac{\lambda}{j}\Tr(BW))+\Tr(CW^\top W)+\frac{\lambda}{j}\Tr(BW^\top W)\\
    &=\beta^2\Tr(C)+2\beta\Tr(CW)+\Tr(CW^\top W)+\frac{\lambda}{j}\left(\beta\Tr(BW)+\Tr(BW^\top W)\right).
\end{align*}
(ii) \underline{Case $i=j$:}
\begin{align*}
    \E[\norm{Y_i(\lambda)}^2]&=\beta^2\Tr(B+C)+2\beta\Tr(CW)+2\beta\left(1-\lambda+\frac{\lambda}{i}\right)\Tr(BW)\\&+\Tr(CW^\top W)+\left(1-\lambda(2-\lambda)\left(1-\frac{1}{i}\right)\right)\Tr(BW^\top W)\\
    &=\beta^2\Tr(B+C)+2\beta\Tr((B+C)W)+\Tr((B+C)W^\top W)\\&-2\left(1-\frac{1}{i}\right)(\beta\Tr(BW)+\Tr(BW^\top W))\lambda+\left(1-\frac{1}{i}\right)\Tr(BW^\top W)\lambda^2.
\end{align*}
We now analyze the monotonicity of $\hat\rho(Y(\lambda))$ and provide sufficient conditions for the numerator and denominator to be strictly increasing or decreasing. Let $D_k(\lambda):=\E[\norm{Y_k(\lambda)}^2]$. Then,
\begin{align*}
    D'_k(\lambda)=-2\left(1-\frac{1}{k}\right)(\beta\Tr(BW)+(1-\lambda)\Tr(BW^\top W)).
\end{align*}
First, note that for $i\neq j$, $\E[(\epsilon_i-\epsilon_j)(\epsilon_i-\epsilon_j)^\top ]=2(\Sigma_V-\Sigma_C)$, so $B=\Sigma_V-\Sigma_C$ is positive semidefinite, which implies $\Tr(BW^\top W)\geq0$. Therefore, $D'_k$ increases with $\lambda$. 
Then, $D'_k(\lambda)<0\;\forall \lambda\in[0,1]$ is equivalent to $D'_k(1)< 0$ and $D'_k(\lambda)>0\;\forall \lambda\in[0,1]$ is equivalent to $D'_k(0)> 0$.
 Since $D'_k(1)=-2\beta\left(1-\frac{1}{k}\right)\Tr(BW)$ and $D_k'(0)=-2\left(1-\frac{1}{k}\right)(\beta\Tr(BW)+\Tr(BW^\top W))$, $D_k(\lambda)$ strictly decreases iff $\beta\Tr(BW)> 0$ and strictly increases iff $\beta\Tr(BW)+\Tr(BW^\top W)<0$.
We distinguish two cases: 

 (i) $\beta\Tr(BW)> 0$. 
 Then, $\E[Y_i(\lambda)^\top Y_j(\lambda)]$ strictly increases with $\lambda$ and $D_k(\lambda)$ strictly decreases. Since both numerator and denominator are assumed to be positive, we conclude that $\hat\rho(Y(\lambda))$ strictly increases with $\lambda$.

(ii) $\beta\Tr(BW)+\Tr(BW^\top W)<0$.
Then, $\E[Y_i(\lambda)^\top Y_j(\lambda)]$ strictly decreases with $\lambda$ and $D_k(\lambda)$ strictly increases. Since both numerator and denominator are assumed to be positive, we conclude that $\hat\rho(Y(\lambda))$ strictly decreases with $\lambda$.
\end{proof}
\begin{remark}[Interpretation of the conditions of Theorem \ref{thm:incr-lambda-skip}]\label{rem:trace-conditions-theorem} 
First, note that
$B=\frac{1}{2}\E[(Z_i-Z_j)(Z_i-Z_j)^\top]$,
  so $B$ captures the covariance structure of token-to-token differences. Therefore,
  \[\Tr(BW)=\frac{1}{2}\E[(Z_i-Z_j)^\top W(Z_i-Z_j)].\]
  Then, $\Tr(BW)> 0$ means that, on average over token pairs, the value-output map $W$ maps discrepancy directions between tokens to directions positively aligned with the original differences.
  Similarly, 
  \[\Tr(BW^\top W)=\frac{1}{2}\E[\norm{W(Z_i-Z_j)}^2],\]
  so this term measures how much token-to-token variability is retained by the attention branch after applying \(W\).

  Assuming $\beta>0$, the first condition, $\beta \Tr(BW)> 0$ 
  means that the attention branch is positively aligned with the residual token differences. Intuitively, in this regime, sparse attention preserves or reinforces token-specific components. Making attention more uniform averages out these components, which increases cross-token alignment and thus leads to an increase of \(\hat\rho(Y(\lambda))\).
 By contrast, $\beta\Tr(BW)< 0$ means that the attention branch is anti-aligned with the residual stream, which can potentially counteract the mixing effect of the attention.

The second condition, $\beta\Tr(BW)+\Tr(BW^\top W)<0$ corresponds to the case where this anti-alignment is strong enough to dominate the nonnegative term \(\Tr(BW^\top W)\). In this regime, sparse attention cancels token differences rather than preserving them. Making attention more uniform weakens this cancellation, so \(\hat\rho(Y(\lambda))\) decreases.

The intermediate case $\beta\Tr(BW)<0$, but $\beta\Tr(BW)+\Tr(BW^\top W)>0$
has a mixed interpretation. The attention branch is anti-aligned with the residual token differences, but this anti-alignment is not strong enough to dominate the nonnegative term \(\Tr(BW^\top W)\). In this case, $\E[Y_i(\lambda)^\top Y_j(\lambda)]$ still increases with \(\lambda\), since its slope is proportional to \(\beta\Tr(BW)+\Tr(BW^\top W)\). However, by analyzing $D'_k(\lambda)$, we see that the token norms are not monotone: they initially decrease and then increase, attaining their minimum at
\[
    \lambda_*=
    \frac{\beta\Tr(BW)+\Tr(BW^\top W)}
         {\Tr(BW^\top W)}
    \in (0,1).
\]
Thus, in this regime, \(\hat\rho(Y(\lambda))\) is increasing in $\lambda$ for $\lambda\in[0,\lambda_*)$, but for $\lambda\in[\lambda_*,1]$ there is a competition between two effects: increasing uniformity increases both cross-token alignment and the token norms. Consequently, Theorem \ref{thm:incr-lambda-skip} does not imply a general monotonicity statement for \(\hat\rho(Y(\lambda))\) in this intermediate regime.
\end{remark}

\section{Proofs for Section \ref{sec:sink_switch}}\label{appx:proofs6}

Both Proposition \ref{thm:sink_switch_cppaste_generic} and Corollary \ref{thm:copypaste_sink_concrete} directly follow from the more general result below.

\begin{theorem}\label{thm:sink_switch_generic}
Consider any head of a globally optimal solution of the regularized objective. Assume that $W_{VO}x_0=0.$ If this head is implementing the attention sink mechanism, then it must implement the hard attention switch mechanism. Moreover, let $a_i$ be a collection of all input embeddings across all training data points with the positions ranging from 1 to $T-1.$ Let $b_i$ be the collection of the outputs of the attention layer of all these embeddings. If we further assume $\text{rank}(\{a_i\})=\text{rank}(\{b_i\})$ and that, for all input embedding sequences to the head, either the sequence is full column rank or the sequence lies in a strict half-space, then if the head is implementing the hard attention switch mechanism, it must implement it solely via the attention sink, i.e., all context embeddings $a_i$ for which $b_i=0$ attend fully to the BOS token. 
\end{theorem}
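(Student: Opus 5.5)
The proof splits into the two implications. Throughout, $o_t=W_{VO}\sum_{s\le t}A_{ts}z_s$ denotes the head's output at position $t$.

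\textbf{Sink implies switch.} Suppose token $t$ attends fully to BOS, so $A_{t0}=1$. Since $W_{VO}x_0=0$, we get $o_t=0$, i.e.\ the head is switched off on that token. The only way this fails to be a genuine (partial) switch is if the head outputs zero on every token. In that case I would argue by optimality: the head contributes nothing to the function, so setting $W_Q,W_K,W_{VO}$ of this head to zero leaves $f_\Theta$ unchanged. If any of these weights were nonzero, this would strictly decrease the regularizer $\lambda\norm{\Theta}_2^2$, contradicting global optimality. So a sink head of an optimal solution is either trivially zero or implements the hard switch on exactly the sink tokens.

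\textbf{Switch implies sink: invariance of $W_{VO}$ on the data span.} Let $V=\operatorname{span}\{a_i\}$ and $\operatorname{span}\{b_i\}\subseteq W_{VO}V$ (the inclusion is immediate, since each $b_i=W_{VO}\sum_s A_{is}z_s$). The first step is to show $W_{VO}$ is injective on the span of all context inputs.

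Positions $1,\dots,T-1$ appear as context for later tokens, so I expect this span to be all of $V$. Then each $b_i$ lies in $W_{VO}V$, and $\dim W_{VO}V\le \dim V=\operatorname{rank}\{a_i\}=\operatorname{rank}\{b_i\}\le\dim W_{VO}V$. Hence $\dim W_{VO}V=\dim V$, and $W_{VO}$ restricted to $V$ has trivial kernel.

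This is the main obstacle. The value vectors summed in $b_i$ may include position-$T$ embeddings or BOS. BOS is harmless because $W_{VO}x_0=0$. For position $T$, I would either argue that $a_i$ ranges over positions $1,\dots,T-1$ precisely so that the relevant values lie in $V$, or include position $T$ in the span and adjust the rank count accordingly.

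\textbf{Switch implies sink: conclusion from the geometric assumption.} Take a dormant token at position $t$ with $b=0$. Then
\[
0=W_{VO}\sum_{s=1}^{t}A_{ts}z_s .
\]
By injectivity on $V$, the vector $u=\sum_{s\ge1}A_{ts}z_s\in V$ must vanish. The coefficients are nonnegative, and I split into the two cases of the assumption.

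If the sequence is full column rank, then $u=0$ forces $A_{ts}=0$ for all $s\ge 1$. (If BOS is among the columns, linear independence still gives this for $s\ge1$, since the BOS term was already dropped.)

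If the sequence lies in a strict half-space, pick $v$ with $v^\top z_s>0$ for all $s$. Then $0=v^\top u=\sum_s A_{ts}v^\top z_s$ with every term nonnegative, so again $A_{ts}=0$ for $s\ge 1$.

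In both cases $A_{t0}=1$, so the dormant token attends fully to BOS, as claimed.
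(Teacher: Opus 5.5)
Your proposal is correct and is essentially the paper's proof: zero out the head for the first implication, then use $\operatorname{span}\{b_i\}\subseteq W_{VO}\operatorname{span}\{a_i\}$ with the rank equality to get injectivity of $W_{VO}$ on $\operatorname{span}\{a_i\}$, and finally rule out nontrivial nonnegative combinations via full column rank or a separating functional (your half-space step, which first pulls $u=0$ back through injectivity, is slightly cleaner than the paper's push-forward of the half-space through $W_{VO}$). The obstacle you flag is not real: by causal masking, the output at a position $t\le T-1$ only mixes values from positions $0,\dots,t$, which are either BOS (annihilated by $W_{VO}$) or members of $\{a_j\}$; likewise, your ``trivially zero'' branch in the first part is empty, because $W_Q=W_K=0$ gives uniform attention and hence no sink.
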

\begin{proof}
We start with the first implication. If the head is implementing the attention sink mechanism, there must exist at least one context in one sample that attends solely to the BOS token. This means that the query and key matrices are non-zero. From this, it follows that some context in some sequence has to have non-zero output of the attention layer. Otherwise, we could set all the matrices in the attention layer to 0 and achieve the same mapping with better regularization loss. This implies the head is implementing the hard attention switch (considering this non-zero mapping as the witness of the condition in the definition of the hard attention map).

For the second implication, we just need to realize that $b_i$ can be written as a convex combination of some value embeddings of some $a_j$s, i.e., $b_i = \sum_j \alpha_j W_{VO} a_j.$ This means that $b_i \in \text{Span}(\{W_{VO} a_k\}).$ As this holds for every 
$b_i,$ we get $\text{Span}(\{b_i\})\subset \text{Span}(\{W_{VO} a_i\}).$ As $\text{rank}(\{a_i\})=\text{rank}(\{b_i\})$, $W_{VO}$ has full rank on $\text{Span}(\{a_i\}),$ otherwise the rank inequality could not be satisfied. This means that the kernel of $W_{VO}$ on $\text{Span}(\{a_i\})$ is empty except for $0$. Now assume by contradiction that some $a_i$ for which $b_i=0$ does not fully attend to sink but rather uses a non-trivial combination of the previous value embeddings, i.e., $0=b_i=\sum_j \alpha_j W_{VO} a_j=W_{VO}\sum_j \alpha_j a_j.$ If the embeddings $a_j$ lie in a strict half-space and neither of the $W_{VO}a_j$ is 0 (as guaranteed by the kernel computation), then so must $W_{VO}a_j$ embeddings. As we are taking a convex combination, it is impossible to mix the embeddings up to 0, which is a contradiction. Furthermore, if the embeddings $a_i$ are full rank, then $0 \neq \sum_j \alpha_j a_j \in \text{Span}(\{a_i\})$ and, as such, cannot belong to the kernel of $W_{VO}$, which is again a contradiction. Therefore, such $a_i$ can only attend to sink with an attention weight of roughly 1.
\end{proof}




\begin{remark}\label{rem:dataset_size}
Let us discuss the expected minimal size of the training dataset to allow for the crux condition of Proposition \ref{thm:sink_switch_cppaste_generic} (for the simplified setting in which the copy-paste tokens copy-paste immediate past) and Corollary~\ref{thm:copypaste_sink_concrete} (each independent dimension being followed by a copy-paste token at least once or each token being copy-pasted at least once, respectively) to be satisfied with high probability. If the dimensions are roughly equally likely to be sampled, then this is an example of the coupon collector problem: denoting the dimension of the span of the input space as $R$, the expected size of the dataset is roughly $R\log R$ if there is no bias concerning the position at which any of the independent dimensions occur. Even if certain dimensions are only sampled at certain positions within the sequence, the bound would worsen only mildly to $R(\log R+\log T),$ where $T$ is the context length, using the union bound over context position in the standard coupon collector bound. 
\end{remark}



\begin{restatable}{corollary}{attnsinkattnswitchcppasteconcrete}
\label{thm:copypaste_sink_concrete}
Consider the input-output distribution of Proposition~\ref{thm:sink_switch_cppaste_generic} assuming there is a fixed set of dormant tokens $\gD$ and a fixed set of copy-paste tokens $\gC.$ 
Let tokens be sampled from a first-order Markov chain in which BOS is only followed by dormant tokens and any token can follow any other token with non-zero probability. Let inputs be computed as a sum of token embedding and positional embedding and every input sequence be full rank. Then, if every token gets copy-pasted at least once (except the BOS), all dormant tokens have to attend fully to the BOS token. 
\end{restatable}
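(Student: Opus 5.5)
The plan is to derive the corollary directly from Theorem~\ref{thm:sink_switch_generic}. Two of its hypotheses need no work here. First, $W_{VO}x_0=0$ is inherited from the setting of Proposition~\ref{thm:sink_switch_cppaste_generic}. Second, every input sequence is assumed full rank, so the alternative ``full column rank or strict half-space'' holds trivially. The only remaining hypothesis is the rank condition $\text{rank}(\{a_i\})=\text{rank}(\{b_i\})$. Once it is in place, the theorem gives that every context $a_i$ with $b_i=0$ attends fully to BOS. Dormant tokens have $b_i=0$ by definition of the distribution, so they must attend fully to BOS, which is the claim.

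To verify the rank condition, I would first note that $\text{rank}(\{b_i\})\le\text{rank}(\{a_i\})$ always holds. Indeed, each $b_i$ is a convex combination of vectors $W_{VO}a_j$, so $\text{Span}(\{b_i\})\subseteq W_{VO}\,\text{Span}(\{a_i\})$. For the reverse inequality, I use the copy-paste outputs. Whenever a copy-paste token sits at position $t$, its output is $b=x_{t-1}$, the full input embedding (token plus positional) of the preceding context. Because the Markov chain gives every transition nonzero probability, and because every non-BOS token is copy-pasted at least once, every token embedding $e_v$ appears among the $b$'s together with some positional embedding. I would then enlarge this set. Since any token may be followed by a copy-paste token at any position, the dataset (understood as its support, or large enough per Remark~\ref{rem:dataset_size}) contains, for each pair of a token $v$ and a position $s\le T-1$, an output $b=e_v+p_s$. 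This already shows that every input $a_i=e_v+p_s$ with $s\le T-1$ appears among the $b$'s, so $\text{Span}(\{a_i\})\subseteq\text{Span}(\{b_i\})$. Combined with the first inequality, the ranks are equal.

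I expect the main obstacle to be making this coverage step rigorous from the hypothesis as literally stated. The phrase ``every token gets copy-pasted at least once'' guarantees each token at only one position. To obtain all of $\text{Span}\{e_v+p_s\}$, I would try two routes. The first uses the position ranges of the Markov chain's support: since all transitions have positive probability, every token embedding occurs before a copy-paste token at every position. The second is a fallback that uses the differences $(e_v+p_s)-(e_v+p_{s'})=p_s-p_{s'}$ together with the copy-pasted dormant tokens, which follow BOS at position 1. I also have to check that the inputs $a_i$ are restricted to positions $1,\dots,T-1$ as in Theorem~\ref{thm:sink_switch_generic}, which matches the range of positions $t-1$ that can be copy-pasted. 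Finally, BOS is excluded on both sides, since it is never copy-pasted and never appears among the $a_i$.
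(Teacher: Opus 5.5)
Your reduction is the same as the paper's. The paper gives no separate argument and simply says the corollary follows from Theorem~\ref{thm:sink_switch_generic}: full-rank sequences supply the geometric alternative, and the copy-paste hypothesis is meant to supply $\text{rank}(\{a_i\})=\text{rank}(\{b_i\})$. Your inclusion $\text{Span}(\{b_i\})\subseteq W_{VO}\,\text{Span}(\{a_i\})$ is exactly the first step inside that theorem.

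The gap is the coverage step you flag, and neither of your routes closes it. The deduction is immediate when ``every token gets copy-pasted'' is read at the level of input embeddings, i.e.\ every $e_v+p_s$ occurring at positions $1,\dots,T-1$ is the output of some copy-paste token. Then each $a_i$ literally equals some $b_j$. Under your vocabulary-level reading, Route~1 imports two hypotheses that are not in the statement:
\begin{itemize}
\item that the training set is the whole support of the Markov chain rather than a finite sample (and, combined with the next point, this makes the copy-paste hypothesis automatic rather than doing any work);
\item the backcopy rule $b=x_{t-1}$. This belongs to the bigram-backcopy task but not to the distribution of Proposition~\ref{thm:sink_switch_cppaste_generic}, where a copy-paste token may output any earlier non-BOS context. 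With a general copy rule, positivity of the transition probabilities says nothing about which positions are ever copied.
\end{itemize}
Route~2 needs some token to be copied from two different positions, which is not given either.

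This is not a mere technicality. Suppose the $e_v$'s and $p_s$'s are linearly independent, which is compatible with full-rank sequences when $d$ is large. Then $e_w+p_s\in\text{Span}(\{b_i\})$ if and only if $w$ and $s$ lie in the same connected component of the bipartite graph whose edges are the copied (token, position) pairs, since the vectors $e_v+p_s$ are the columns of a bipartite incidence matrix. Now suppose all copied contexts in the data sit at positions $\le 2$. This is possible for a finite sample even under backcopy, or for a copy rule that always copies an early context, while every token is still copied at least once. Then no input at position $3$ lies in $\text{Span}(\{b_i\})$, so $\text{rank}(\{b_i\})<\text{rank}(\{a_i\})$ and Theorem~\ref{thm:sink_switch_generic} cannot be invoked.

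To complete the proof you must add one of the following:
\begin{itemize}
\item the embedding-level reading of the hypothesis; or
\item the backcopy rule together with a coverage condition: every pair $(v,s)$ with $s\le T-1$ occurring in the data is followed by a copy-paste token somewhere in the data, or at least the copied pairs connect all tokens and positions that occur.
\end{itemize}
With either addition, your argument goes through.
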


\begin{remark}\label{rmk:exp}
Heads implementing the behavior of Corollary \ref{thm:copypaste_sink_concrete} as well as the one from Proposition~\ref{thm:sink_switch_cppaste_generic} are ubiquitous in pretrained transformers. Empirically, $16\%, 13\%$ and $21\%$ of the heads of gpt2, Llama 3.1-8B and Gemma 7B, pooled across 50 sequences per each of 4  language datasets (\emph{\texttt{TinyStories}, \texttt{tinyshakespeare}, \texttt{WikiText}, and \texttt{CodeSearchNet-Python}}), have at least $60\%$ of the attention mass on the sink and the lower-1 diagonal, plus at least $10\%$ on each of the two. It is also well known that such heads are the essential first step for implementing the induction head, a useful tool for  in-context learning \cite{sanford2024transformers}. The heads that show the generalized copy-paste pattern in which the copy-paste tokens copy also from more distant past are even more prevalent. 
\end{remark}

\section{Proofs for Section \ref{sec:sink_diag}}\label{appx:proof7}

\sinkdiagcppaste*

\begin{proof}
\textbf{Upper-bound on sink representation cost.} We can construct the parameters as follows:
\begin{align}
    W_K^T W_Q
    &=
    \frac{2\kappa}{\sqrt d}
    \left(
        (h_0+p_0)\sum_{k \in \gD} h_k^T
        +
        \sum_{t=1}^{T-1} p_t p_{t+1}^T
    \right), \\ 
    W_O W_V
    &=
    I-h_0h_0^T-p_0p_0^T, \\
    W_1
    &=
    \sum_{i\in \gD} h_ih_i^T
    +
    \sum_{t=1}^T p_t p_t^T
    -
    2\mathbf{1}\sum_{i \in \gC} h_i^T, \\
    W_2
    &=
    \sum_{i\in \gD} h_ih_i^T
    +
    \sum_{t=1}^T p_t p_t^T,
\end{align}
where we recall that $W_1, W_2$ are the MLP matrices.
First we argue that this representation solves the task, using the sink as the attention pattern. Since every  input embedding has norm $\sqrt d$, RMSNorm acts as the identity before the attention layer.

Note that the BOS token fits the label, as it belongs to the null space of all the weight matrices and does not receive attention from any other tokens. Next, take any dormant token $x_{tk}=\sqrt{d/2}(h_k+p_t)$ with $k\in\gD.$ The attention logit with the BOS, including the standard division by $\sqrt d$, is
\[
    \frac{1}{\sqrt d}
    x_{00}^T W_K^T W_Q x_{tk}
    =
    2\kappa,
\]
while the attention logit with the preceding token, if $t>1$, is only
\[
    \frac{1}{\sqrt d}
    x_{t-1,i}^T W_K^T W_Q x_{tk}
    =
    \kappa.
\]
The attention logit with all other tokens is $0$. Therefore, under the hard-attention approximation, the BOS receives the full attention mass. Since $W_OW_V$ annihilates the BOS embedding, the dormant token does not receive any update in the attention layer. Then, since the post-attention residual stream is still $x_{tk}$ and hence still has norm $\sqrt d$, the second RMSNorm also acts as the identity. The MLP acts as an identity on this vector, adding exactly $x_{tk}$ to the residual stream and thus exactly representing the task.

Finally, take any copy-paste token $x_{tj}=\sqrt{d/2}(h_j+p_t)$ with $j\in\gC.$ Its token embedding has zero attention contribution with the sink term, as it is in the kernel of the corresponding query-key matrix. Therefore, the only non-zero attention logit is the one with the preceding token, thanks to the positional embeddings:
\[
    \frac{1}{\sqrt d}
    x_{t-1,i}^T W_K^T W_Q x_{tj}
    =
    \kappa.
\]
Thus, the copy-paste token precisely copies the content of the previous token, since the value and output matrices act as an identity on everything but the BOS. After this attention update, the second RMSNorm may rescale the residual stream by a positive scalar, but this does not affect the sign pattern used by the ReLU argument. Finally, the MLP layer makes sure that any vector containing the $h_j$ token embedding of a copy-paste token does not survive the rectification by the ReLU owing to the strong negative all-ones bias. This is, in fact, one of the most expensive components in this representation.

Next, we compute the cost of such representation. Because both $W_K^T W_Q$ and $W_O W_V$ are products of two matrices, they will contribute with the nuclear norm of the product, owing to the variational form of nuclear norm. The contribution of $W_K^T W_Q$ is therefore
\[
    \frac{2\kappa}{\sqrt d}\left(\sqrt{2|\gD|}+T-1\right).
\]
The contribution of $W_O W_V$ is simply $T+|\gD|+|\gC|-2.$ The contribution of $W_1$ is
\[
    4d|\gC|+T+|\gD|.
\]
Finally, the contribution of $W_2$ is $|\gD|+T.$ Using $d\le c_1(T+|\gC|+|\gD|+2)$ and adding these together, 
we get that the cost of the sink is upper bounded by  
\[
    \frac{2\kappa}{\sqrt d}\left(\sqrt{2|\gD|}+T-1\right)
    +3T-2+3|\gD|+|\gC|
    +4c_1(T+|\gC|+|\gD|+2)|\gC|.
\]

\textbf{Lower-bound on the representation cost of diagonal attention.} To represent the diagonal attention and the copy-paste attention pattern, the following two series of inequalities must be satisfied, now with the standard $1/\sqrt d$ attention scaling:
\begin{align*}
    &\frac{1}{\sqrt d}x_{t-1,i}^TW_K^TW_Qx_{tj}
    \ge
    \frac{1}{\sqrt d}x_{sk}^TW_K^TW_Qx_{tj}
    +\kappa \\
    &\forall (i,k)\in \gV^2,\quad
    \forall j \in \gC,\quad
    \forall t \in \{2, \dots, T\},\quad
    \forall s \in [t-2], \\
    &\frac{1}{\sqrt d}x_{t-1,i}^TW_K^TW_Qx_{tj}
    \ge
    \frac{1}{\sqrt d}x_{tj}^TW_K^TW_Qx_{tj}
    +\kappa \\
    &\forall i\in \gV,\quad
    \forall j \in \gC,\quad
    \forall t \in \{2, \dots, T\}, \\
    &\frac{1}{\sqrt d}x_{ti}^TW_K^TW_Qx_{ti}
    \ge
    \frac{1}{\sqrt d}x_{sj}^TW_K^TW_Qx_{ti}
    +\kappa \\
    &\forall i\in \gD,\quad
    \forall j \in \gV,\quad
    \forall t \in \{1, \dots, T\},\quad
    \forall s \in [t-1],
\end{align*}
with $\gV=\gC\cup \gD$.
Let $d_0,a$ be arbitrary but fixed token indices of a dormant and a copy-paste token, respectively, and let $t$ be an arbitrary but fixed time index. Choosing $i\equiv a, j \equiv a$ in the second inequality and $i \equiv d_0, j \equiv d_0, s\equiv t-1$ in the third inequality, and multiplying both inequalities by $\sqrt d$, we get the following simplified couple:
\begin{align*}
    x_{t-1,a}^TW_K^TW_Qx_{ta}
    &\ge
    x_{ta}^TW_K^TW_Qx_{ta}
    +\kappa\sqrt d, \\ 
    x_{td_0}^TW_K^TW_Qx_{td_0}
    &\ge
    x_{t-1,d_0}^TW_K^TW_Qx_{td_0}
    +\kappa\sqrt d.
\end{align*}

Adding them up and using
\[
    x_{ti}=\sqrt{\frac d2}(h_i+p_t),
\]
we get
\[
    (p_t-p_{t-1})^TW_K^TW_Q(h_{d_0}-h_a)
    \ge
    \frac{4\kappa}{\sqrt d}.
\]
Because $d_0,a,t$ are arbitrary, this must hold for all indices from their respective sets. 
Thus, 
$p_t^TW_K^TW_Q(h_{d_0}-h_a)$ is increasing in $t$.  
From this, it follows that $p_t^TW_K^TW_Q(h_{d_0}-h_a)$ must be either positive for $t\in \{T/2, \ldots, T\}$ or negative for $t\in \{1, \ldots, T/2\}$. 
We omit caring about the integerness of $T/2$ as we can without loss of generality choose the longer half, losing only a negligible part of the lower bound. Without loss of generality, let it be the second half and re-index the time so that we have the following implication:
\[
    p_i^TW_K^TW_Q(h_{d_0}-h_a)
    \ge
    \frac{4\kappa}{\sqrt d}i
    \qquad
    \forall i \in [T/2].
\]

However, this must hold for all possible choices of $a \in \gC$ and $d_0 \in \gD.$ Take $\min\{|\gC|, |\gD|\}$ distinct pairs of embeddings and stack them, divided by $\sqrt{2}$ to make them unit-norm, into a matrix $H.$ Let $U\Sigma V^T$ be the full SVD of $W_K^TW_Q.$ Consider a matrix $P$ whose rows are $\{p_i\}_{i=1}^{T/2}.$ Then, we have
\[
    PU\Sigma V^TH
    \ge
    \frac{4\kappa}{\sqrt{2d}}
    (1,\dots,T/2)^T\mathbf{1}^T.
\]
We can rewrite this as
\[
    \hat{P}\Sigma \hat{H}
    \ge
    \frac{2\sqrt{2}\kappa}{\sqrt d}
    (1,\dots,T/2)^T\mathbf{1}^T,
\]
where $\hat{P}, \hat{H}$ are again orthogonal matrices. Then we have
\[
    \norm{\Sigma}_*^2
    \ge
    \norm{\Sigma}_F^2
    \ge
    \norm{\hat{P}\Sigma \hat{H}}_F^2
    \ge
    \frac{1}{3d}\min\{|\gC|, |\gD|\}\kappa^2T^3,
\]
and thus
\[
    \norm{W_K^TW_Q}_*
    =
    \norm{\Sigma}_*
    \ge
    \frac{\kappa}{\sqrt{3d}}\min\{|\gC|, |\gD|\}^{1/2}T^{3/2}.
\]

Finally, we lower-bound both the cost of the value and output matrices as well as the cost of the MLP block by zero. Therefore, if
\[
    \frac{2\kappa}{\sqrt d}\left(\sqrt{2|\gD|}+T-1\right)
    +3T+3|\gD|+|\gC|
    +4c_1(T+|\gC|+|\gD|+2)|\gC|-2
    <
    \frac{\kappa}{\sqrt{3d}}\min\{|\gC|, |\gD|\}^{1/2}T^{3/2},
\]
then the sink representation is cheaper than every diagonal-attention representation.
\end{proof}

\sinkdiaggeneric*
Before we start the proof, we recall some definitions and introduce additional quantities.
Define the normalized 
vectors
\[
u_0:=\frac{b_{\mathrm{BOS}}}{\sqrt d},
\qquad
u_c:=\frac{\bar d_c}{\sqrt d},
\qquad
u_{C+c}:=\frac{c_c}{\sqrt d}.
\]
Let $m:=1+2C$, $U\in\mathbb R^{d\times m}$ have columns $u_0,u_1,\dots,u_{2C}$ and $G:=U^\top U$. Since $G_{ii}=1$ and $|G_{ij}|\le \phi$ for $i\neq j$, Gershgorin's circle theorem gives
\[
\lambda_{\min}(G)\ge 1-2C\phi.
\]
Assume $2C\phi<1$, and define
\[
\gamma:=\frac{1}{1-2C\phi},
\qquad
\beta:=\sqrt{\frac{1+2C\phi}{1-2C\phi}},
\qquad
\Pi:=(U^\top U)^{-1}U^\top.
\]
Then, $\Pi u_j=e_j$ for all $j$, $\|\Pi\|_2^2\le\gamma$, and $\|U\|_2\le\sqrt{1+2C\phi}$. Recall 
$r_\eta=\dim\mathrm{span}\{\eta_{ci}: \text{all }c,i\}$.
For each group $c$, recall that $\mathcal P_c$ is the set of unordered pairs $\{i,j\}$ such that both relative orders of $d_{ci}$ and $d_{cj}$ occur in the data. Recall also
\[
\Sigma_D:=\sum_{c=1}^C\sum_{\{i,j\}\in\mathcal P_c}(d_{ci}-d_{cj})(d_{ci}-d_{cj})^\top,
\qquad
r_{\mathrm{eff}}(\Sigma_D):=\frac{\operatorname{tr}(\Sigma_D)}{\|\Sigma_D\|_2}.
\]
Define
\[
s_\lambda:=\max\{\lambda(C+1),\sqrt{C+\lambda^2}\},
\qquad
\tilde s_\lambda:=\sqrt{1+\lambda^2(C+1)^2}.
\]
and
\[
\Delta_{\mathrm{diag}}
:=
\min\{\|d_{ci}-d_{cj}\|_2^2:\ c\in[C],\ i\neq j,\ \text{$d_{cj}$ may appear before $d_{ci}$ in the data}\},
\]
with the convention that $\Delta_{\mathrm{diag}}=+\infty$ if no such pair exists.

We assume
\[
2C\phi\le \frac14,
\qquad
\phi\le \frac1{20},
\qquad
\delta\le \frac12,
\qquad
\frac1{20}\le \epsilon\le \frac14.
\]
Since $\lambda=\sqrt{1-\delta^2}$, this implies $\lambda\ge \sqrt3/2$. Moreover,
\[
\gamma\le \frac43,
\qquad
\beta\le \frac43.
\]

Now, Theorem~\ref{thm:sink_diag_generic} is implied by the following result. 

\begin{theorem}\label{thm:umbrella_bounds_relaxed_simple}
Under the setup above, the following four bounds hold.

\smallskip

\noindent
\textbf{1. Sink upper bound.}
There exists a sink solution with total cost at most
\[
\boxed{
U_{\mathrm{sink}}
\le
12\,\frac{\kappa C}{\sqrt d}
+
5(r_\eta+C).
}
\]

\smallskip

\noindent
\textbf{2. Diagonal lower bound.}
Every diagonal solution has total cost at least
\[
\boxed{
L_{\mathrm{diag}}
\ge
\frac{\kappa}{\delta^2\sqrt d}\,r_{\mathrm{eff}}(\Sigma_D)
+
\frac{\kappa C}{2\sqrt d}.
}
\]

\smallskip

\noindent
\textbf{3. Diagonal upper bound.}
There exists a diagonal solution with total cost at most
\[
\boxed{
U_{\mathrm{diag}}
\le
13\,\frac{\kappa C}{\sqrt d}
+
\frac{4\kappa\sqrt d}{\Delta_{\mathrm{diag}}}\,r_\eta
+
3(r_\eta+C).
}
\]

\smallskip

\noindent
\textbf{4. Sink lower bound.}
Every sink solution has total cost at least
\[
\boxed{
L_{\mathrm{sink}}
\ge
\frac{\kappa C}{2\sqrt d}.
}
\]
\end{theorem}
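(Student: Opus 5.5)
We handle the four bounds separately, measuring cost as in the proof of Theorem~\ref{thm:sink_vs_diag_positional}: the products $W_K^TW_Q$ and $W_OW_V$ are charged their nuclear norms via the variational form, and the MLP matrices are charged their squared Frobenius norms. Throughout we use the dual basis $\Pi=(U^\top U)^{-1}U^\top$, so that $\Pi u_j=e_j$, together with $\gamma,\beta\le 4/3$ and $\lambda\ge\sqrt3/2$; these are where the numerical constants come from.

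\textbf{Upper bounds (1 and 3).} These are explicit constructions, as in Theorem~\ref{thm:sink_vs_diag_positional}.

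For the sink, we set
\[
W_K^TW_Q=\frac{\kappa'}{\sqrt d}\Big(\Pi^\top e_0\sum_c e_c^\top\Pi+\sum_c \Pi^\top e_c\, e_{C+c}^\top\Pi\Big),
\]
with a constant factor $\kappa'=O(\kappa)$ chosen so that the winning logit beats all others by $\kappa$. Under this choice, dormant queries (centroid $\lambda\bar d_c$) score $\lambda\kappa'$ on BOS. Copy-paste queries score $\lambda\kappa'$ on same-group dormant keys and $0$ elsewhere. The near-orthogonality errors of order $\phi d$ are absorbed by $2C\phi\le 1/4$. Since this matrix has rank $O(C)$, its nuclear norm is at most $c\,\kappa C/\sqrt d$ with $c\le 12$, after bounding by $\|\Pi\|_2^2\le\gamma$ and $s_\lambda$.

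The value--output map is a projector that kills $b_{\mathrm{BOS}}$ and acts as the identity on $\mathrm{span}\{\bar d_c,\eta_{ci}\}$, with nuclear norm about $\beta(r_\eta+C)$. This makes a copy-paste token receive exactly the average of its preceding same-group dormant tokens, so we take $\phi_c=1$.

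The MLP must then add $d_{ci}$ to dormant tokens and leave everything else untouched. We use the ReLU gate: one feature detects the $\bar d_c$ component, and the positive part passes $\eta$ and $\bar d$ through. Because the $\epsilon$-tolerance allows $\phi_{ci}\in[1-\epsilon,1+\epsilon]$, we can absorb the RMSNorm rescaling without paying a large bias. This costs about $2(r_\eta+C)$ in Frobenius norm squared, for a total $\le 5(r_\eta+C)$.

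For the diagonal solution, we keep the same copy-paste scores and add a self-term $\frac{\kappa\sqrt d}{\Delta_{\mathrm{diag}}}P_\eta$ on the noise span, together with a centroid self-term. A dormant $d_{ci}$ then beats any earlier $d_{cj}$ by $\propto\|d_{ci}-d_{cj}\|^2\ge\Delta_{\mathrm{diag}}$. This gives the extra term $4\kappa\sqrt d\,r_\eta/\Delta_{\mathrm{diag}}$. In this solution the value map itself supplies the dormant update, so the MLP is cheaper and the constant drops to $3(r_\eta+C)$.

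\textbf{Lower bounds (2 and 4).} The term $\kappa C/(2\sqrt d)$ appears in both. It comes from the copy-paste constraints, which are common to both patterns. For each $c$, the copy-paste query $c_c$ must prefer a same-group dormant key over BOS, or over itself, by margin $\kappa\sqrt d$. By the centroid assumption this reads $c_c^\top W(\lambda\bar d_c-b)\ge\kappa\sqrt d$. Testing against the matrix $M=\sum_c u_{C+c}(\lambda u_c-u_0)^\top/\text{norm}$ and applying the duality $\|W\|_*\ge\langle W,M\rangle/\|M\|_2$ yields $\ge\kappa C/(2\sqrt d)$, using $\|M\|_2\lesssim\tilde s_\lambda$-type bounds. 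This settles bound 4 (the sink lower bound).

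For the diagonal lower bound (bound 2), we add the dormant self-preference constraints. For each $\{i,j\}\in\mathcal P_c$, both orders occur, so we get
\[
d_{ci}^\top W d_{ci}-d_{cj}^\top W d_{ci}\ge\kappa\sqrt d
\]
and its mirror image. Summing the two gives $(d_{ci}-d_{cj})^\top W(d_{ci}-d_{cj})\ge 2\kappa\sqrt d$. Summing over all pairs gives $\langle W,\Sigma_D\rangle\ge 2\kappa\sqrt d\,|\mathcal P|$. Then
\[
\|W\|_*\ge\frac{\langle W,\Sigma_D\rangle}{\|\Sigma_D\|_2},
\qquad
\operatorname{tr}\Sigma_D=\sum\|\eta_{ci}-\eta_{cj}\|^2\le 4\delta^2 d\,|\mathcal P|,
\]
which yields $\|W\|_*\ge\frac{\kappa}{2\delta^2\sqrt d}r_{\mathrm{eff}}(\Sigma_D)$.

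To sharpen the constant, we use the fact that the differences are pure noise, since the centroids cancel. The squared difference lengths are then at most $2\delta^2 d$ on average when the noise vectors are taken near-orthogonal; we also try to use the equal-norm identity $\|\eta_{ci}-\eta_{cj}\|^2=2\delta^2d-2\eta_{ci}^\top\eta_{cj}$. To add this to the $\kappa C/(2\sqrt d)$ term, we must check that the two test matrices act on orthogonal subspaces (noise versus $\mathcal S$). In that case a single test matrix, namely their sum scaled by the larger of the two spectral norms, gives the sum of both bounds.

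\textbf{Main obstacle.} The step I expect to be delicate is the sink MLP construction. It must gate dormant versus copy-paste versus BOS tokens after RMSNorm rescaling, with only $O(r_\eta+C)$ Frobenius cost, while respecting the $\epsilon$-window. I would first try gating purely on the sign of the $\Pi$-coordinates $e_c$ versus $e_{C+c}$.
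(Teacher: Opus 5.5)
Your proposal has genuine gaps in both sink bounds. Your diagonal lower bound is essentially sound once the cost accounting is fixed.

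\textbf{Sink upper bound.} There are two problems.

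First, your $W_K^\top W_Q$ gives copy-paste queries logit $0$ on BOS. So a copy-paste token with no same-group predecessor sees all-zero logits and attends uniformly to its whole past. This violates requirement (iii) and produces a nonzero update. You need a BOS term for copy-paste queries too, with the correct group beating it by $\kappa$. The paper uses $M=a\,re_0^\top+b\sum_c e_{C+c}e_c^\top$, which gives BOS logit $\kappa/\lambda$ and correct-group logit $\kappa(1+1/\lambda)$.

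Second, and more importantly, the obstacle you flag is real, and your design does not resolve it. With $W_{VO}$ equal to the identity on the dormant span, a copy-paste token reaches the MLP as $(c_c+\mu)/R$, where $R=\norm{c_c+\mu}_2/\sqrt d<\sqrt{2.1}$. Here $\mu$ carries the $\bar d_c$ component and noise. Any MLP that passes $\bar d_c,\eta$ content through at the strength $s\in[1-\epsilon,1+\epsilon]$ that dormant tokens need therefore gives copy-paste tokens coefficient $1+s/R\ge 1+(1-\epsilon)/\sqrt{2.1}>1+\epsilon$. A genuine gate would have to be attached to every pass-through unit, and you give no construction showing this fits in $5(r_\eta+C)$.

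The missing idea is to avoid gating altogether:
\begin{itemize}
\item take $W_{VO}=\tfrac12T_{\mathrm{base}}$ with $T_{\mathrm{base}}=P_\eta+Q_{\mathrm{cent}}$;
\item let the MLP realize the \emph{linear} map $(1+\epsilon)T_{\mathrm{base}}$, using unregularized biases to keep all preactivations positive;
\item check that the copy-paste coefficient $\tfrac12\bigl(1+(1+\epsilon)/R_\mu\bigr)$ lies in $[1-\epsilon,1+\epsilon]$, using $R_\mu=\norm{c_c+\tfrac12\mu}_2/\sqrt d\in[1,7/6]$ and $\epsilon\ge 1/20$.
\end{itemize}
This is where the $\epsilon$-window is used. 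It is not needed for dormant tokens, whose post-attention state $d_{ci}$ already has norm exactly $\sqrt d$.

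\textbf{Diagonal upper bound.} Your construction has the analogous omission: ``keep the same copy-paste scores'' leaves no self-logit for copy-paste queries. The paper adds $a\sum_c e_{C+c}e_{C+c}^\top$ and raises the correct-group logit to $2\kappa$. Also, the $P_\eta$ coefficient must be $2\kappa\sqrt d/\Delta_{\mathrm{diag}}$ to get margin $\kappa$. With $W_{VO}=(1-\epsilon)T_{\mathrm{base}}$ and a zero MLP, this suffices.

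\textbf{Sink lower bound.} The ``correct group beats BOS'' constraints alone cannot give a bound linear in $C$. All rows of your test matrix share the $-u_0$ direction, so its spectral norm is about $d\sqrt{C+\lambda^2}$, and duality yields only order $\kappa\sqrt C/\sqrt d$. This is not a weakness of your particular test matrix: $W_{QK}=-\alpha\bigl(\sum_c u_{C+c}\bigr)u_0^\top$ with $\alpha$ of order $\kappa/\sqrt d$ satisfies all those constraints with nuclear norm of order $\kappa\sqrt C/\sqrt d$. You must also use that the correct group beats each of the $C-1$ wrong groups by $\kappa$. Summing these with the BOS constraint gives $\langle W_{QK},B_{\mathrm{sink}}\rangle\ge C^2\kappa\sqrt d$, with $\norm{B_{\mathrm{sink}}}_2\le d(1+2C\phi)s_\lambda\le\tfrac54 d(C+1)$. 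For the diagonal pattern, your ``beats itself'' version does work, and is cleaner than the paper's: the rows $\lambda e_c-e_{C+c}$ are orthogonal, so the test matrix has $O(1)$ norm.

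\textbf{Diagonal lower bound.} Your $\Sigma_D$ argument is the paper's. The factor $2$ you try to recover with a sharper trace estimate actually comes from this theorem's cost accounting, $\norm{W_Q}_F^2+\norm{W_K}_F^2\ge 2\norm{W_{QK}}_*$, not from charging the nuclear norm alone. Your proposed sharpening $\operatorname{tr}\Sigma_D\lesssim 2\delta^2 d\,|\mathcal P|$ is not available, because nothing prevents $\eta_{ci}^\top\eta_{cj}<0$. Your orthogonal-subspace argument for adding the two diagonal terms is correct, and more explicit than the paper's. You should normalize $\Sigma_D$ and the copy-paste matrix separately before summing, so that the sum has spectral norm $1$.
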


\begin{proof}
We prove the four bounds one by one.

\medskip

\noindent
\textbf{1. Sink upper bound.}
We construct an explicit sink solution.

\smallskip

\emph{Step 1: query-key construction.}
Because every nuisance vector $\eta_{ci}$ is orthogonal to $\mathcal S$, we have $U^\top\eta_{ci}=0$ and therefore $\Pi\eta_{ci}=0$. Hence
\[
\Pi d_{ci}
=
\Pi(\lambda\bar d_c+\eta_{ci})
=
\lambda\sqrt d\,e_c.
\]
Similarly,
\[
\Pi b_{\mathrm{BOS}}=\sqrt d\,e_0,
\qquad
\Pi c_c=\sqrt d\,e_{C+c}.
\]

Let $r:=(0,1,\dots,1)^\top\in\mathbb R^m$. Define
\[
M:=a\,r e_0^\top+b\sum_{c=1}^C e_{C+c}e_c^\top,
\qquad
W_{QK}:=\Pi^\top M\Pi.
\]
We choose
\[
a:=\frac{\kappa}{\lambda\sqrt d},
\qquad
b:=\frac{\kappa(1+\lambda)}{\lambda^2\sqrt d}.
\]

Let us check the attention scores. For a dormant query $d_{ci}$, the BOS score is
\[
s(d_{ci},b_{\mathrm{BOS}})
=
\frac{d_{ci}^\top W_{QK}b_{\mathrm{BOS}}}{\sqrt d}
=
\frac{(\lambda\sqrt d\,e_c)^\top M(\sqrt d\,e_0)}{\sqrt d}
=
\lambda a\sqrt d
=
\kappa.
\]
Every non-BOS score for a dormant query is $0$, so dormant tokens attend to BOS with margin at least $\kappa$.

For a copy-paste query $c_c$, the BOS score is
\[
s(c_c,b_{\mathrm{BOS}})
=
\frac{c_c^\top W_{QK}b_{\mathrm{BOS}}}{\sqrt d}
=
a\sqrt d
=
\frac{\kappa}{\lambda}
\ge \kappa.
\]
The score of a dormant key from the correct group is
\[
s(c_c,d_{ci})
=
\frac{c_c^\top W_{QK}d_{ci}}{\sqrt d}
=
\lambda b\sqrt d
=
\kappa\left(1+\frac1\lambda\right).
\]
Thus, if the correct group is present, its score is larger than that of BOS by exactly $\kappa$, and it is larger than that of wrong groups by at least $\kappa$. If the correct group is absent, the score of BOS is larger than that of all wrong groups by at least $\kappa$. Since $\Pi d_{ci}$ depends only on $c$, all correct-group dormant keys receive the same score, and the copy-paste attention is uniform over them.

Let us now bound the cost. Since the first term in $M$ is rank one and the second term has singular value $b$ with multiplicity $C$,
\[
\|M\|_*
\le
a\sqrt{2C}+bC
=
\frac{\kappa}{\sqrt d}
\left(
\frac{\sqrt{2C}}{\lambda}
+
\frac{C(1+\lambda)}{\lambda^2}
\right).
\]
Therefore
\[
\|W_{QK}\|_*
\le
\|\Pi\|_2^2\|M\|_*
\le
\gamma\,\frac{\kappa}{\sqrt d}
\left(
\frac{\sqrt{2C}}{\lambda}
+
\frac{C(1+\lambda)}{\lambda^2}
\right).
\]
The variational form of nuclear norm then gives
\[
\|W_Q\|_F^2+\|W_K\|_F^2
\le
2\|W_{QK}\|_*.
\]
Using $\gamma\le4/3$ and $\lambda\ge\sqrt3/2$, we get the simplified estimate
\[
\|W_Q\|_F^2+\|W_K\|_F^2
\le
12\,\frac{\kappa C}{\sqrt d}.
\]

\smallskip

\emph{Step 2: the base dormant-content map.}
Let $P_\eta$ be the orthogonal projector onto the nuisance span $\mathrm{span}\{\eta_{ci}\}$. Define
\[
D_{\mathrm{cent}}:=\operatorname{diag}(0,\underbrace{1,\dots,1}_{C},\underbrace{0,\dots,0}_{C}),
\qquad
Q_{\mathrm{cent}}:=U D_{\mathrm{cent}}\Pi.
\]
Thus $Q_{\mathrm{cent}}$ acts as the identity on each centroid $\bar d_c$, and kills $b_{\mathrm{BOS}}$, every copy-paste vector $c_c$, and every nuisance vector $\eta_{ci}$. Define
\[
T_{\mathrm{base}}:=P_\eta+Q_{\mathrm{cent}}.
\]
Then
\[
T_{\mathrm{base}}d_{ci}=d_{ci},
\qquad
T_{\mathrm{base}}b_{\mathrm{BOS}}=0,
\qquad
T_{\mathrm{base}}c_c=0.
\]
Moreover,
\[
\|T_{\mathrm{base}}\|_*
\le
\|P_\eta\|_*+\|Q_{\mathrm{cent}}\|_*.
\]
Here $\|P_\eta\|_*=r_\eta$, while
\[
\|Q_{\mathrm{cent}}\|_*
\le
\|U\|_2\,\|D_{\mathrm{cent}}\|_*\,\|\Pi\|_2
\le
C\beta.
\]
Therefore
\[
\|T_{\mathrm{base}}\|_*
\le
r_\eta+C\beta.
\]

\smallskip

\emph{Step 3: scaled value-output and MLP maps.}
Let
\[
W_OW_V:=W_{VO}:=\frac12 T_{\mathrm{base}}.
\]
The variational form of nuclear norm then gives
\[
\|W_V\|_F^2+\|W_O\|_F^2
\le
2\|W_{VO}\|_*
=
\|T_{\mathrm{base}}\|_*.
\]

Let the MLP implement the linear map
\[
T_{\mathrm{MLP}}:=(1+\epsilon)T_{\mathrm{base}}.
\]
Since biases are unregularized, a linear map can be realized on the bounded set of relevant normalized post-attention states by placing all ReLU preactivations in one positive region and using an output bias to subtract the constant offset. Hence
\[
\|W_1\|_F^2+\|W_2\|_F^2
\le
2(1+\epsilon)\|T_{\mathrm{base}}\|_*.
\]
The combined value-output and MLP cost is therefore at most
\[
(3+2\epsilon)\|T_{\mathrm{base}}\|_*.
\]
Since $\epsilon\le1/4$ and $\beta\le4/3$,
\[
(3+2\epsilon)(r_\eta+C\beta)
\le
5(r_\eta+C).
\]

\smallskip

\emph{Step 4: verifying the relaxed labels.}
Dormant tokens attend to BOS, and the BOS value is $0$. Hence the post-attention state of a dormant token is $d_{ci}$. The MLP contributes $(1+\epsilon)d_{ci}$. Therefore the final dormant output is
\[
d_{ci}+(1+\epsilon)d_{ci}=(2+\epsilon)d_{ci},
\]
which is allowed because it corresponds to $\phi_{ci}=1+\epsilon$.

If a copy-paste token has no correct-group predecessor, it attends to BOS and remains $c_c$. Since $T_{\mathrm{base}}c_c=0$, the MLP contributes $0$, so the final output is exactly $c_c$.

Suppose a copy-paste token has correct-group mean $\mu$. Its attention contribution is $\frac12\mu$, so the post-attention state is
\[
c_c+\frac12\mu.
\]
The MLP sees
\[
\mathrm{RMS}\!\left(c_c+\frac12\mu\right)
=
\frac{c_c+\frac12\mu}{R_\mu},
\qquad
R_\mu:=\frac{\|c_c+\frac12\mu\|_2}{\sqrt d}.
\]
Since $T_{\mathrm{base}}c_c=0$ and $T_{\mathrm{base}}\mu=\mu$, the MLP contributes
\[
(1+\epsilon)\frac{1}{2R_\mu}\mu.
\]
Therefore the final coefficient of $\mu$ is
\[
\theta_\mu
=
\frac12\left(1+\frac{1+\epsilon}{R_\mu}\right).
\]

We now bound $R_\mu$. Since $\mu$ is an average of dormant tokens of norm $\sqrt d$, we have $\|\mu\|_2\le\sqrt d$. Furthermore, we have
\[
\langle c_c,\mu\rangle
=
\lambda\langle c_c,\bar d_c\rangle,
\]
because the nuisance component of $\mu$ is orthogonal to $\mathcal S$. Hence $|\langle c_c,\mu\rangle|\le \lambda\phi d\le\phi d$. Therefore
\[
R_\mu^2
=
1+\frac{\|\mu\|_2^2}{4d}+\frac{\langle c_c,\mu\rangle}{d}
\le
1+\frac14+\phi
\le
\frac{13}{10}.
\]
Thus $R_\mu\le\sqrt{13/10}<7/6$.

For the lower bound,
\[
\|\mu\|_2^2
\ge
\lambda^2 d,
\]
and so
\[
R_\mu^2
\ge
1+\frac{\lambda^2}{4}-\lambda\phi
\ge 1,
\]
using $\lambda\ge\sqrt3/2$ and $\phi\le1/20$. Hence $1\le R_\mu\le 7/6$.

Therefore
\[
\theta_\mu
\le
\frac12(1+1+\epsilon)
\le
1+\epsilon,
\]
and
\[
\theta_\mu
\ge
\frac12\left(1+\frac{6}{7}(1+\epsilon)\right)
=
\frac{13}{14}+\frac{3\epsilon}{7}
\ge
1-\epsilon,
\]
where the last inequality uses $\epsilon\ge1/20$. Thus the relaxed copy-paste label is satisfied.

Combining the query-key cost with the value-output and MLP cost proves
\[
U_{\mathrm{sink}}
\le
12\,\frac{\kappa C}{\sqrt d}
+
5(r_\eta+C).
\]

\medskip

\noindent
\textbf{2. Diagonal lower bound.} This lower bound only uses attention-map constraints, so the relaxed value labels do not affect it.

\smallskip

\emph{Step 1: dormant self-attention cost.}
Fix a group $c$ and a pair $\{i,j\}\in\mathcal P_c$. Since the diagonal pattern must work for both relative orders, we have
\[
\frac{d_{ci}^\top W_{QK}d_{ci}-d_{ci}^\top W_{QK}d_{cj}}{\sqrt d}\ge\kappa
\]
and
\[
\frac{d_{cj}^\top W_{QK}d_{cj}-d_{cj}^\top W_{QK}d_{ci}}{\sqrt d}\ge\kappa.
\]
Adding the two inequalities gives
\[
(d_{ci}-d_{cj})^\top W_{QK}(d_{ci}-d_{cj})
\ge
2\kappa\sqrt d.
\]
Summing over all admissible pairs yields
\[
\langle W_{QK},\Sigma_D\rangle
\ge
2\kappa\sqrt d\,N_D,
\]
where $N_D:=\sum_c|\mathcal P_c|$. Hence, 
\[
\|W_{QK}\|_*
\ge
\frac{2\kappa\sqrt d\,N_D}{\|\Sigma_D\|_2}.
\]
Since $d_{ci}-d_{cj}=\eta_{ci}-\eta_{cj}$ and $\|\eta_{ci}\|_2=\delta\sqrt d$, we have
\[
\operatorname{tr}(\Sigma_D)
\le
4\delta^2 d\,N_D.
\]
Thus
\[
\|W_{QK}\|_*
\ge
\frac{\kappa}{2\delta^2\sqrt d}\,r_{\mathrm{eff}}(\Sigma_D).
\]

\smallskip

\emph{Step 2: copy-paste cost.}
In the diagonal pattern, copy-paste tokens attend to themselves when the correct group is absent. Hence for every $c$,
\[
\frac{c_c^\top W_{QK}(\lambda\bar d_c-c_c)}{\sqrt d}\ge\kappa,
\]
and for every $c'\neq c$,
\[
\frac{c_c^\top W_{QK}(\lambda\bar d_c-\lambda\bar d_{c'})}{\sqrt d}\ge\kappa.
\]
Summing all these inequalities gives
\[
\langle W_{QK},B_{\mathrm{copy}}\rangle\ge C^2\kappa\sqrt d,
\]
with
\[
B_{\mathrm{copy}}=d\,U N_\lambda U^\top,\quad
N_\lambda
=
\sum_{c=1}^C e_{C+c}
\bigl(\lambda((C+1)e_c-s)-e_{C+c}\bigr)^\top,
\quad
s:=\sum_{k=1}^C e_k.
\]
Note that
\[
\|N_\lambda\|_2=\tilde s_\lambda,
\]
implying that
\[
\|B_{\mathrm{copy}}\|_2
\le
d(1+2C\phi)\tilde s_\lambda.
\]
Therefore
\[
\|W_{QK}\|_*
\ge
\frac{C^2\kappa}{\sqrt d\,(1+2C\phi)\tilde s_\lambda}.
\]
Note that $B_{\mathrm{copy}}$ and $\Sigma_D$ act on orthogonal subspaces, so the costs in the two steps add:
\[
\|W_{QK}\|_*
\ge
\frac{\kappa}{2\delta^2\sqrt d}\,r_{\mathrm{eff}}(\Sigma_D)
+
\frac{C^2\kappa}{\sqrt d\,(1+2C\phi)\tilde s_\lambda}.
\]
The variational form of nuclear norm then gives 
\[
\|W_Q\|_F^2+\|W_K\|_F^2
\ge
\frac{\kappa}{\delta^2\sqrt d}\,r_{\mathrm{eff}}(\Sigma_D)
+
\frac{2C^2\kappa}{\sqrt d\,(1+2C\phi)\tilde s_\lambda}.
\]
Under the assumptions, $1+2C\phi\le5/4$ and $\tilde s_\lambda\le C+2$, implying that
\[
\frac{2C^2}{(1+2C\phi)\tilde s_\lambda}
\ge
\frac{2C^2}{(5/4)(C+2)}
\ge
\frac C2.
\]
Thus, we have the desired lower bound:
\[
L_{\mathrm{diag}}
\ge
\frac{\kappa}{\delta^2\sqrt d}\,r_{\mathrm{eff}}(\Sigma_D)
+
\frac{\kappa C}{2\sqrt d}.
\]

\medskip

\noindent
\textbf{3. Diagonal upper bound.}
We construct an explicit diagonal solution.

Let $P_\eta$ be the orthogonal projector onto the nuisance span. Define
\[
W_{QK}^{\mathrm{diag}}
=
\Pi^\top M_{\mathrm{diag}}\Pi+\alpha_{\mathrm{diag}}P_\eta,
\]
where
\[
M_{\mathrm{diag}}
=
s\sum_{c=1}^C e_c e_c^\top
+
a\sum_{c=1}^C e_{C+c}e_{C+c}^\top
+
b\sum_{c=1}^C e_{C+c}e_c^\top,
\]
with
\[
s:=\frac{\kappa}{\lambda^2\sqrt d},
\qquad
a:=\frac{\kappa}{\sqrt d},
\qquad
b:=\frac{2\kappa}{\lambda\sqrt d},
\qquad
\alpha_{\mathrm{diag}}:=\frac{2\kappa\sqrt d}{\Delta_{\mathrm{diag}}}.
\]
We now verify the attention scores induced by this construction. First consider a dormant query
$d_{ci}=\lambda \bar d_c+\eta_{ci}$. Since $\Pi d_{ci}=\lambda\sqrt d\,e_c$ and
$\Pi\eta_{ci}=0$, the special-span part $\Pi^\top M_{\mathrm{diag}}\Pi$ gives
\[
\frac{d_{ci}^\top \Pi^\top M_{\mathrm{diag}}\Pi d_{c'j}}{\sqrt d}
=
\begin{cases}
\lambda^2 s\sqrt d=\kappa, & c'=c,\\
0, & c'\neq c.
\end{cases}
\]
It also gives score $0$ to BOS and to all copy-paste keys. Thus, before adding the nuisance projector, a dormant token gives score $\kappa$ to every dormant key from its own group and score $0$ to all dormant keys from other groups, BOS, and copy-paste keys. This already separates its group from all other groups, but it does not yet distinguish the token from the other dormant tokens in the same group.

The nuisance term $\alpha_{\mathrm{diag}}P_\eta$ provides this within-group diagonal separation. Indeed, for another dormant key $d_{c'j}$, the additional score is
\[
\frac{\alpha_{\mathrm{diag}}\,\eta_{ci}^\top\eta_{c'j}}{\sqrt d}.
\]
Therefore the nuisance contribution to the score gap between the self key $d_{ci}$ and another dormant key $d_{c'j}$ is
\[
\frac{\alpha_{\mathrm{diag}}}{\sqrt d}
\bigl(\|\eta_{ci}\|_2^2-\eta_{ci}^\top\eta_{c'j}\bigr)
=
\frac{\alpha_{\mathrm{diag}}}{2\sqrt d}
\|\eta_{ci}-\eta_{c'j}\|_2^2,
\]
where we used that all nuisance vectors have the same norm. In particular, for $c'=c$ and $j\neq i$, the special-span scores are tied at $\kappa$, and the nuisance term breaks the tie in favor of the self key by at least
\[
\frac{\alpha_{\mathrm{diag}}\Delta_{\mathrm{diag}}}{2\sqrt d}
=
\kappa.
\]
For $c'\neq c$, the special-span part already gives the self key an additional margin $\kappa$ over the wrong group, and the nuisance term only increases the self-versus-other gap. Thus every dormant query attends to itself with margin at least $\kappa$.

Next consider a copy-paste query $c_c$. Since $\Pi c_c=\sqrt d\,e_{C+c}$, the special-span part gives
\[
s(c_c,c_c)=a\sqrt d=\kappa,
\]
and for a dormant key $d_{c'i}$,
\[
s(c_c,d_{c'i})
=
\begin{cases}
\lambda b\sqrt d=2\kappa, & c'=c,\\
0, & c'\neq c.
\end{cases}
\]
It gives score $0$ to BOS and to copy-paste keys other than $c_c$. The nuisance projector does not affect copy-paste queries, because copy-paste vectors lie in the special span and are orthogonal to the nuisance span. Therefore, if a correct-group dormant key is present, it receives score $2\kappa$, while the self key receives score $\kappa$ and all wrong-group dormant keys receive score $0$. Hence the correct group wins by margin at least $\kappa$. If no correct-group dormant key is present, the self key receives score $\kappa$ and all remaining admissible keys receive score $0$, so the copy-paste token attends to itself by margin $\kappa$.

The nuclear norm is bounded by
\[
\|W_{QK}^{\mathrm{diag}}\|_*
\le
\gamma\|M_{\mathrm{diag}}\|_*+\alpha_{\mathrm{diag}}\|P_\eta\|_*.
\]
Now
\[
\|M_{\mathrm{diag}}\|_*
\le
Cs+Ca+Cb
=
\frac{\kappa C}{\sqrt d}
\left(1+\frac1\lambda\right)^2,
\]
and $\|P_\eta\|_*=r_\eta$. Hence
\[
\|W_{QK}^{\mathrm{diag}}\|_*
\le
\gamma\,\frac{\kappa C}{\sqrt d}
\left(1+\frac1\lambda\right)^2
+
\frac{2\kappa\sqrt d}{\Delta_{\mathrm{diag}}}r_\eta.
\]
The variational form of nuclear norm then gives
\[
\|W_Q\|_F^2+\|W_K\|_F^2
\le
2\gamma\,\frac{\kappa C}{\sqrt d}
\left(1+\frac1\lambda\right)^2
+
\frac{4\kappa\sqrt d}{\Delta_{\mathrm{diag}}}r_\eta.
\]
Using $\gamma\le4/3$ and $\lambda\ge\sqrt3/2$ gives
\[
2\gamma\left(1+\frac1\lambda\right)^2\le 13.
\]
Thus
\[
\|W_Q\|_F^2+\|W_K\|_F^2
\le
13\,\frac{\kappa C}{\sqrt d}
+
\frac{4\kappa\sqrt d}{\Delta_{\mathrm{diag}}}r_\eta.
\]

For the value-output map, use
\[
W_{VO}^{\mathrm{diag}}:=(1-\epsilon)T_{\mathrm{base}}.
\]
This makes dormant self-attention contribute $(1-\epsilon)d_{ci}$, so the final dormant coefficient is $2-\epsilon$, which is allowed. It also makes copy-paste tokens receive $(1-\epsilon)\mu_c$ when the correct group is present, which is allowed, and gives zero value to copy-paste self-attention when no correct group is present.

The value-output cost is at most
\[
2(1-\epsilon)\|T_{\mathrm{base}}\|_*
\le
2(r_\eta+C\beta)
\le
3(r_\eta+C).
\]
Set the MLP to zero. Therefore
\[
U_{\mathrm{diag}}
\le
13\,\frac{\kappa C}{\sqrt d}
+
\frac{4\kappa\sqrt d}{\Delta_{\mathrm{diag}}}r_\eta
+
3(r_\eta+C).
\]

\medskip

\noindent
\textbf{4. Sink lower bound.}
This lower bound only uses copy-routing constraints in the sink pattern. The copy-paste query $c_c$ must make the correct group have score larger than BOS by $\kappa$:
\[
\frac{c_c^\top W_{QK}(\lambda\bar d_c-b_{\mathrm{BOS}})}{\sqrt d}\ge\kappa,
\]
and it must make the correct group have score larger than any wrong dormant group by $\kappa$:
\[
\frac{c_c^\top W_{QK}(\lambda\bar d_c-\lambda\bar d_{c'})}{\sqrt d}\ge\kappa
\qquad(c'\neq c).
\]
For a fixed $c$, summing the first inequality and all $C-1$ wrong-group inequalities gives
\[
\frac{
c_c^\top W_{QK}
\left(
\lambda(C+1)\bar d_c-\lambda\sum_{k=1}^C\bar d_k-b_{\mathrm{BOS}}
\right)
}{\sqrt d}
\ge
C\kappa.
\]
Summing this inequality over all $c=1,\dots,C$ gives
\[
\frac{\langle W_{QK},B_{\mathrm{sink}}\rangle}{\sqrt d}
\ge
C^2\kappa,
\]
where
\[
B_{\mathrm{sink}}
:=
\sum_{c=1}^C
c_c
\left(
\lambda(C+1)\bar d_c-\lambda\sum_{k=1}^C\bar d_k-b_{\mathrm{BOS}}
\right)^\top .
\]

We now upper-bound the operator norm of $B_{\mathrm{sink}}$. Let $s:=\sum_{k=1}^C e_k$. Using the matrix $U$ of normalized special vectors, we can write
\[
B_{\mathrm{sink}}
=
d\,U N_{\mathrm{sink}}U^\top,
\]
where
\[
N_{\mathrm{sink}}
:=
\sum_{c=1}^C
e_{C+c}
\left(
\lambda((C+1)e_c-s)-e_0
\right)^\top .
\]
Therefore,
\[
\|B_{\mathrm{sink}}\|_2
\le
d\,\|U\|_2^2\,\|N_{\mathrm{sink}}\|_2
\le
d(1+2C\phi)\|N_{\mathrm{sink}}\|_2.
\]

It remains to compute $\|N_{\mathrm{sink}}\|_2$. The nonzero rows of $N_{\mathrm{sink}}$ are
\[
r_c:=\lambda((C+1)e_c-s)-e_0,
\qquad c=1,\dots,C.
\]
A direct calculation gives
\[
\langle r_c,r_c\rangle
=
1+\lambda^2(C^2+C-1),
\]
and, for $c\neq c'$,
\[
\langle r_c,r_{c'}\rangle
=
1-\lambda^2(C+2).
\]
Thus $N_{\mathrm{sink}}N_{\mathrm{sink}}^\top$ is a $C\times C$ matrix with constant diagonal and constant off-diagonal entries. Its eigenvalue on the span of $\mathbf 1_C$ is
\[
C+\lambda^2,
\]
and its eigenvalue on the orthogonal complement of $\mathbf 1_C$ is
\[
\lambda^2(C+1)^2.
\]
Hence
\[
\|N_{\mathrm{sink}}\|_2
=
s_\lambda
:=
\max\{\lambda(C+1),\sqrt{C+\lambda^2}\}.
\]
Therefore,
\[
\|B_{\mathrm{sink}}\|_2
\le
d(1+2C\phi)s_\lambda.
\]

By nuclear/operator norm duality,
\[
C^2\kappa\sqrt d
\le
\langle W_{QK},B_{\mathrm{sink}}\rangle
\le
\|W_{QK}\|_*\|B_{\mathrm{sink}}\|_2.
\]
Using the operator-norm bound above, we obtain
\[
\|W_{QK}\|_*
\ge
\frac{C^2\kappa}{\sqrt d\,(1+2C\phi)s_\lambda}.
\]
The variational form of the nuclear norm, equivalently the balanced factorization bound for
$W_{QK}=W_Q^\top W_K$, gives
\[
\|W_Q\|_F^2+\|W_K\|_F^2
\ge
2\|W_{QK}\|_*.
\]
Therefore
\[
L_{\mathrm{sink}}
\ge
\frac{2C^2\kappa}{\sqrt d\,(1+2C\phi)s_\lambda}.
\]

Finally, under the small-geometry assumptions, $s_\lambda\le C+1$ and $1+2C\phi\le5/4$. Hence
\[
\frac{2C^2}{(1+2C\phi)s_\lambda}
\ge
\frac{2C^2}{(5/4)(C+1)}
\ge
\frac C2.
\]
Therefore
\[
L_{\mathrm{sink}}
\ge
\frac{\kappa C}{2\sqrt d},
\]
which completes the proof.
\end{proof}
\section{Further experimental evidence}
\subsection{Oversmoothing}\label{appx:exp-oversmoothing}
\paragraph{Experimental details.}
We consider the same experimental setting as in Section~\ref{sec:oversmoothing}: several trained transformer models (LLaMA3-8B, Gemma-7B, GPT2-XL, and Mistral-7B) and different context lengths. For each model, context length and dataset, we average results across all heads and over a batch of 50 sequences. To compute the theoretical quantities, we estimate the parameters of the token distribution for each context length from a batch of 50 sequences. 
For each attention head, $W$ corresponds to the composition of the learnable normalization scaling, the value matrix, and the output projection matrix corresponding to the head.

Consider a layer $l$ with $n_l$ heads. We denote by $\operatorname{ATTN}_l^h(Z\topl)$ the attention update corresponding to head $h$, and by $\operatorname{ATTN}_l(Z\topl)$ the full attention update of layer $l$, obtained by concatenating the contributions of all heads.

In our experiments, we use two slightly different quantities depending on the goal. 
When comparing the empirical cosine similarity with our theoretical approximation for a single head $h$, we use the token matrix $X+\operatorname{ATTN}_l^h(Z\topl)$, since this is precisely the quantity captured by the theory.

However, when empirically assessing the effect of attention on oversmoothing after the skip connection, a single-head update must be rescaled to be comparable to the full layer update; otherwise, its contribution would be too small relative to the residual stream. For this reason, we use

\begin{equation}\label{eq:normATTN}X+s_l\operatorname{ATTN}_l^h(Z\topl)\text{ , where } s_l=\frac{ \frac{1}{T}\sum_{i=1}^T\norm{\operatorname{ATTN}_l(Z\topl_i) }}{ \frac{1}{n_l}\sum_{h=1}^{n_l}\frac{1}{T}\sum_{i=1}^T\norm{\operatorname{ATTN}_l^h(Z\topl_i)}}.\end{equation}
Thus, $s_l$ rescales each head update by the ratio between the average norm of the full layer update and the average norm of a single-head update in that layer. 

In several experiments, we also need a measure of how dense an attention matrix is. To this end, we define the \emph{uniformity coefficient} of a matrix $A\in\R^{T\times T}$ as 
\begin{equation}\label{eq:unifcoeff} u(A)=\frac{1}{T}(1+\sum_{i=2}^{T}(-\sum_{j=1}^{i}A_{ij}\log A_{ij})/\log i)\in[0,1],
\end{equation} 
that is, the average normalized row-wise entropy of $A$. Values of $u(A)$ close to 0 correspond to sparse attention, while values close to 1 indicate attention closer to uniform.

\paragraph{Token representations tend to exhibit positive cosine similarity across layers.}
Our theoretical assumptions on the token representation distribution are motivated by the empirical observation that token representations tend to have positive cosine similarity across layers. Figure~\ref{fig:avgcossim-across-layers} provides evidence for this behavior. For each layer, it reports the average cosine similarity of the token matrix both before entering the layer and after adding the corresponding attention update to the residual stream. In all cases, the average cosine similarity remains positive across layers.
\begin{figure}[]
    \centering
    \begin{subfigure}[b]{0.49\textwidth}
        \centering
        {\small LLaMA3-8B, C4\par}
        \vspace{2pt}
        \includegraphics[width=\textwidth]{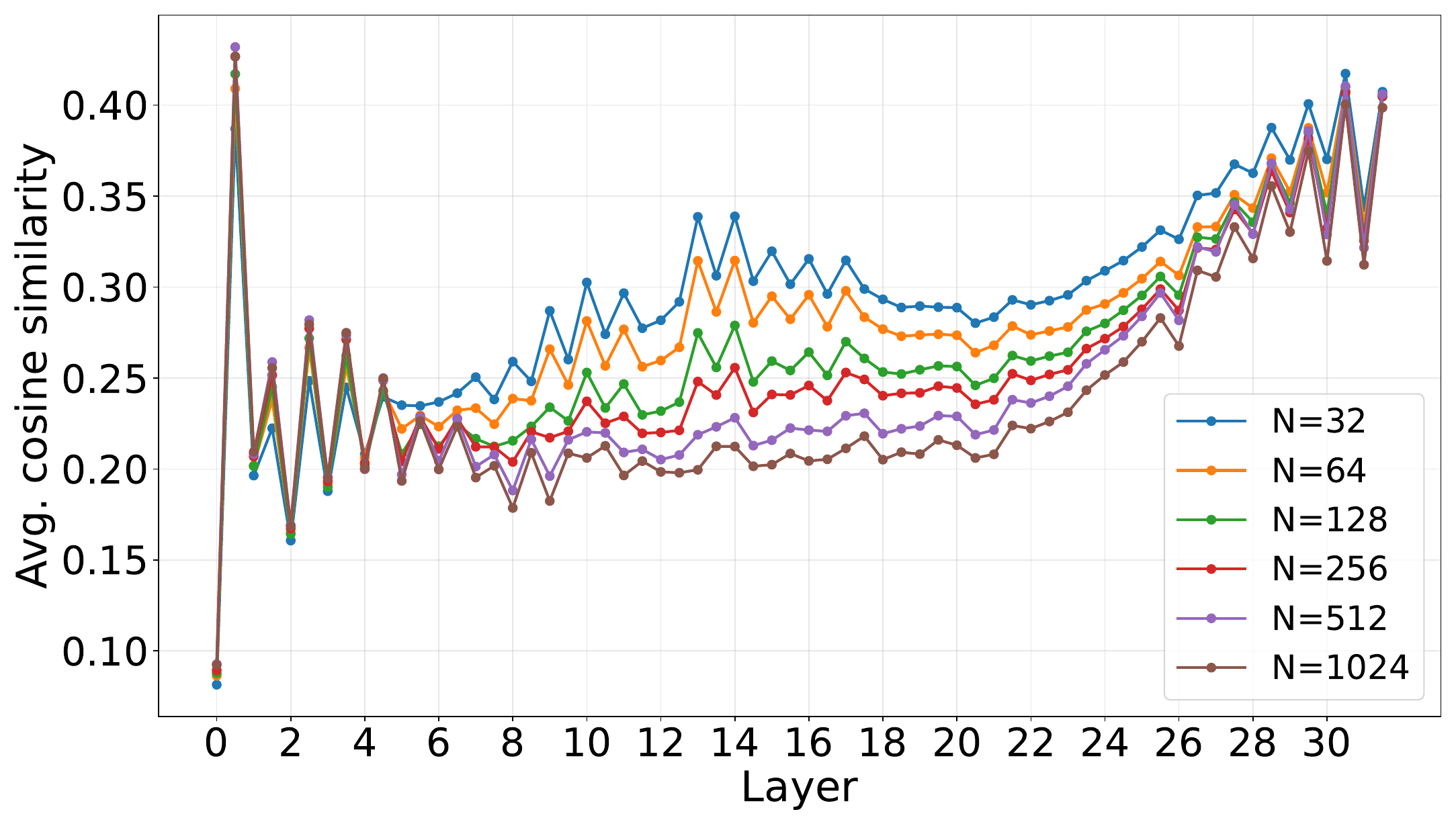}
    \end{subfigure}
    \hfill 
    \begin{subfigure}[b]{0.49\textwidth}
        \centering
        {\small GPT2-XL, CodeParrot\par}
        \vspace{2pt}
        \includegraphics[width=\textwidth]{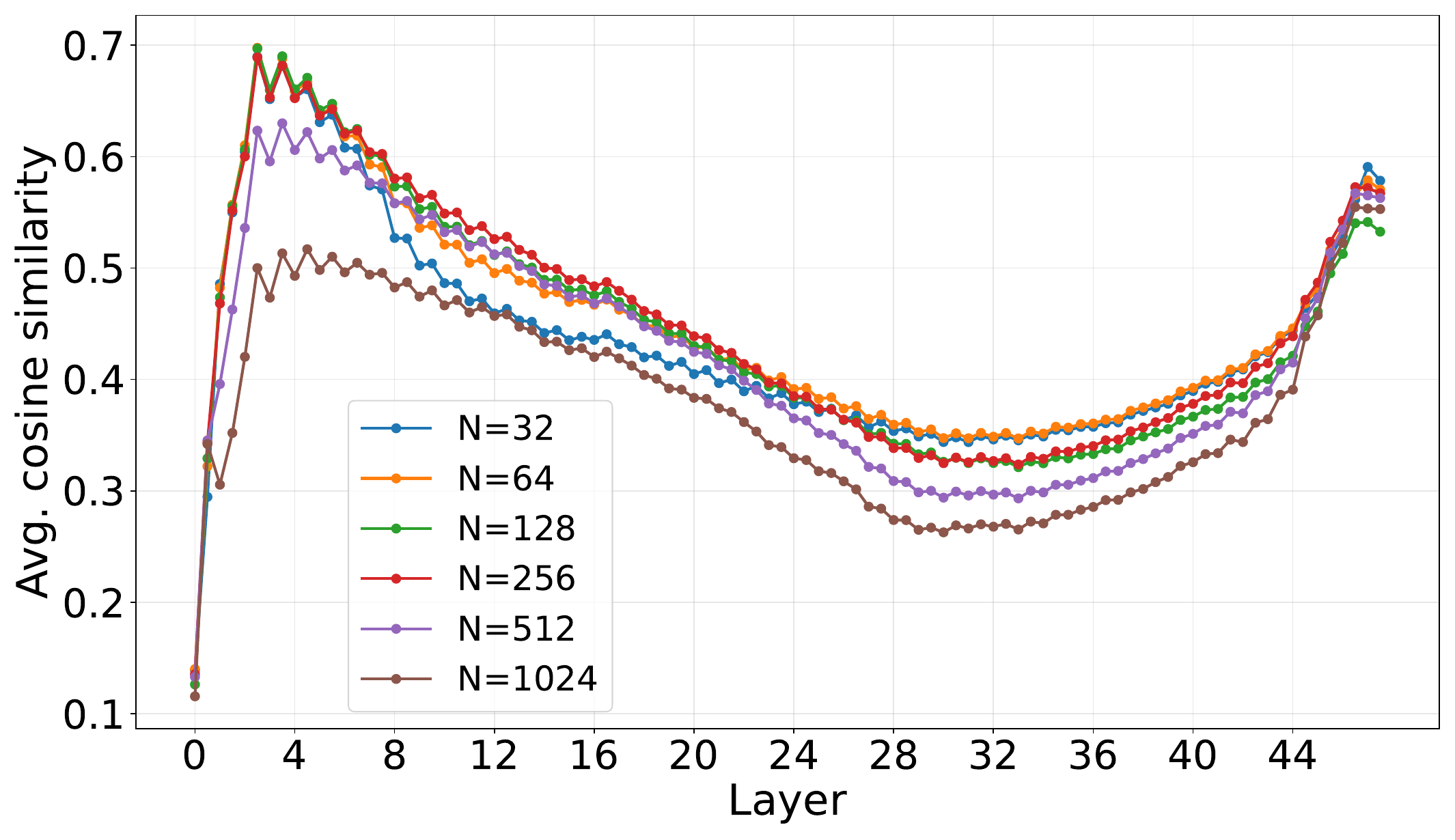}
    \end{subfigure}
     \hfill 
    \begin{subfigure}[b]{0.49\textwidth}
        \centering
        {\small Gemma-7B, WikiText\par}
        \vspace{2pt}
        \includegraphics[width=\textwidth]{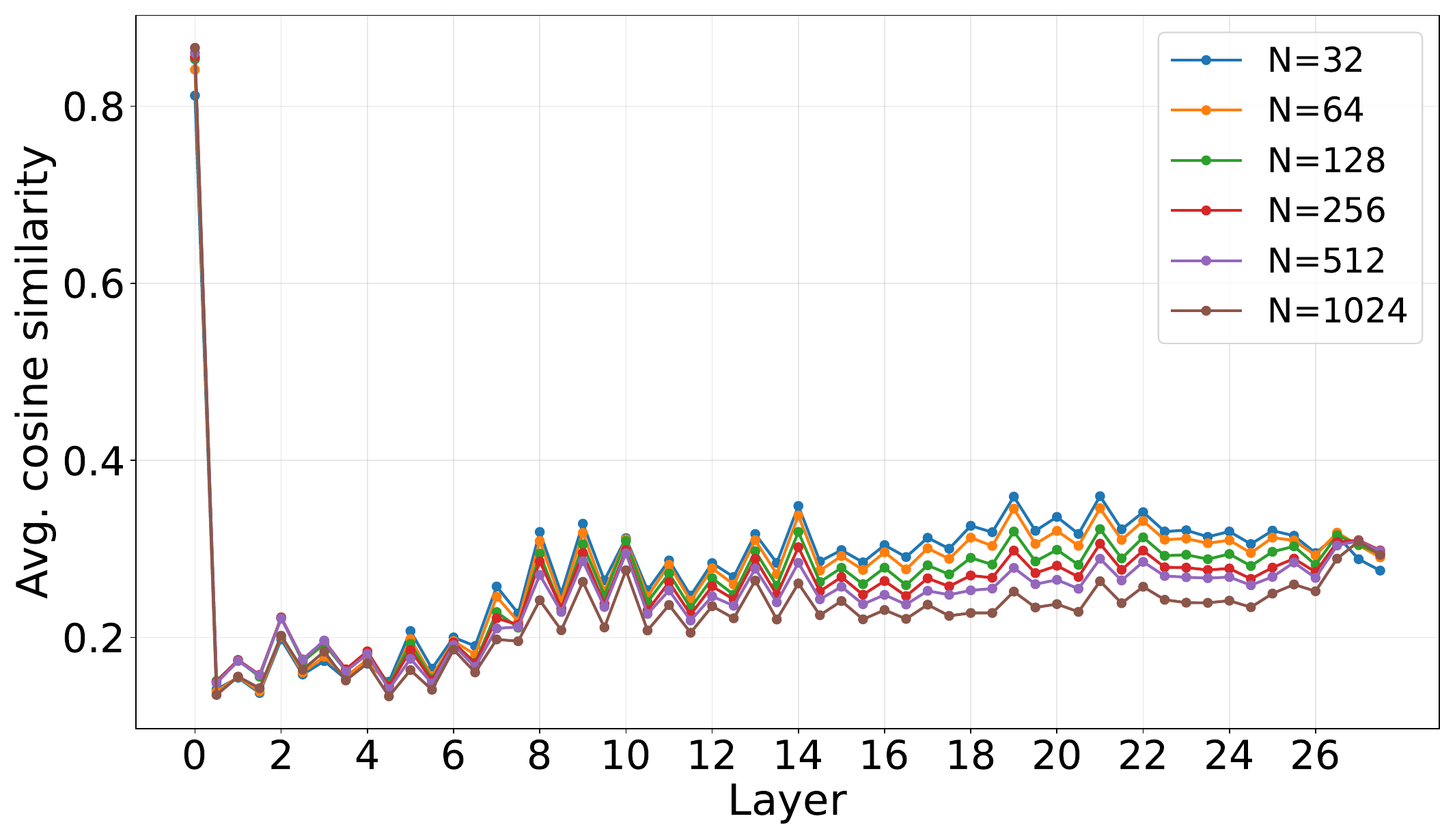}
    \end{subfigure}
    \hfill 
    \begin{subfigure}[b]{0.49\textwidth}
        \centering
        {\small Mistral-7B, C4\par}
        \vspace{2pt}
        \includegraphics[width=\textwidth]{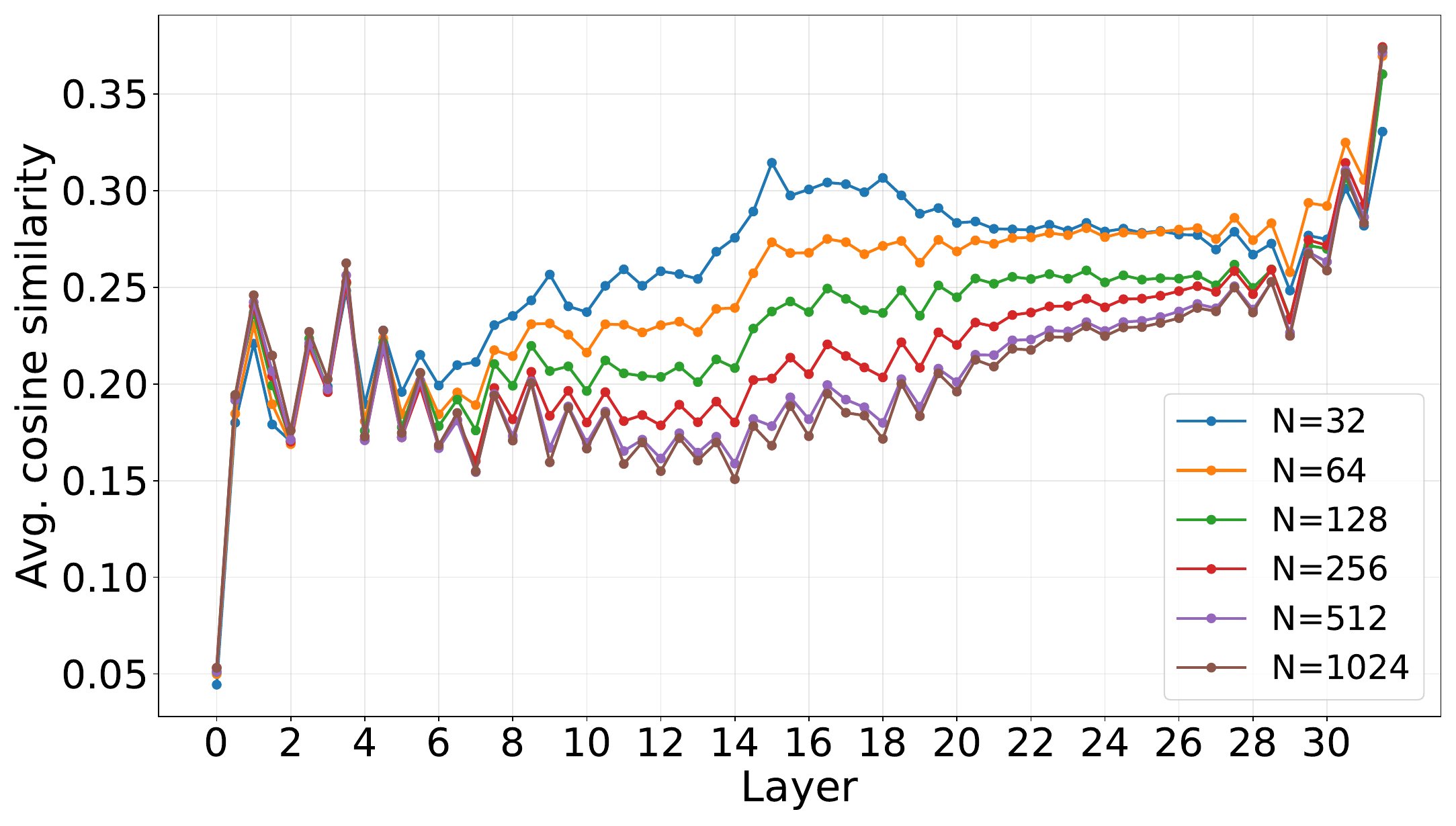}
    \end{subfigure}

    \caption{Evolution of average cosine similarity across layers for different models, datasets, and context lengths. For each layer, the first value corresponds to the token matrix before the layer, and the second to the token matrix after adding the attention update to the residual stream. In all cases, average cosine similarity remains positive across layers.}
    \label{fig:avgcossim-across-layers}
\end{figure}
 
\paragraph{Robustness across models and datasets.} 
\begin{figure}[t]
    \centering
    \begin{subfigure}[b]{0.49\textwidth}
        \centering
        \includegraphics[width=\textwidth]{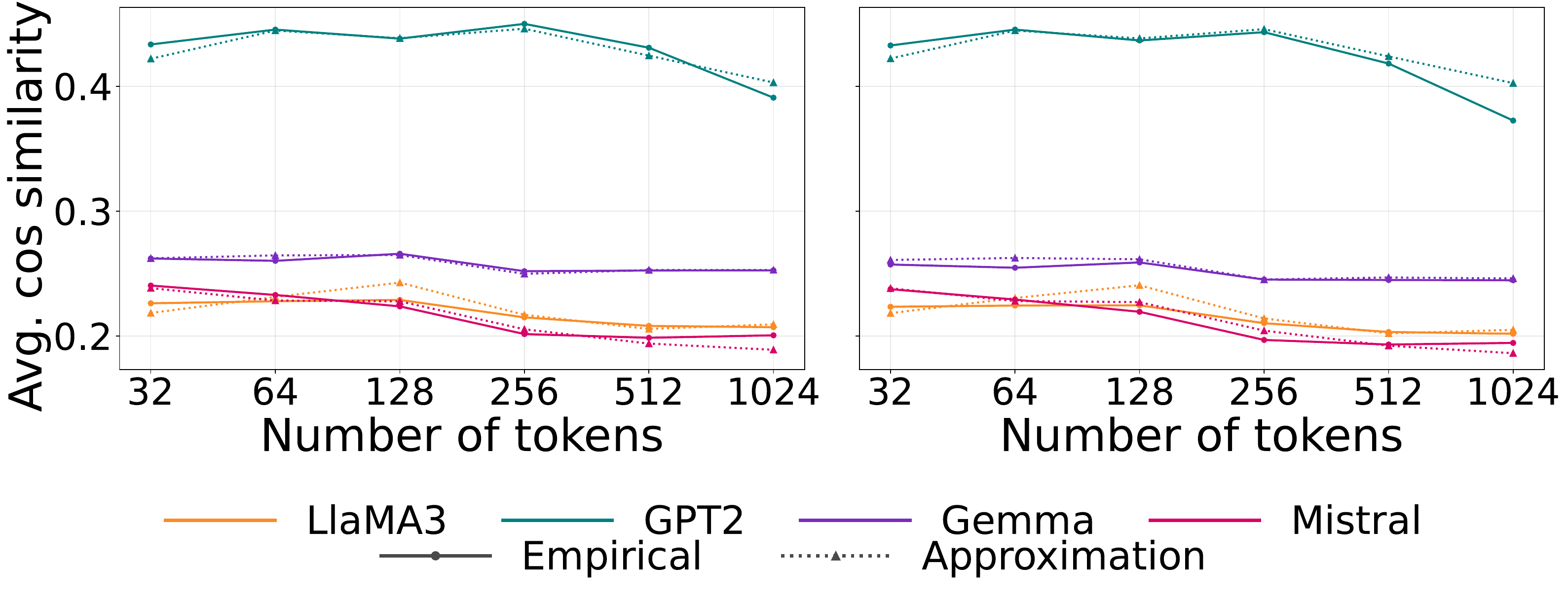}
    \end{subfigure}
    \hfill 
    \begin{subfigure}[b]{0.49\textwidth}
        \centering
        \includegraphics[width=\textwidth]{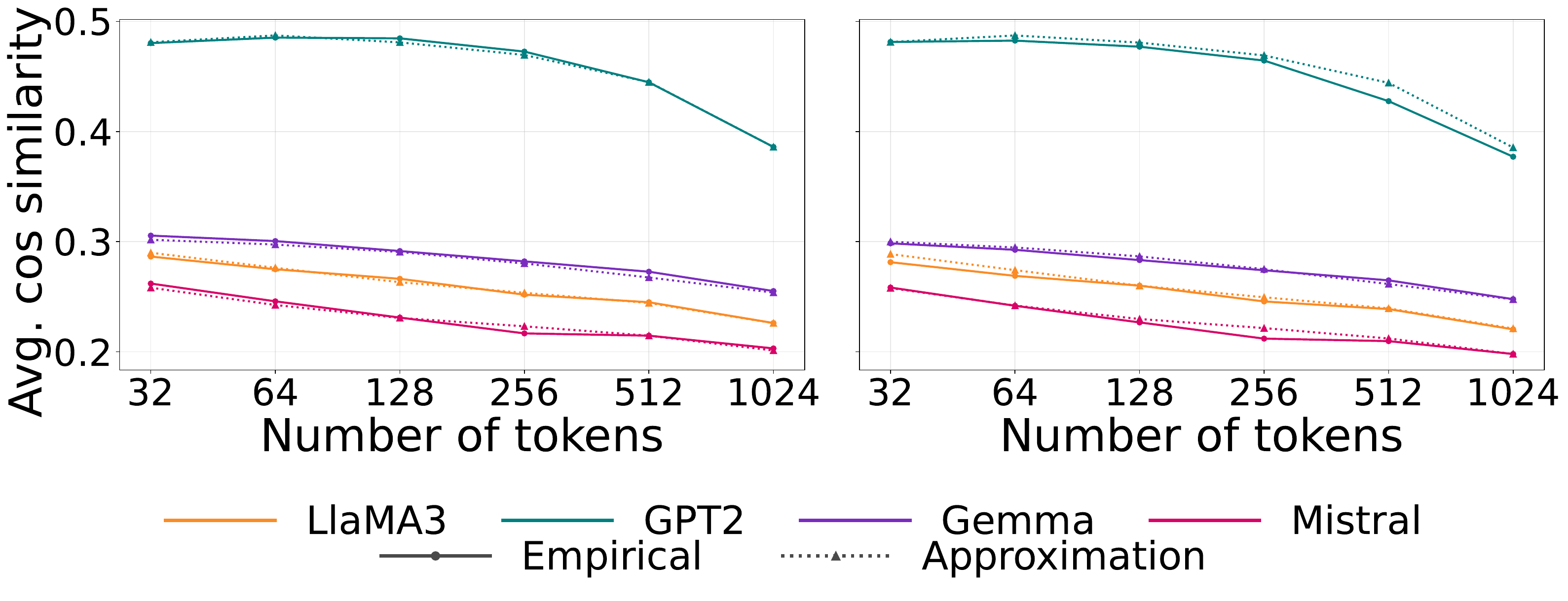}
    \end{subfigure}

    \caption{Empirical average cosine similarity (solid) and theoretical approximation \eqref{eq:approx} (dashed) for $X$ (first plot) and $X+WZ$ (second plot), across models and context lengths, on the CodeParrot dataset (left) and WikiText (right).}
    \label{fig:avgcossim-approx-datasets}
\end{figure}
\begin{figure}[p]
    \centering 
    \begin{subfigure}[b]{0.49\textwidth}
        \centering
        {\small LLaMA3-8B, CodeParrot\par}
        \vspace{2pt}
        \includegraphics[width=\textwidth]{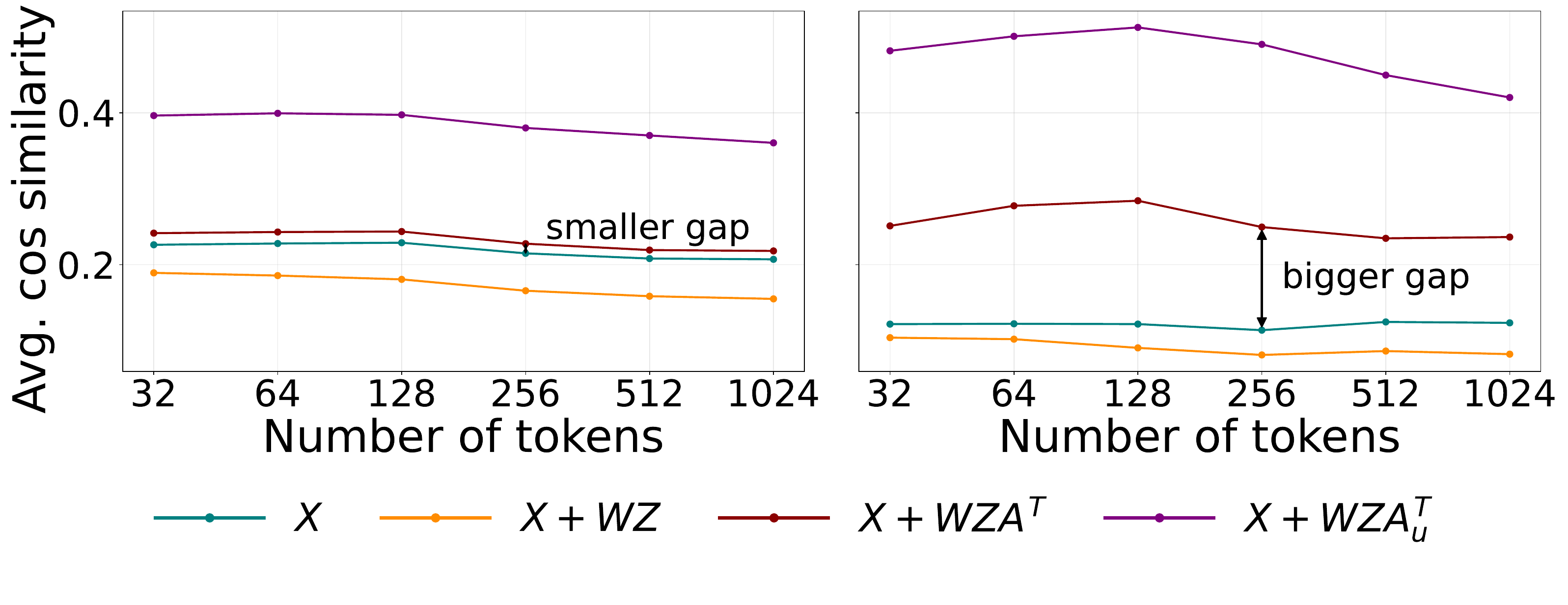}
    \end{subfigure}
    \hfill 
    \begin{subfigure}[b]{0.49\textwidth}
        \centering
        {\small LLaMA3-8B, WikiText\par}
        \vspace{2pt}
        \includegraphics[width=\textwidth]{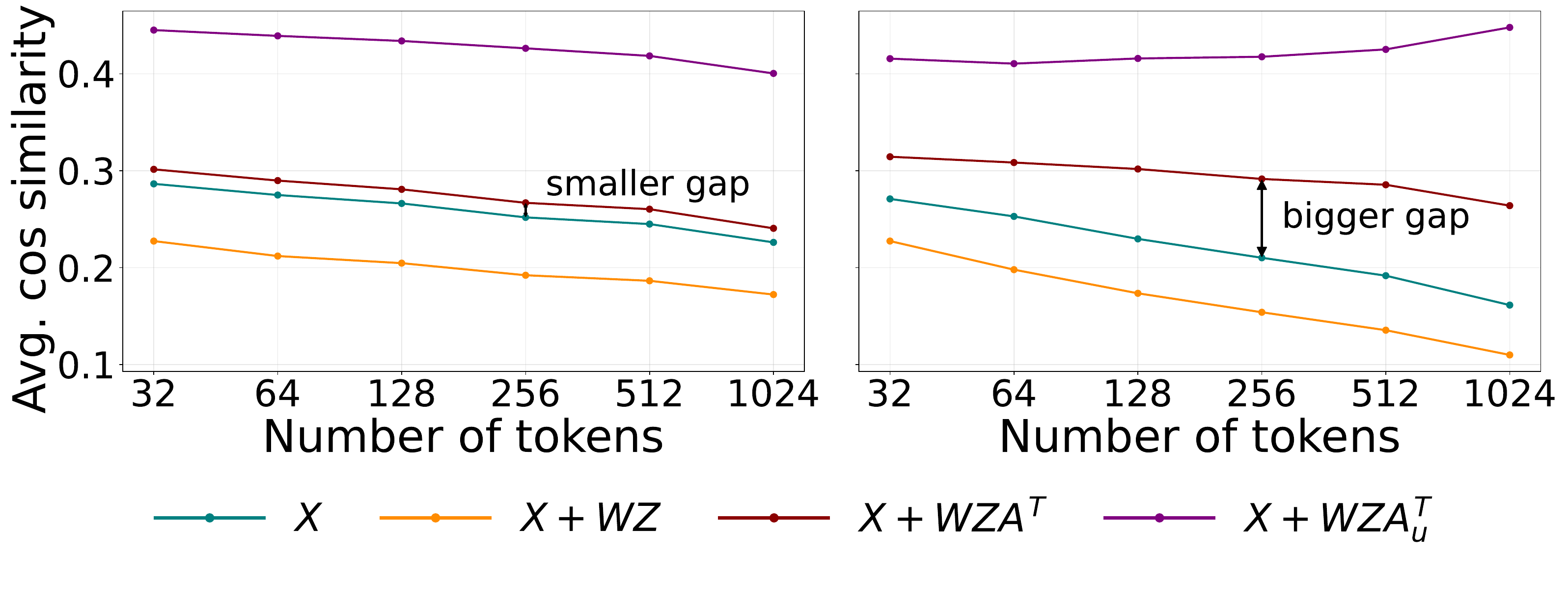}
    \end{subfigure}

    \begin{subfigure}[b]{0.49\textwidth}
        \centering
        {\small GPT2-XL, C4\par}
        \vspace{2pt}
        \includegraphics[width=\textwidth]{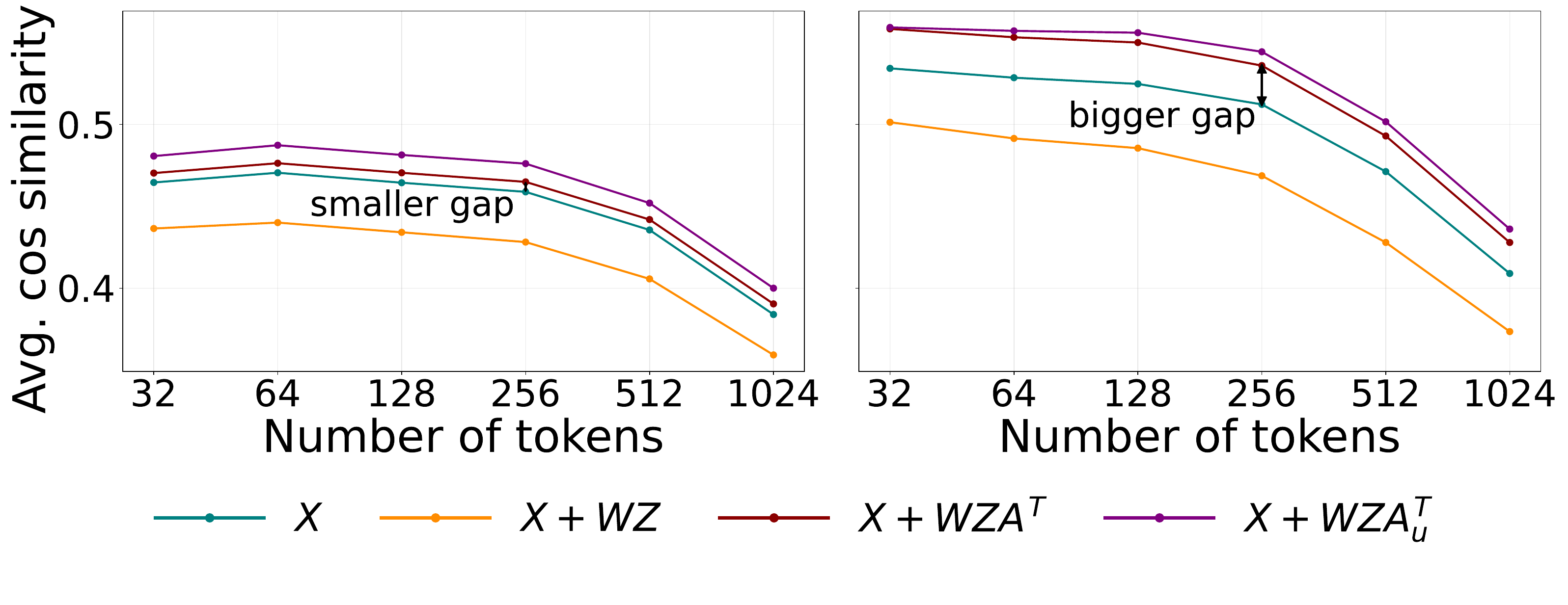}
    \end{subfigure}
    \hfill 
    \begin{subfigure}[b]{0.49\textwidth}
        \centering
        {\small GPT2-XL, CodeParrot\par}
        \vspace{2pt}
        \includegraphics[width=\textwidth]{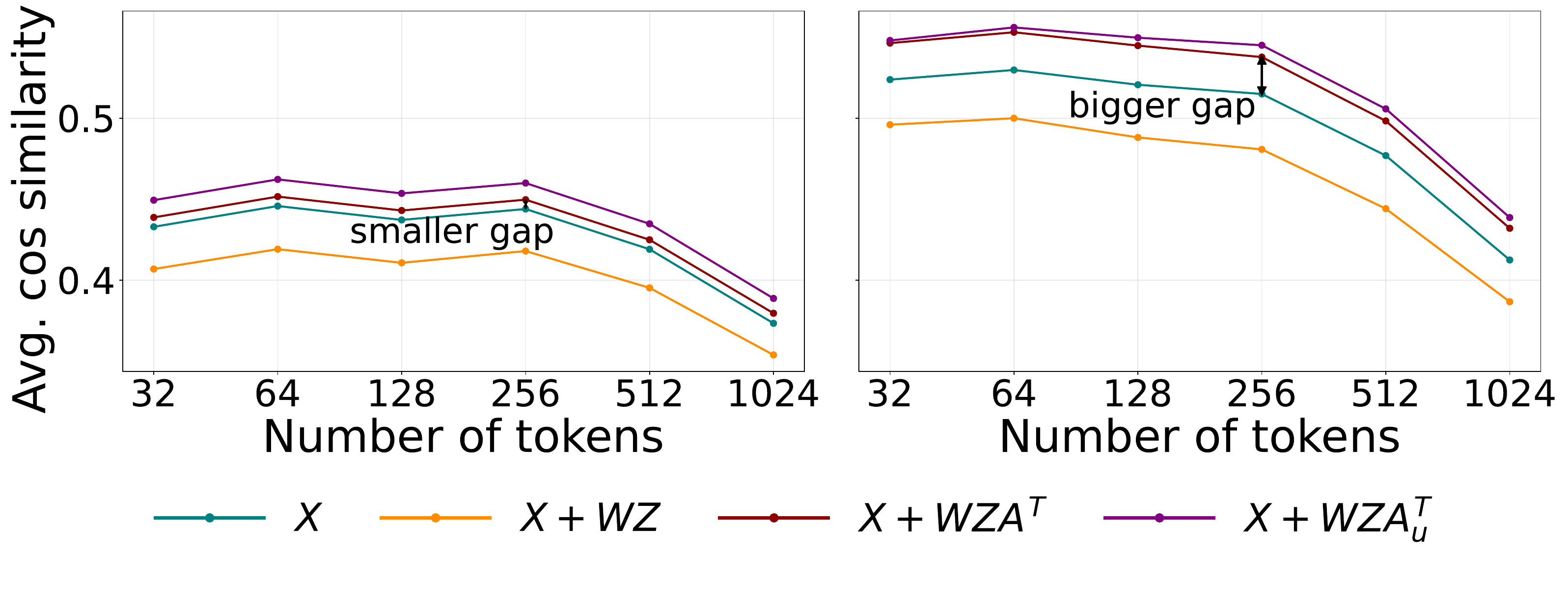}
    \end{subfigure}
    \begin{subfigure}[b]{0.49\textwidth}
        \centering
        {\small GPT2-XL, WikiText\par}
        \vspace{2pt}
        \includegraphics[width=\textwidth]{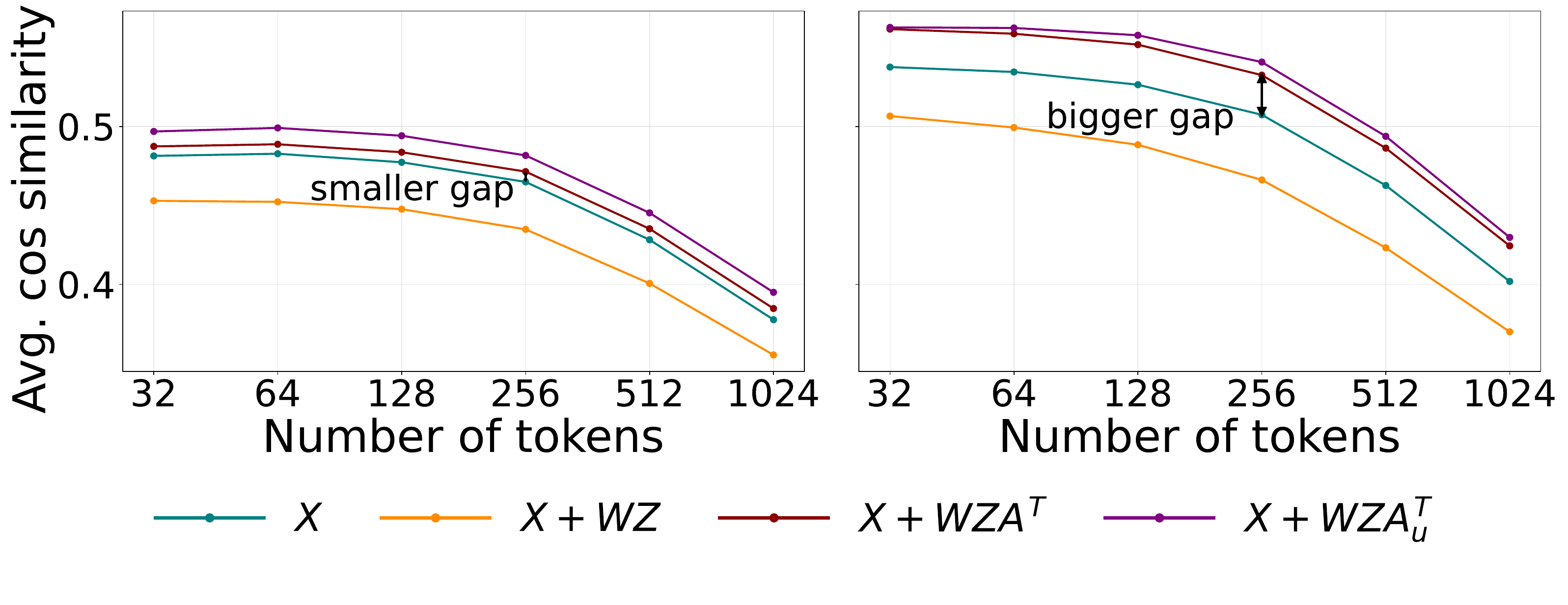}
    \end{subfigure}
    \hfill 
    \begin{subfigure}[b]{0.49\textwidth}
        \centering
        {\small Gemma-7B, C4\par}
        \vspace{2pt}
        \includegraphics[width=\textwidth]{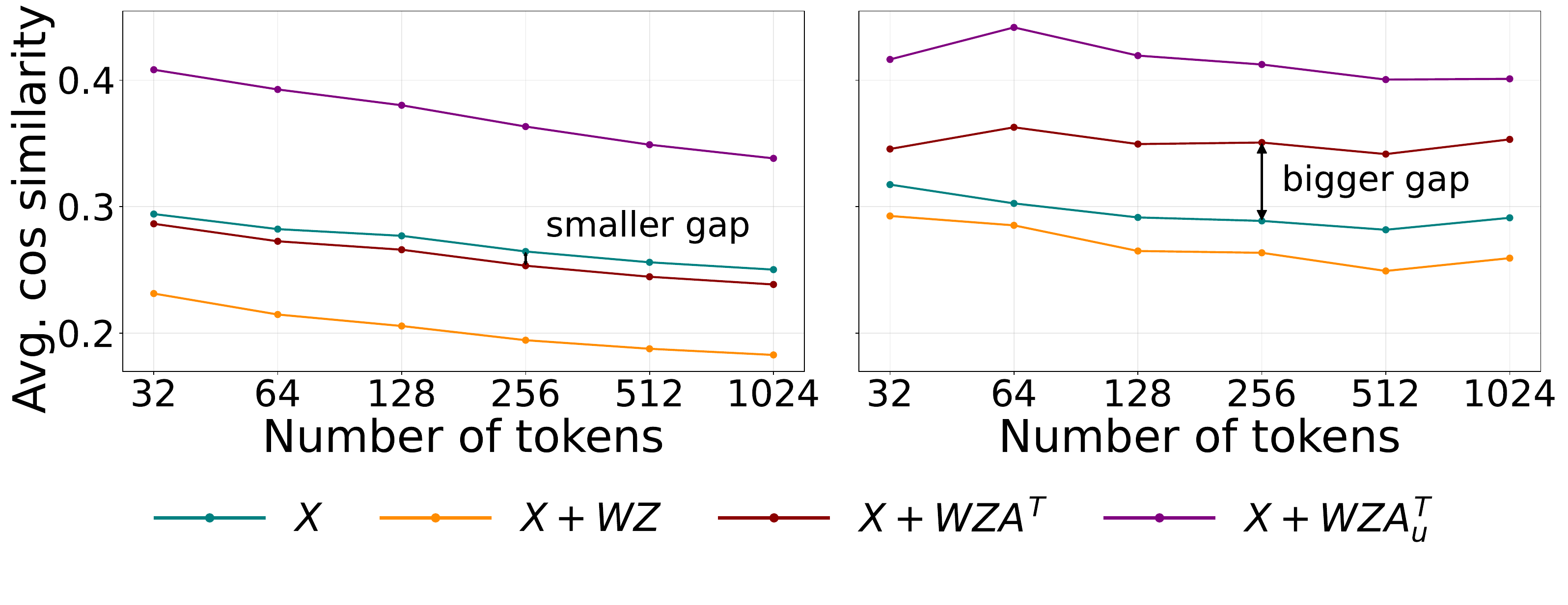}
    \end{subfigure}
    \begin{subfigure}[b]{0.49\textwidth}
        \centering
        {\small Gemma-7B, CodeParrot\par}
        \vspace{2pt}
        \includegraphics[width=\textwidth]{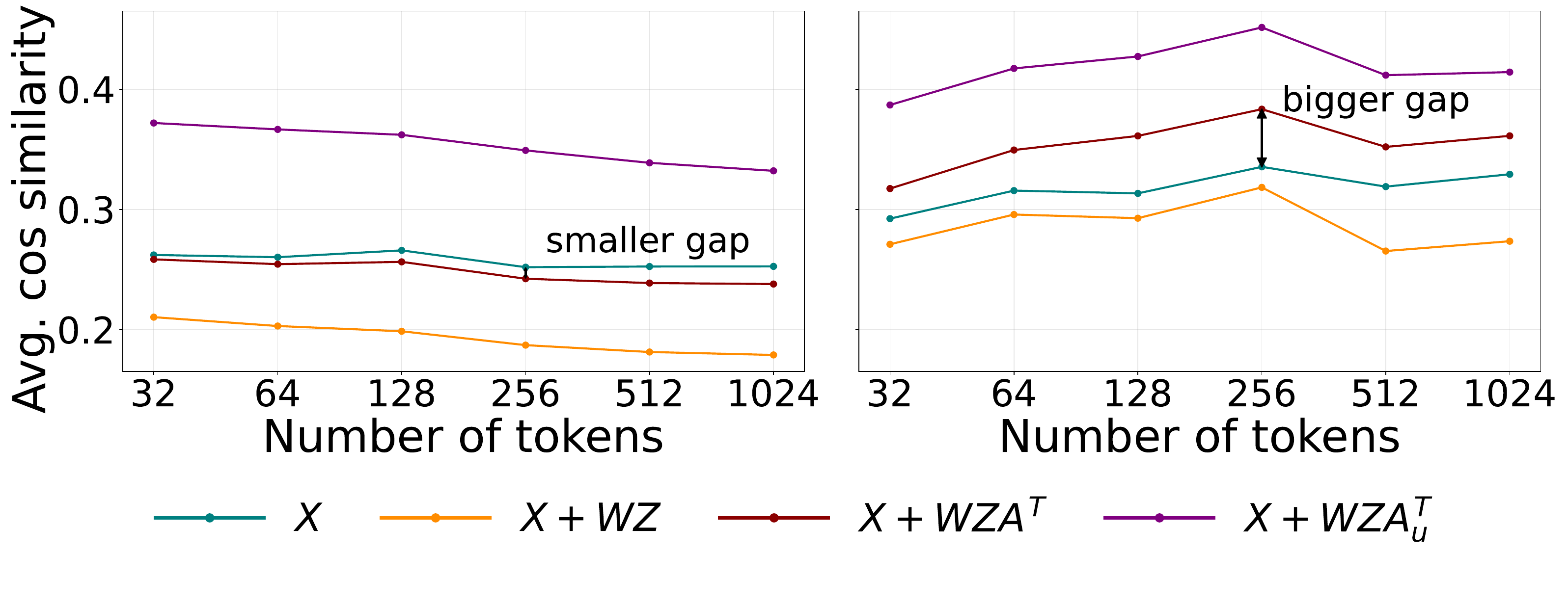}
    \end{subfigure}
    \hfill 
    \begin{subfigure}[b]{0.49\textwidth}
        \centering
        {\small Gemma-7B, WikiText\par}
        \vspace{2pt}
        \includegraphics[width=\textwidth]{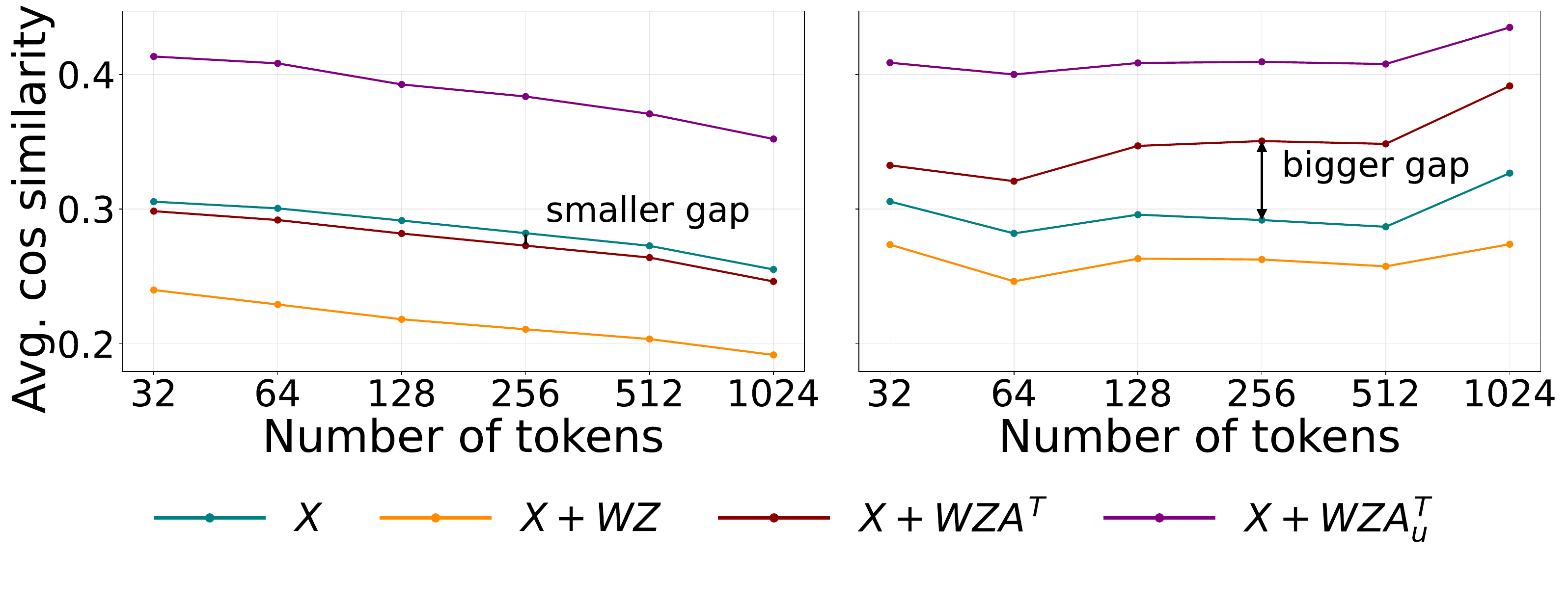}
    \end{subfigure}
    \begin{subfigure}[b]{0.49\textwidth}
        \centering
        {\small Mistral-7B, C4\par}
        \vspace{2pt}
        \includegraphics[width=\textwidth]{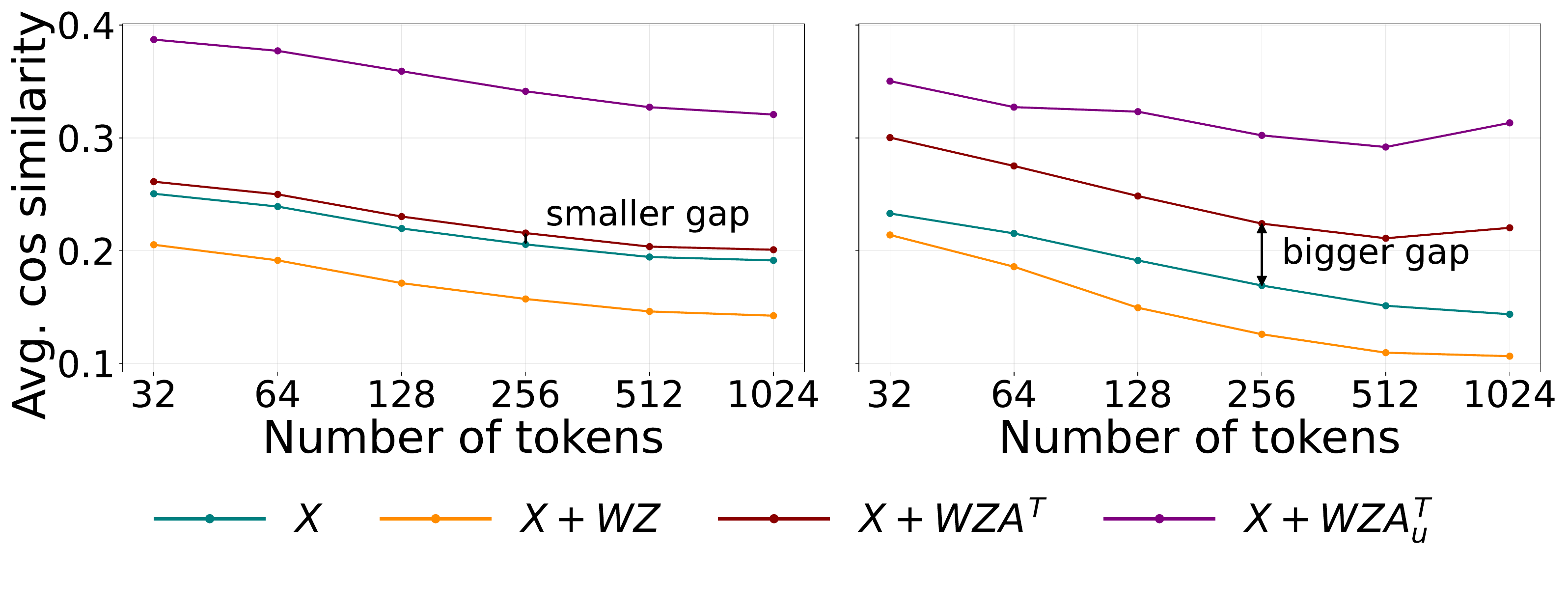}
    \end{subfigure}
    \hfill 
    \begin{subfigure}[b]{0.49\textwidth}
        \centering
        {\small Mistral-7B, CodeParrot\par}
        \vspace{2pt}
        \includegraphics[width=\textwidth]{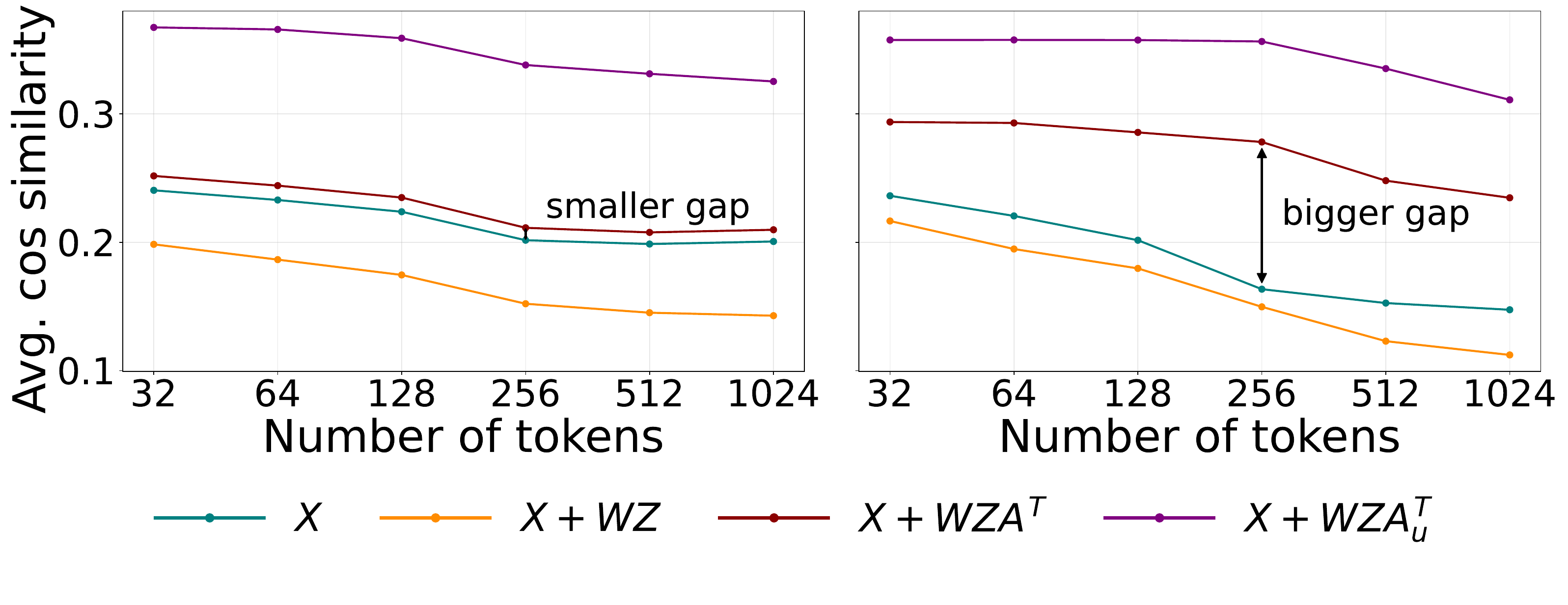}
    \end{subfigure}
    \begin{subfigure}[b]{0.49\textwidth}
        \centering
        {\small Mistral-7B, WikiText\par}
        \vspace{2pt}
        \includegraphics[width=\textwidth]{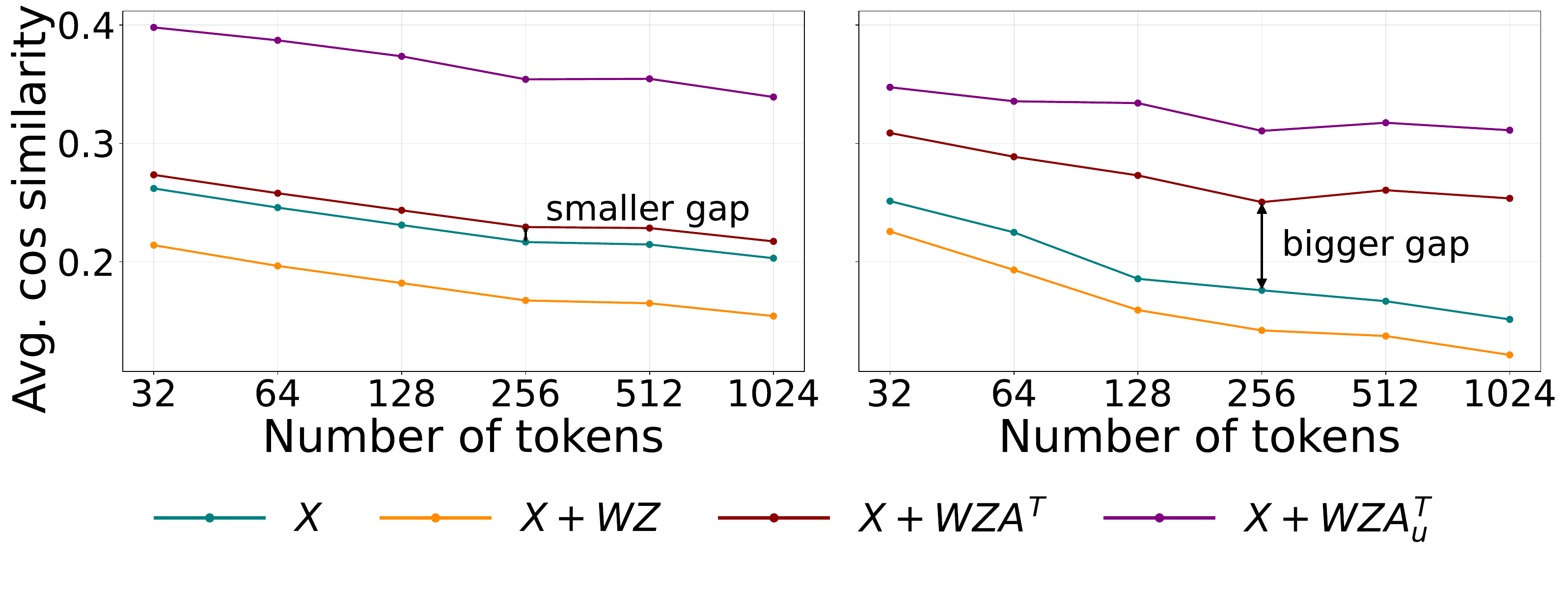}
    \end{subfigure}

    \caption{Empirical average cosine similarity for different models, datasets and context lengths  for different components of the attention step. Results averaged over all heads (first plot) and over heads with uniformity coefficient larger than 0.6 (second plot). Cosine similarity generally increases as the attention matrix becomes denser.}
    \label{fig:avgcossim-empirical-models-datasets}
\end{figure}
Here, we verify that the conclusions of Section~\ref{sec:oversmoothing} hold consistently across both models and datasets. To this end, we repeat the same experiments on LLaMA3-8B, Gemma-7B, GPT2-XL, and Mistral-7B using data from different domains, namely Wikipedia text (WikiText-103~\cite{wikitext}) and code (CodeParrot\footnote{\url{https://huggingface.co/datasets/codeparrot/codeparrot-clean-train}}). First, Figure~\ref{fig:avgcossim-approx-datasets} shows that the approximation provided in Theorem~\ref{thm:incr-lambda-skip} remains close to the empirical average cosine similarity across all models and datasets. Second, Figure~\ref{fig:avgcossim-empirical-models-datasets} shows that we observe the same qualitative behavior described in Section~\ref{sec:oversmoothing} across different models and datasets: \emph{(i)} attention slightly increases cosine similarity on average, and this increase becomes substantially larger when the average is restricted to heads with dense attention patterns; \emph{(ii)} if each head's attention matrix is replaced by causal uniform attention, then the increase in cosine similarity is significantly bigger. GPT2-XL is the only model where the gap with uniform attention is not that big. Note that this is the only model where normalization changes the cosine similarity. We leave for further work to analyze how changing the normalization method affects our analysis.

\begin{figure}[p]
    \centering
    \begin{subfigure}[b]{0.49\textwidth}
        \centering
        {\small LLaMA3-8B, C4\par}
        \vspace{2pt}
        \includegraphics[width=\textwidth]{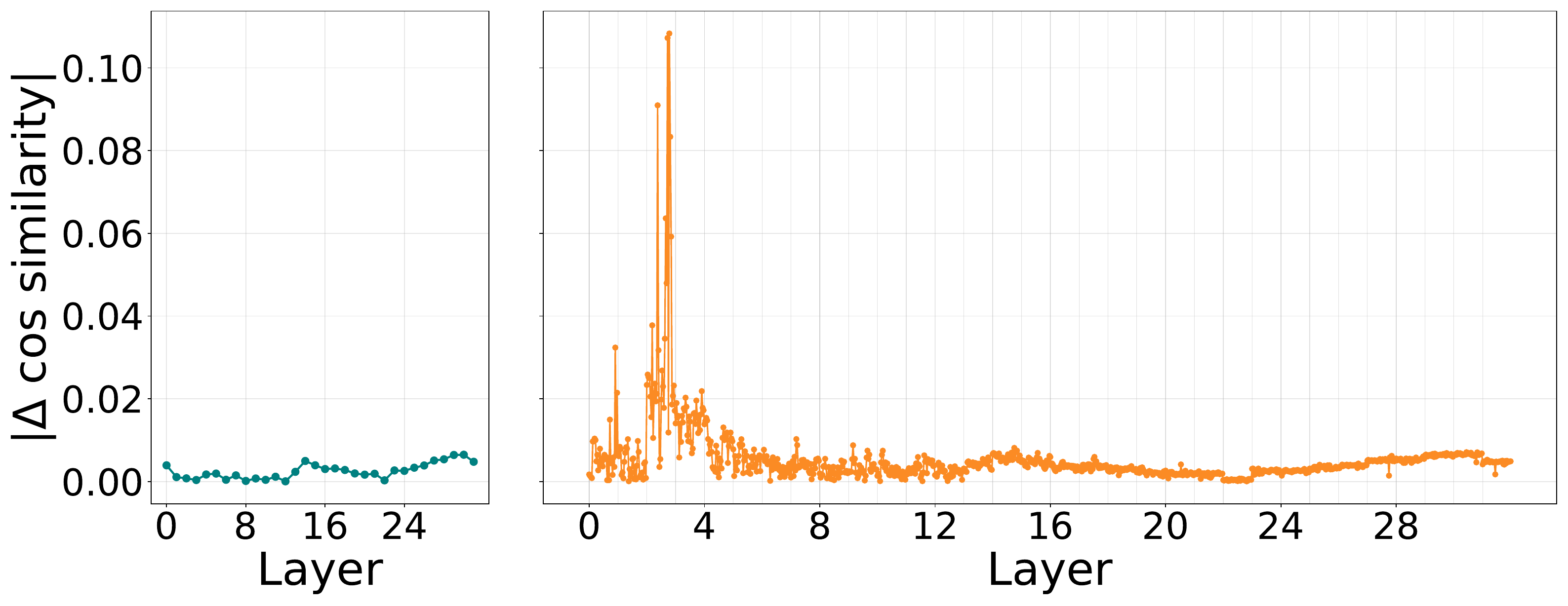}
    \end{subfigure}
    \hfill 
    \begin{subfigure}[b]{0.49\textwidth}
        \centering
        {\small GPT2-XL, CodeParrot\par}
        \vspace{2pt}
        \includegraphics[width=\textwidth]{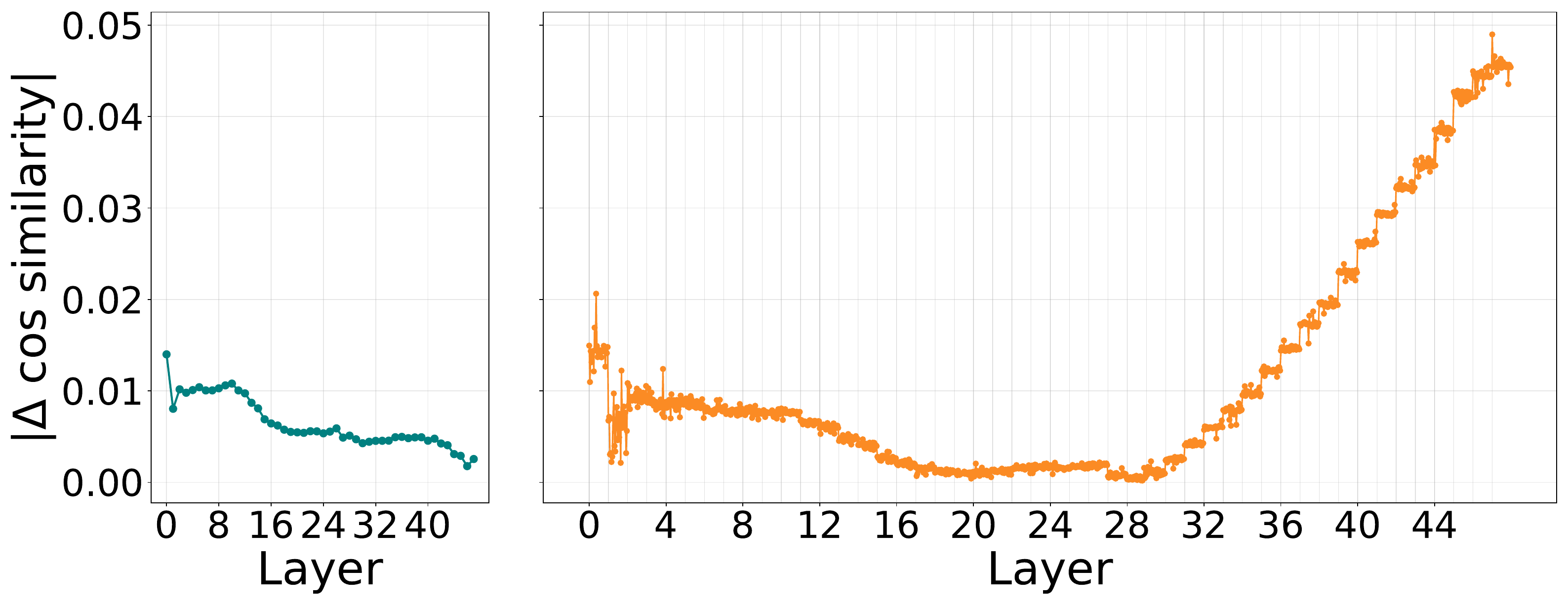}
    \end{subfigure}
     \hfill 
    \begin{subfigure}[b]{0.49\textwidth}
        \centering
        {\small Gemma-7B, WikiText\par}
        \vspace{2pt}
        \includegraphics[width=\textwidth]{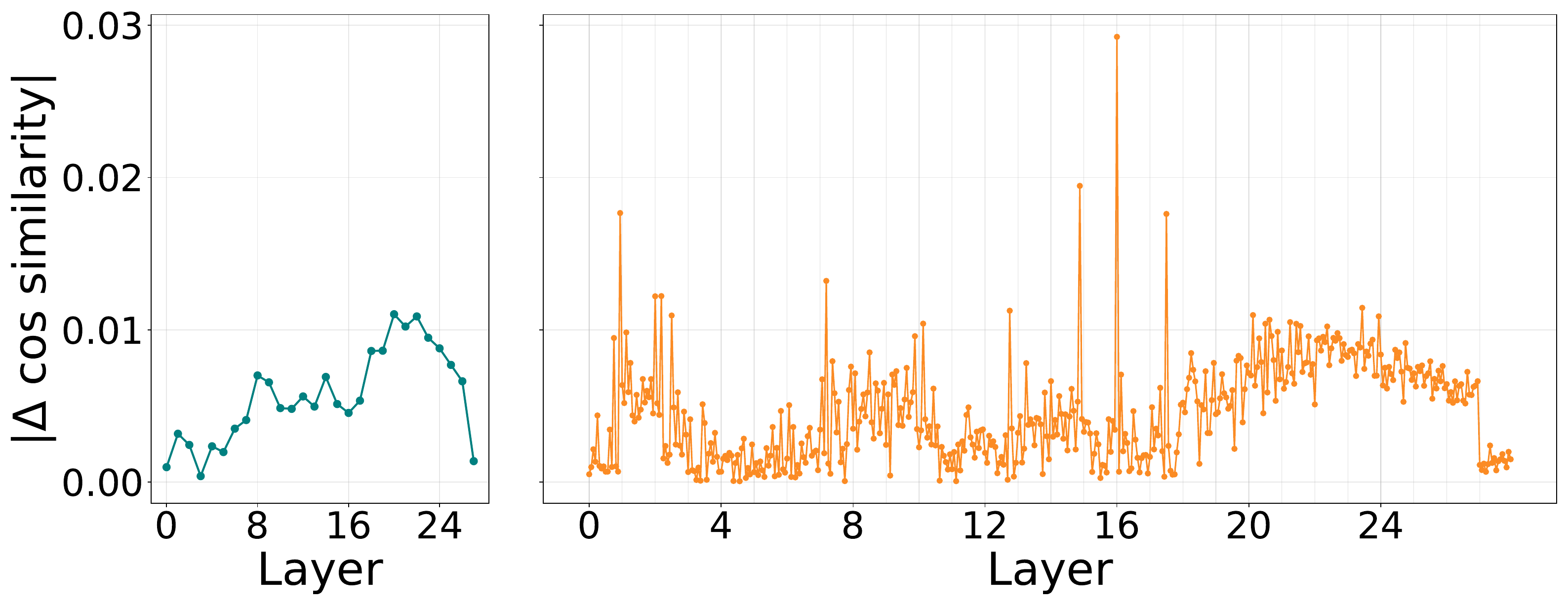}
    \end{subfigure}
    \hfill 
    \begin{subfigure}[b]{0.49\textwidth}
        \centering
        {\small Mistral-7B, C4\par}
        \vspace{2pt}
        \includegraphics[width=\textwidth]{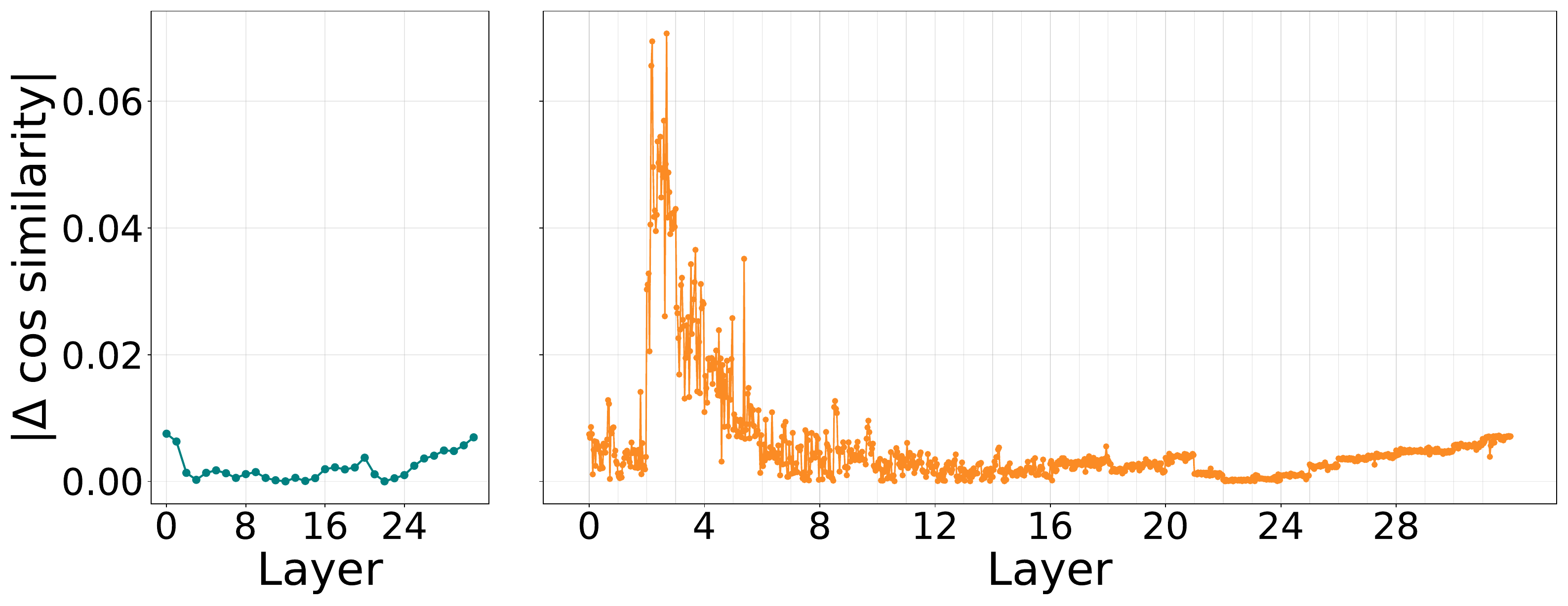}
    \end{subfigure}

    \caption{Absolute difference between the empirical average cosine similarity and the theoretical approximation \eqref{eq:approx} for individual heads, for \(X\) (blue plot) and \(X+WZ\) (orange plot), across different models and datasets at context length 512. The x-axis indexes the heads in order, from the first to the last layer.}
    \label{fig:cossim-approx-ind-heads}
\end{figure}

\begin{figure}[p]
    \centering
    \begin{subfigure}[b]{0.49\textwidth}
        \centering
        {\small LLaMA3-8B, C4\par}
        \vspace{2pt}
        \includegraphics[width=\textwidth]{figures/c4_llama_difference_theorem_cond_N_512.pdf}
    \end{subfigure}
    \hfill 
    \begin{subfigure}[b]{0.49\textwidth}
        \centering
        {\small GPT2-XL, CodeParrot\par}
        \vspace{2pt}
        \includegraphics[width=\textwidth]{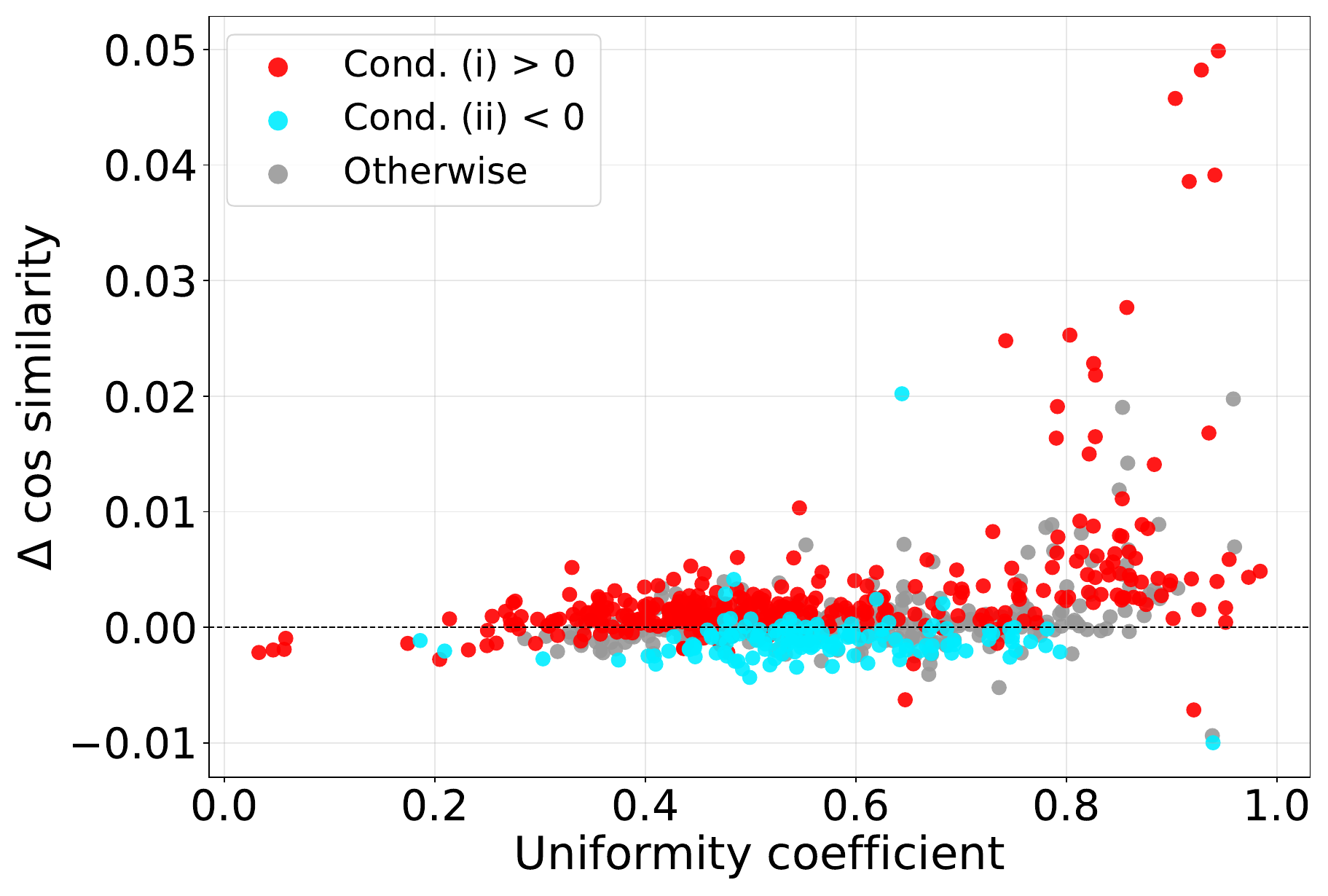}
    \end{subfigure}
     \hfill 
    \begin{subfigure}[b]{0.49\textwidth}
        \centering
        {\small Gemma-7B, WikiText\par}
        \vspace{2pt}
        \includegraphics[width=\textwidth]{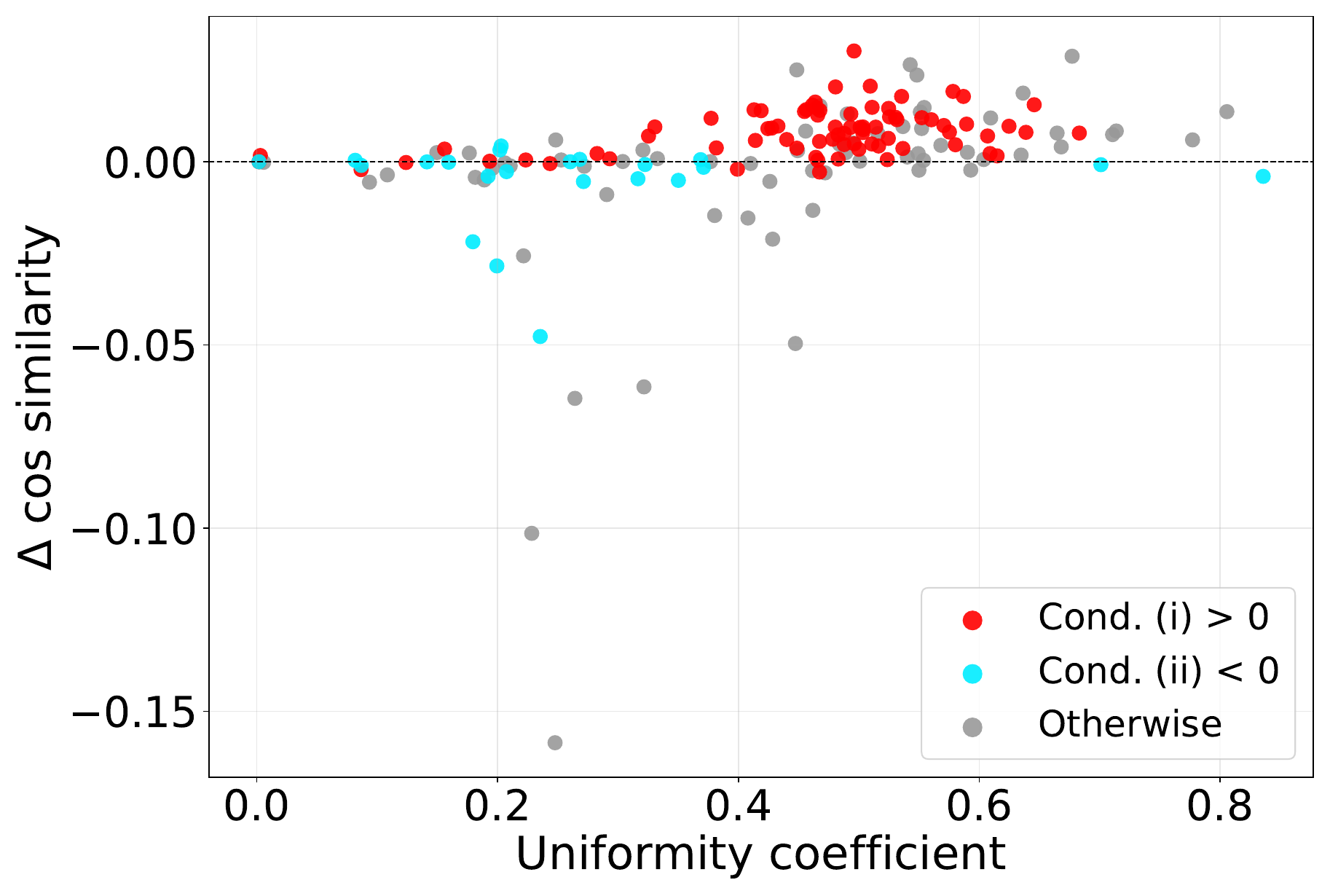}
    \end{subfigure}
    \hfill 
    \begin{subfigure}[b]{0.49\textwidth}
        \centering
        {\small Mistral-7B, C4\par}
        \vspace{2pt}
        \includegraphics[width=\textwidth]{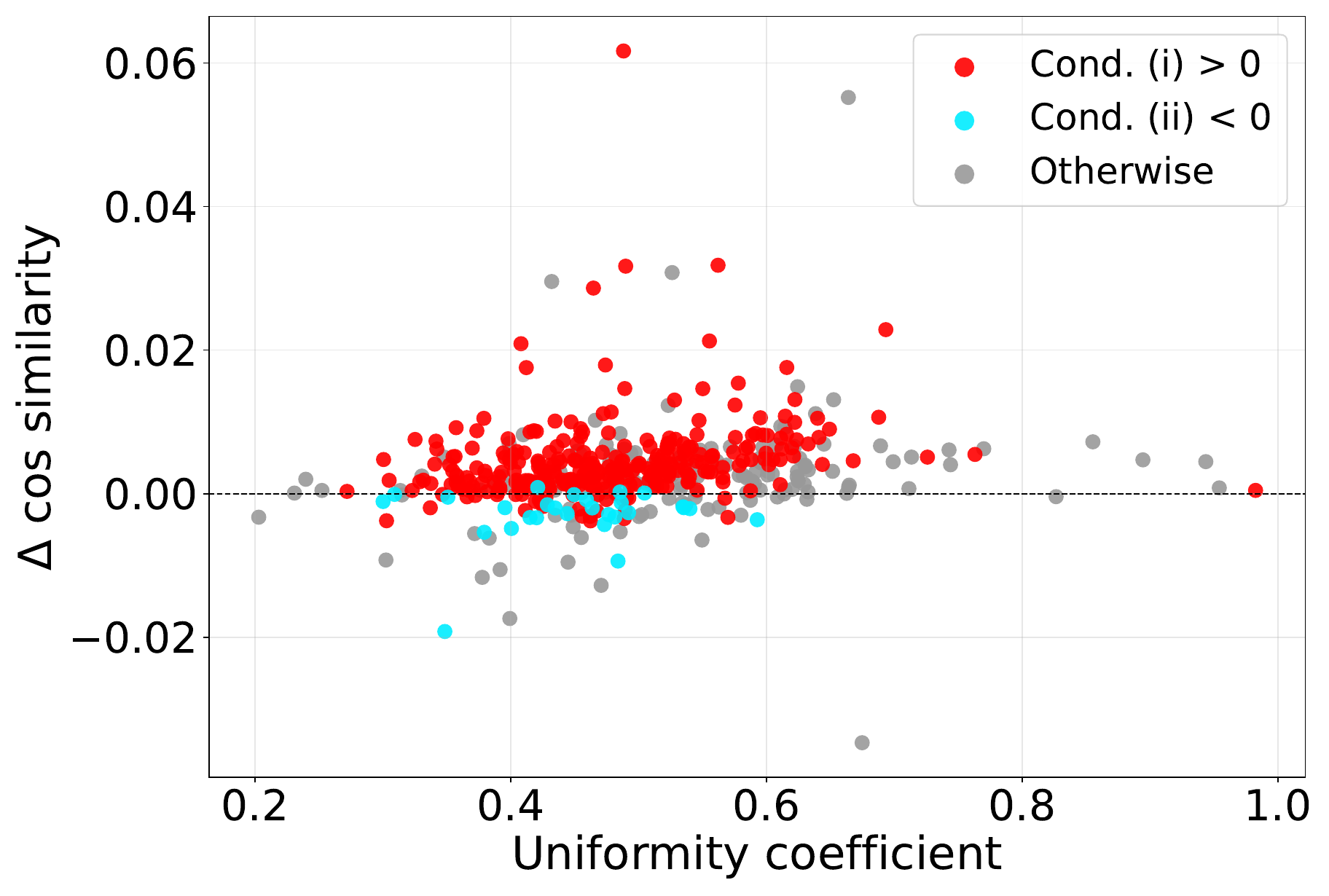}
    \end{subfigure}

   \caption{\(\operatorname{avg.\ cos\ sim}(X+WZA^\top)-\operatorname{avg.\ cos\ sim}(X+WZ)\) for each head of different models and datasets at context length 512. Heads satisfying \(\beta\Tr(BW)>0\) (red) tend to yield positive values, whereas heads satisfying \(\beta\Tr(BW)+\Tr(BW^\top W)<0\) (blue) tend to have negative values. Heads satisfying neither condition are shown in gray.}
    \label{fig:trace-cond-theorem}
\end{figure}

\paragraph{Approximation is accurate for individual heads.} 

We verify that the approximation provided by~\eqref{eq:approx-cos-sim} remains close to the empirical average cosine similarity not only after averaging across heads, but also for most individual heads. Figure~\ref{fig:cossim-approx-ind-heads} shows, for different models and datasets and a context length of 512, the absolute difference between the empirical value and its theoretical approximation for each head. Overall, this difference stays close to 0 for many heads and never exceeds $0.1$, indicating that the approximation is accurate for most heads.

\paragraph{How often do the conditions of Theorem~\ref{thm:incr-lambda-skip} hold?}

Here, we examine to what extent Theorem~\ref{thm:incr-lambda-skip} explains the oversmoothing behavior observed in trained transformers. The theorem provides sufficient conditions under which the approximation of the average cosine similarity after adding a head output to the residual stream, $\hat\rho(Y(\lambda))$, is strictly increasing or decreasing in $\lambda$. We study how often these conditions are satisfied in practice, and whether they are consistent with the token mixing observed empirically.

To this end, we consider the changes in average cosine similarity between different attention steps: (i) $\operatorname{avg. cos sim}(X+WZA^T)-\operatorname{avg. cos sim}(X+WZ)$ (Figure~\ref{fig:trace-cond-theorem}); and (ii) $\operatorname{avg. cos sim}(X+WZA_u^\top)-\operatorname{avg. cos sim}(X+WZA^\top)$ (Figure~\ref{fig:trace-cond-theorem-2}). 

To isolate the effect of sparsity, in our theoretical model, the attention matrix is parametrized as $(1-\lambda)I+\lambda A_{u}$ where $A_u$ is the $T\times T$ uniform causal attention matrix and $\lambda\in[0,1]$. Since a general attention matrix need not lie on this interpolation path, it is not immediate that the theorem should explain the empirical behavior. Nevertheless, one may heuristically interpret $\operatorname{avg. cos sim}(X+WZA^T)-\operatorname{avg. cos sim}(X+WZ)$ as the change in cosine similarity between $\lambda=0$ and some $\lambda_A\in[0,1]$, and $\operatorname{avg. cos sim}(X+WZA_u^\top)-\operatorname{avg. cos sim}(X+WZA^\top)$ as the difference between $\lambda=\lambda_A\in[0,1]$ and $\lambda=1$. In this interpretation, attention matrices with larger uniformity scores should correspond to larger values of $\lambda_A$. Since sink attention patterns are far from this interpolation, we exclude them from this analysis.

Under this interpretation, Theorem~\ref{thm:incr-lambda-skip} predicts the following behavior:\begin{itemize}
    \item Both quantities should be positive when $\beta\Tr(BW)> 0$ ($\hat\rho(Y(\lambda))$ strictly increases with $\lambda$).
    \item Both quantities should be negative when $\beta\Tr(BW)+\Tr(BW^\top W)<0$ ($\hat\rho(Y(\lambda))$ strictly decreases with $\lambda$).
\end{itemize}
The figures show that this prediction is consistent with the behavior of most heads in trained transformers. In particular, most heads satisfying $\beta\Tr(BW)> 0$ (red) exhibit an increase in cosine similarity after applying attention, and this increase remains smaller than under uniform attention. Likewise, for most heads satisfying $\beta\Tr(BW)+\Tr(BW^\top W)<0$ (blue), uniform attention induces less token mixing than the head’s actual attention matrix, which in turn induces less mixing than no attention. For heads that satisfy neither condition (gray), both behaviors are observed, as expected since the theorem makes no prediction in that case. The main exception is Figure~\ref{fig:trace-cond-theorem-2} for GPT2-XL on CodeParrot, where many red heads exhibit less token mixing under uniform attention than under the learned attention matrix; however, these heads are already highly uniform, with uniformity coefficient above 0.8.

Finally, heads satisfying the second condition are rare in all models except GPT2-XL, and even there they remain a minority. This helps explain the dominant empirical trend: in trained transformers, denser attention patterns typically lead to more token mixing.

\paragraph{A closer look into uniformity and oversmoothing.}

Figure~\ref{fig:unif-vs-relChange} shows the relative change in cosine similarity between token representations before each head update ($X$) and after it ($X+WZA^\top$), with heads ordered by uniformity coefficient. Overall, we observe a positive correlation between the increase in cosine similarity and the uniformity of the attention heads. Across all models, this relative change becomes noticeably more positive once the uniformity coefficient exceeds a threshold of roughly 0.6. For some models, however, heads with very high uniformity exhibit smaller changes than heads with lower, though still high, uniformity. Since there are only few such heads, it is difficult to draw  conclusions about this behavior.
\paragraph{Heads where greater uniformity does not increase oversmoothing.}

Finally, Figure~\ref{fig:uncommon-heads} shows several examples of heads that do not follow the typical behavior described in this work. In particular, it includes cases where uniform attention yields less or equal token mixing than the learned head attention, even at relatively small uniformity coefficients (plots 1, 2, and 4), as well as a case where uniform attention produces less token mixing than applying no attention at all (plot 3).

\begin{figure}[t]
    \centering
    \begin{subfigure}[b]{0.49\textwidth}
        \centering
        {\small LLaMA3-8B, C4\par}
        \vspace{2pt}
        \includegraphics[width=\textwidth]{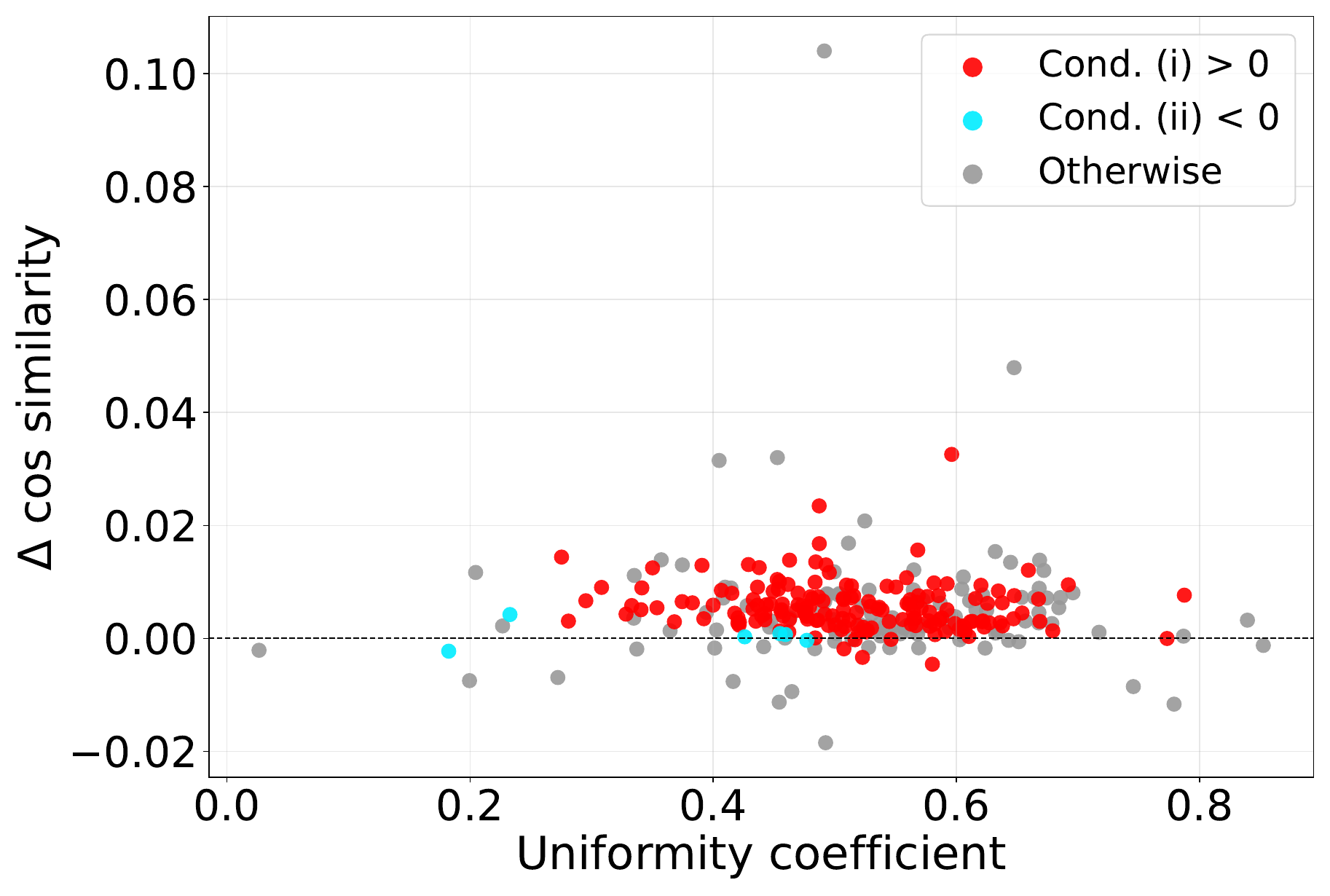}
    \end{subfigure}
    \hfill 
    \begin{subfigure}[b]{0.49\textwidth}
        \centering
        {\small GPT2-XL, CodeParrot\par}
        \vspace{2pt}
        \includegraphics[width=\textwidth]{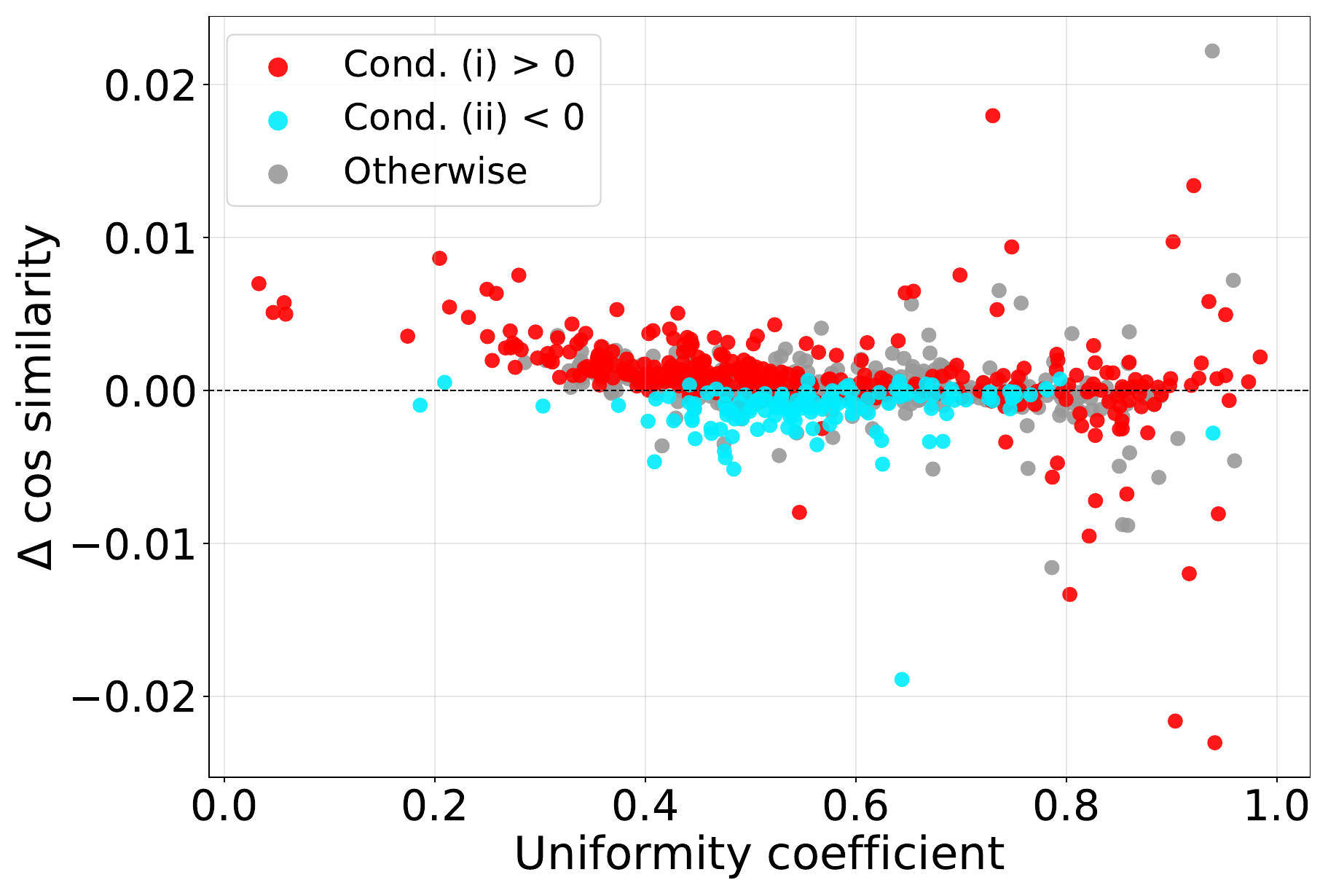}
    \end{subfigure}
     \hfill 
    \begin{subfigure}[b]{0.49\textwidth}
        \centering
        {\small Gemma-7B, WikiText\par}
        \vspace{2pt}
        \includegraphics[width=\textwidth]{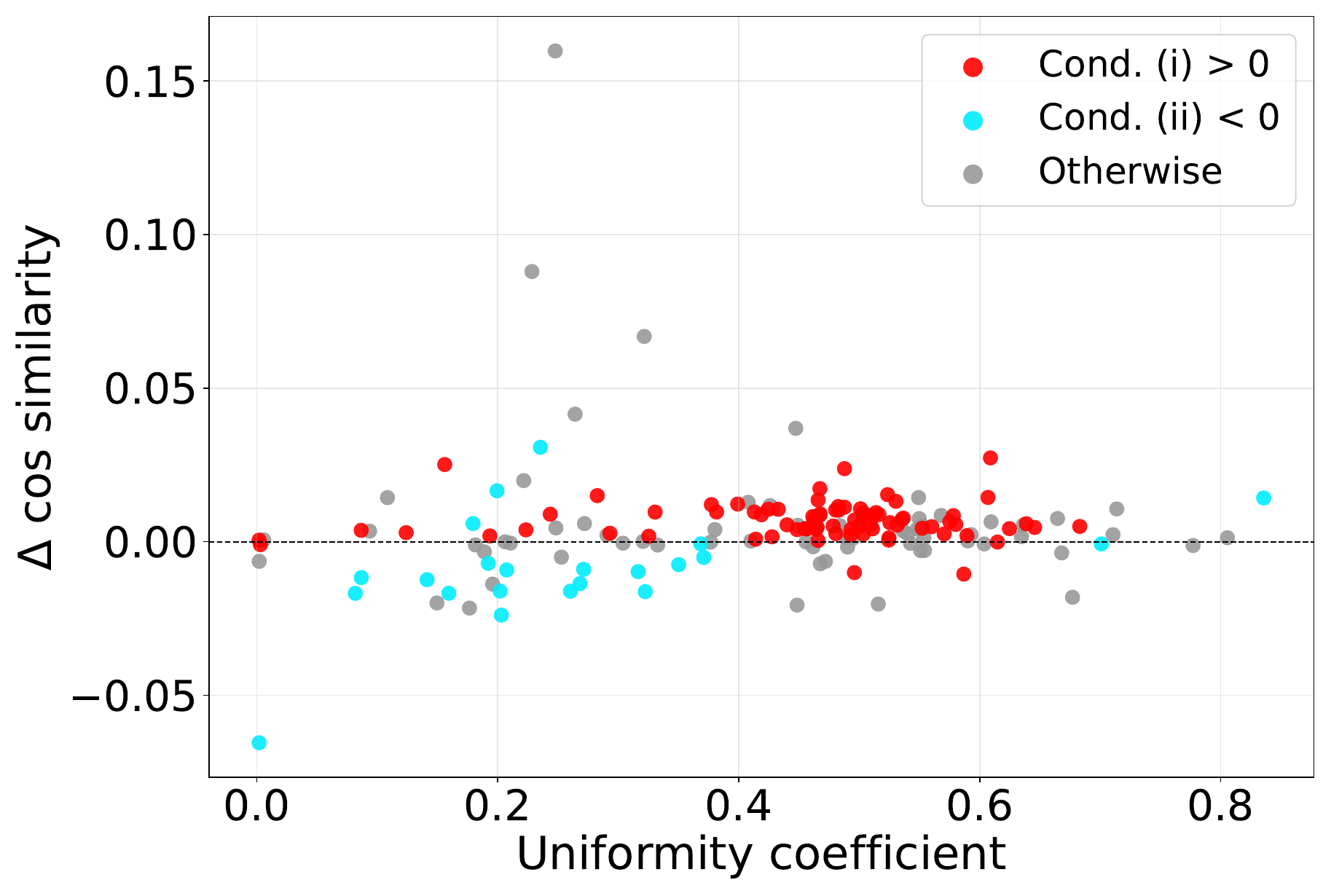}
    \end{subfigure}
    \hfill 
    \begin{subfigure}[b]{0.49\textwidth}
        \centering
        {\small Mistral-7B, C4\par}
        \vspace{2pt}
        \includegraphics[width=\textwidth]{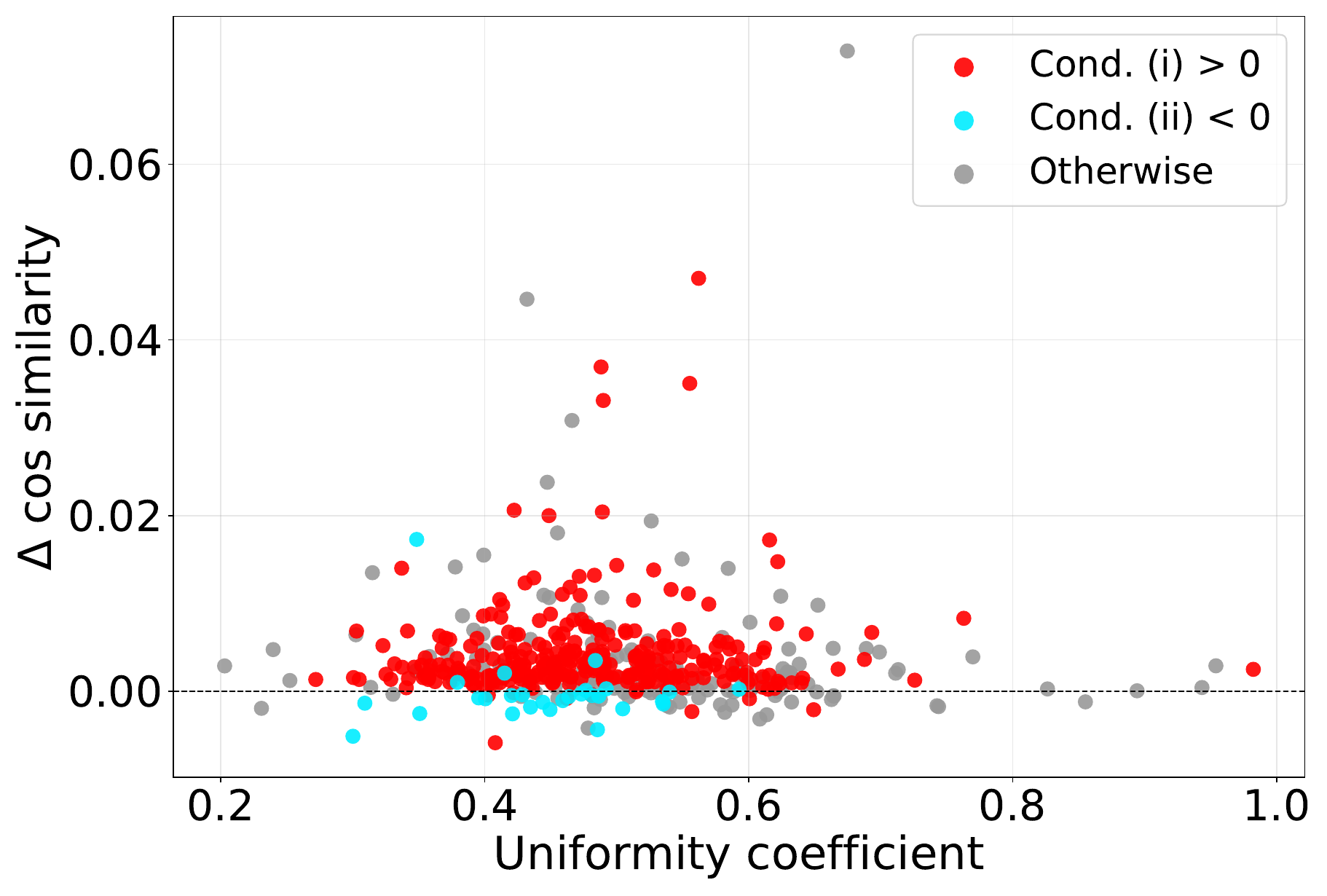}
    \end{subfigure}

    \caption{\(\operatorname{avg.\ cos\ sim}(X+WZA_u^\top)-\operatorname{avg.\ cos\ sim}(X+WZA^\top)\) for each head of different models and datasets at context length 512. Heads satisfying \(\beta\Tr(BW)>0\) (red) tend to yield positive values, whereas heads satisfying \(\beta\Tr(BW)+\Tr(BW^\top W)<0\) (blue) tend to have negative values. Heads satisfying neither condition are shown in gray.}
    \label{fig:trace-cond-theorem-2}
\end{figure}

\begin{figure}[t]
    \centering
    \begin{subfigure}[b]{0.49\textwidth}
        \centering
        {\small LLaMA3-8B, C4\par}
        \vspace{2pt}
        \includegraphics[width=\textwidth]{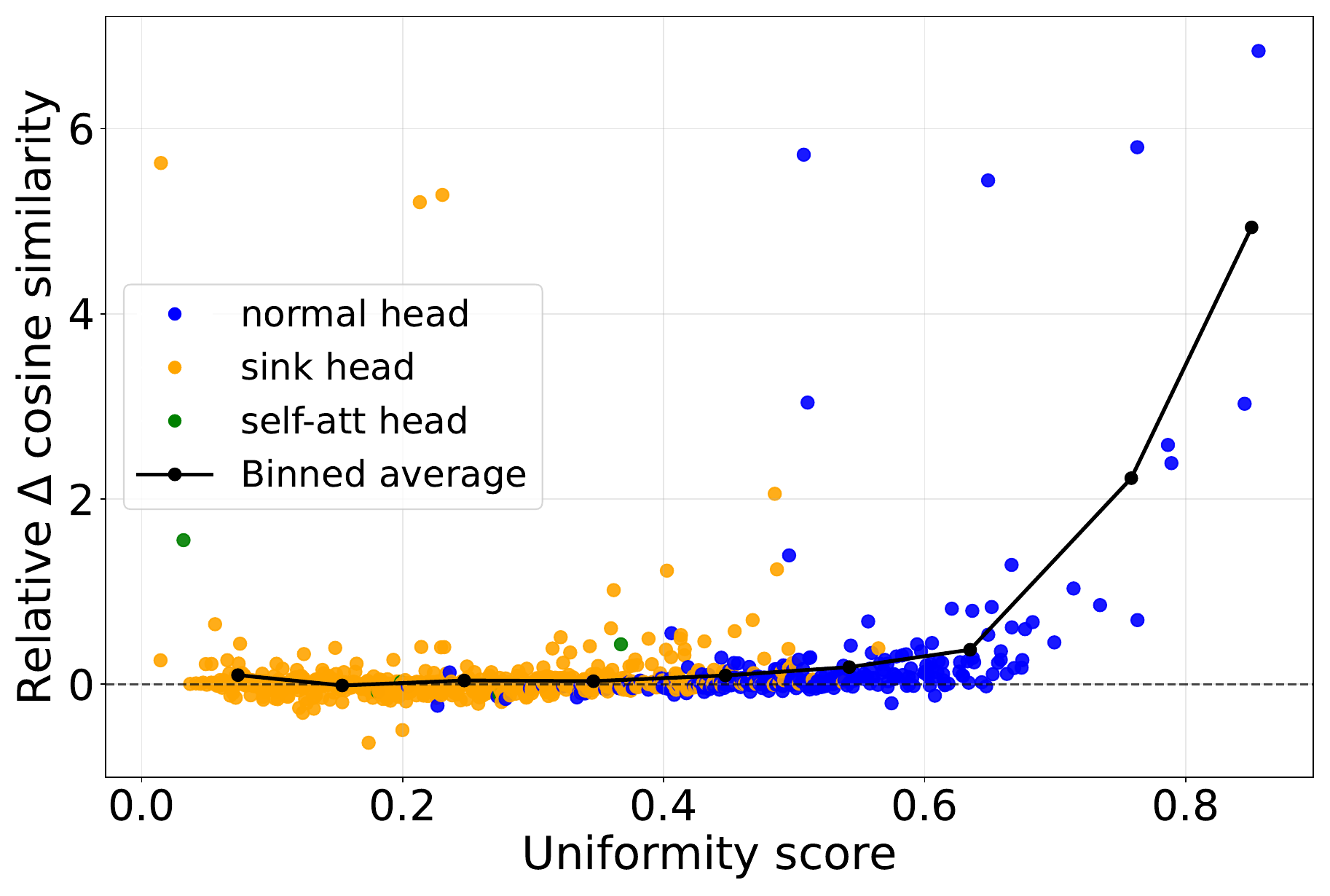}
    \end{subfigure}
    \hfill 
    \begin{subfigure}[b]{0.49\textwidth}
        \centering
        {\small GPT2-XL, CodeParrot\par}
        \vspace{2pt}
        \includegraphics[width=\textwidth]{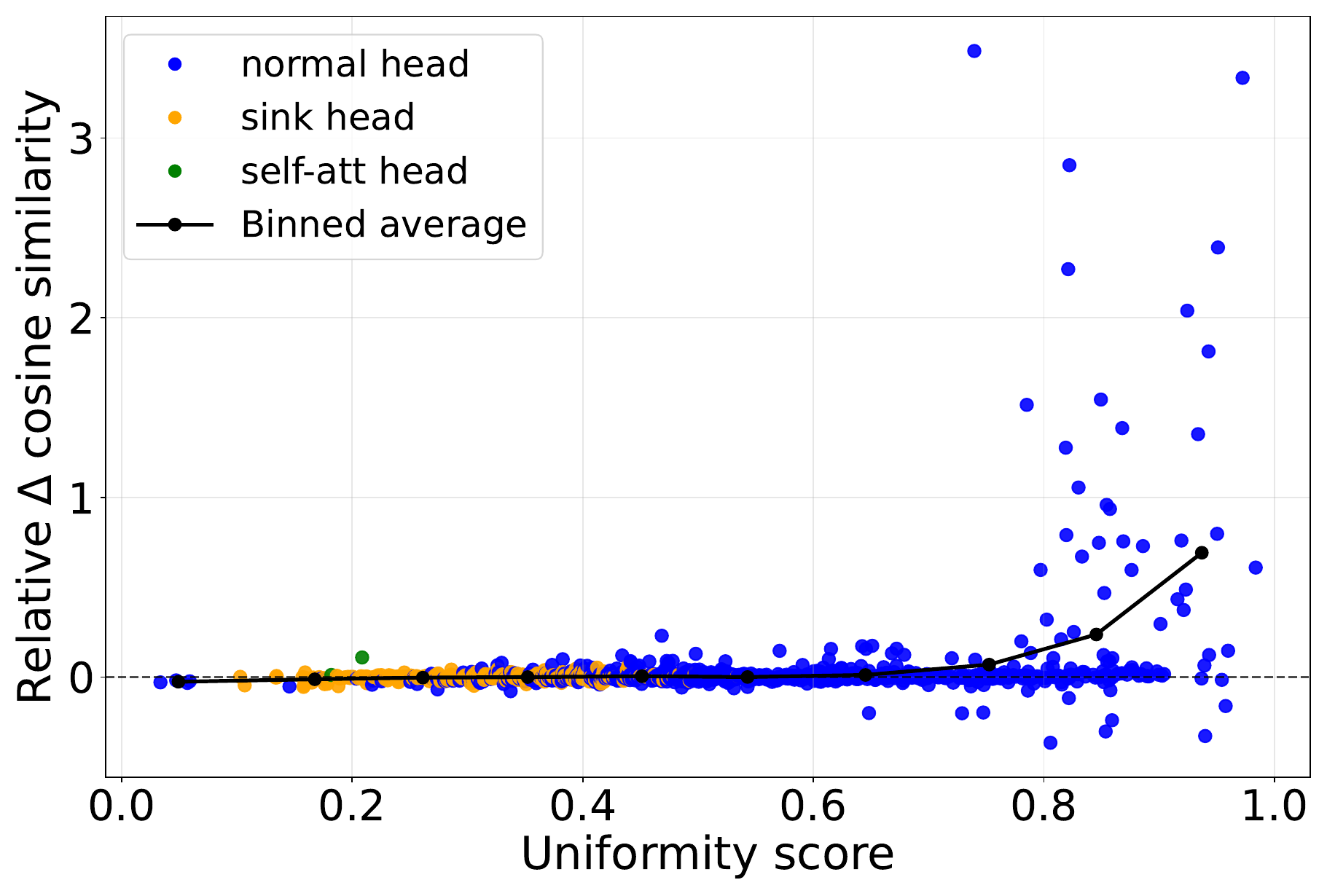}
    \end{subfigure}
     \hfill 
    \begin{subfigure}[b]{0.49\textwidth}
        \centering
        {\small Gemma-7B, WikiText\par}
        \vspace{2pt}
        \includegraphics[width=\textwidth]{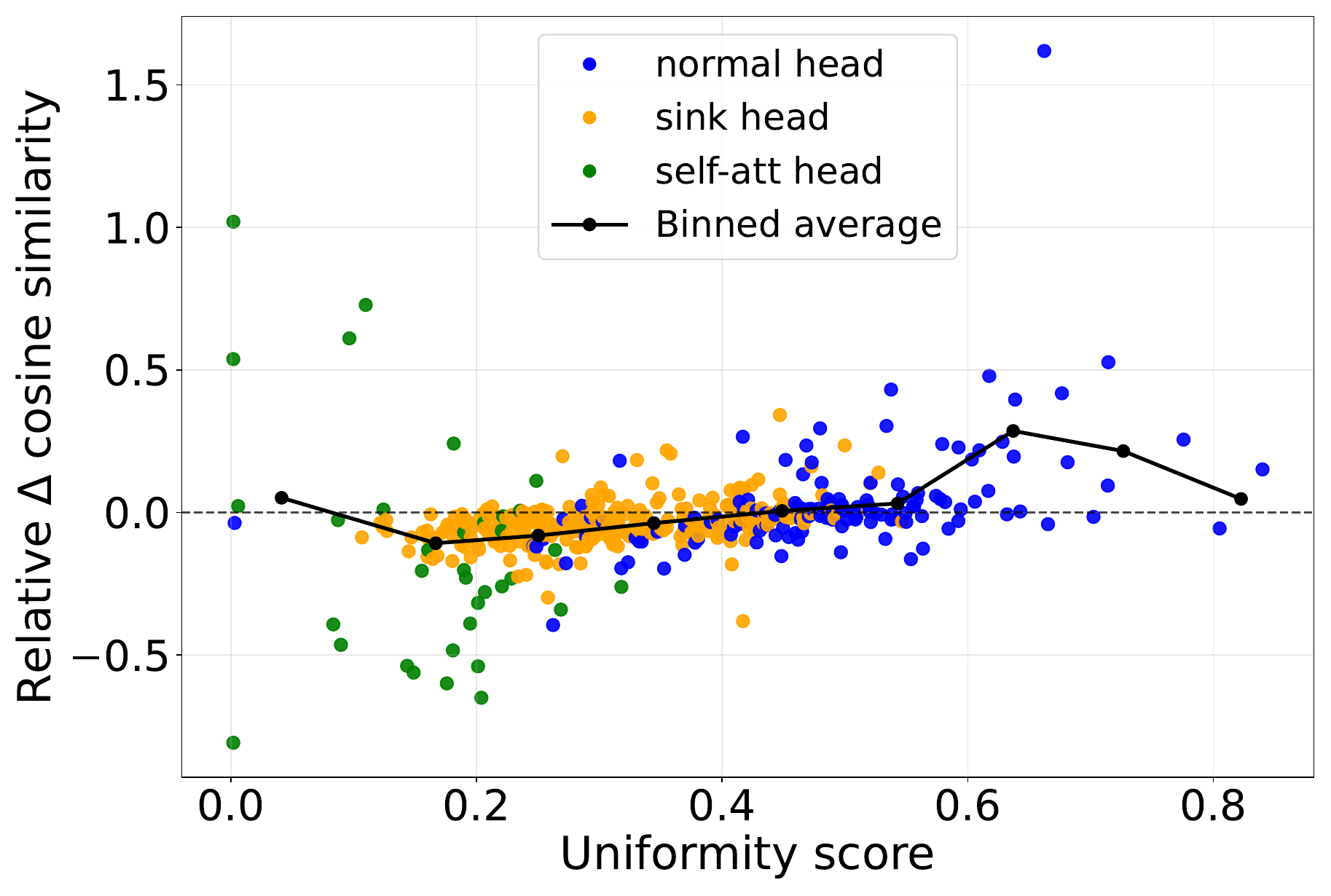}
    \end{subfigure}
    \hfill 
    \begin{subfigure}[b]{0.49\textwidth}
        \centering
        {\small Mistral-7B, C4\par}
        \vspace{2pt}
        \includegraphics[width=\textwidth]{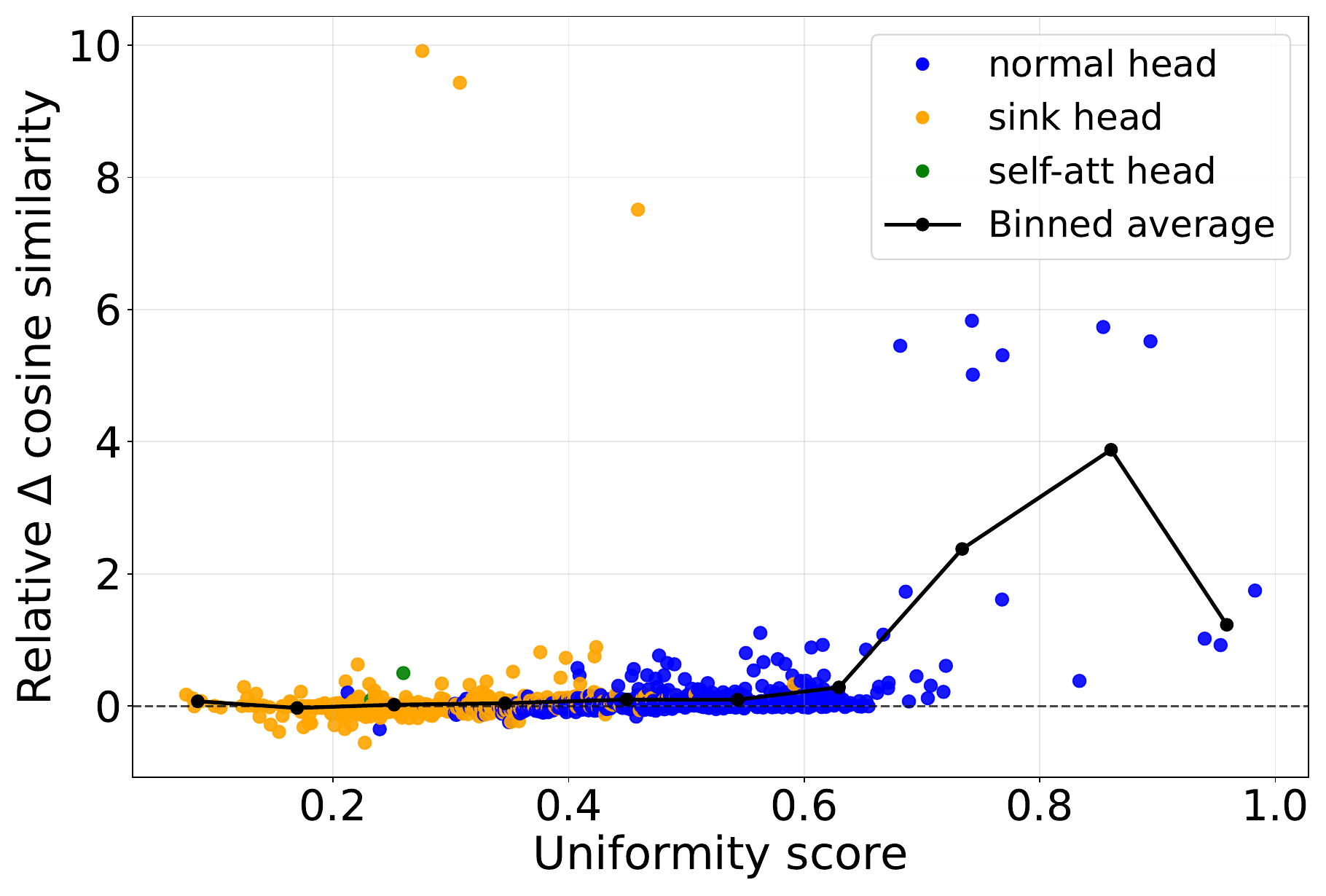}
    \end{subfigure}
\caption{Relative change in average cosine similarity, $(\operatorname{avg.\ cos\ sim}(X+WZA^\top)-\operatorname{avg.\ cos\ sim}(X))/\left|\operatorname{avg.\ cos\ sim}(X)\right|$,
for each head of different models and datasets at context length 512, plotted against the head's uniformity coefficient. Heads are classified as sink heads (yellow), self-attention heads (green), or neither (blue), using a threshold of \(0.5\) for both categories.  The black line shows the average value across heads within bins of width \(0.1\).}
    \label{fig:unif-vs-relChange}
\end{figure}

\begin{figure}[p]
    \centering
    \begin{subfigure}[b]{0.49\textwidth}
        \centering
        {\small LLaMA3-8B, C4, Head 13, Layer 17, Unif. $0.59$\par}
        \vspace{2pt}
        \includegraphics[width=\textwidth]{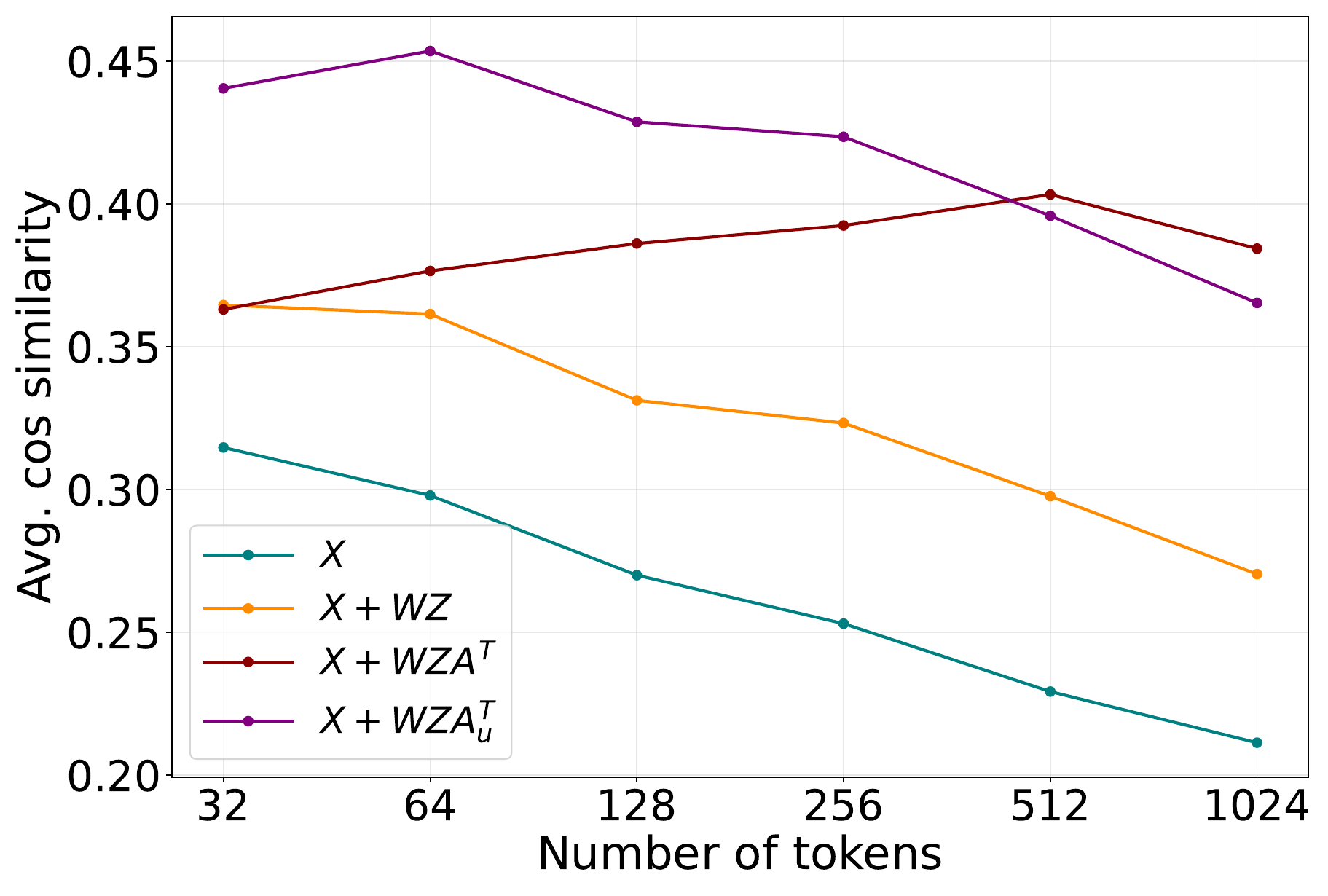}
    \end{subfigure}
    \hfill 
     \begin{subfigure}[b]{0.49\textwidth}
        \centering
        {\small GPT2-XL, CodeParrot, Head 10, Layer 6, Unif. $0.54$\par}
        \vspace{2pt}
        \includegraphics[width=\textwidth]{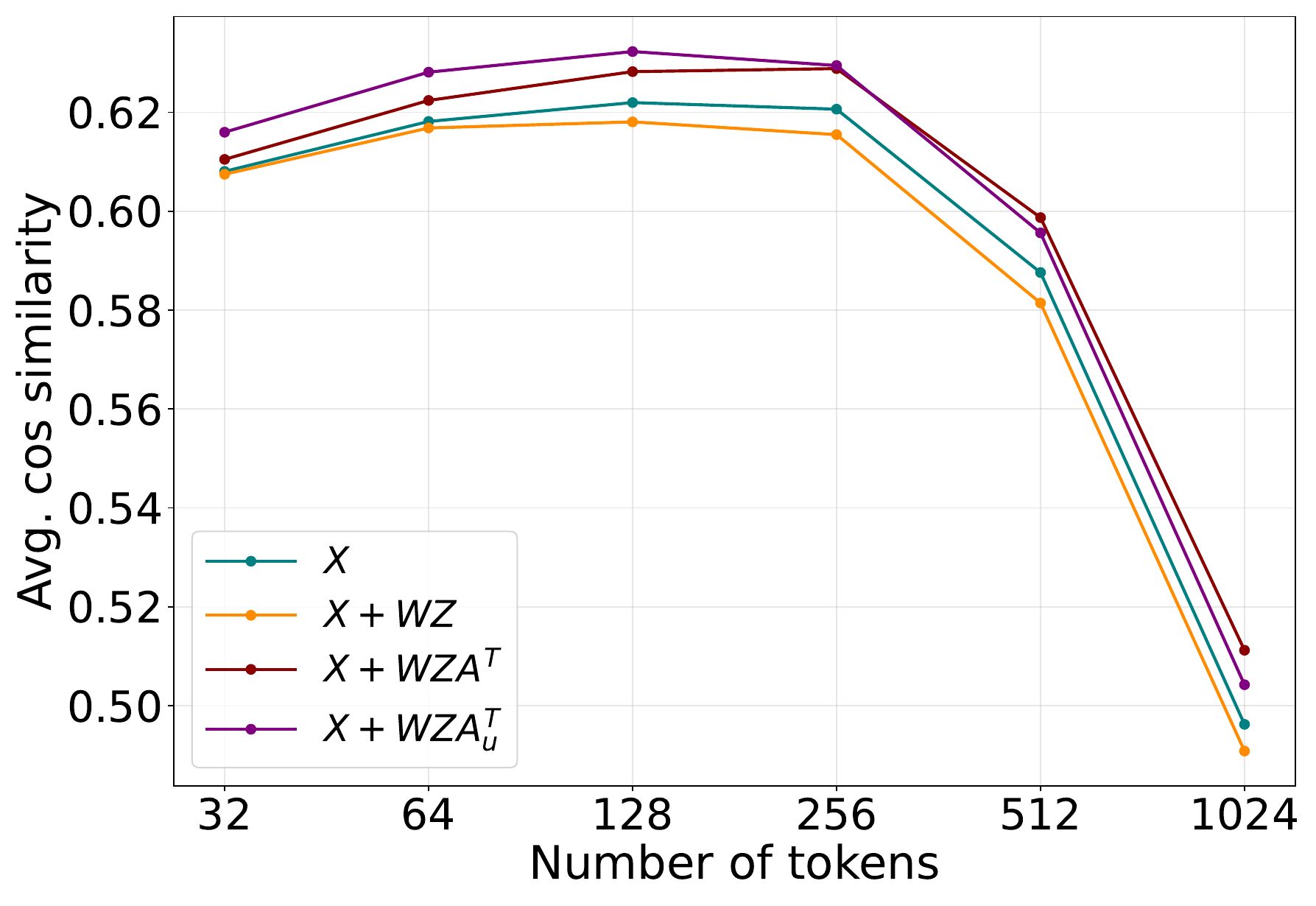}
    \end{subfigure}
     \hfill 
    \begin{subfigure}[b]{0.49\textwidth}
        \centering
        {\small Gemma-7B, WikiText, Head 8, Layer 17, Unif. $0.30$\par}
        \vspace{2pt}
        \includegraphics[width=\textwidth]{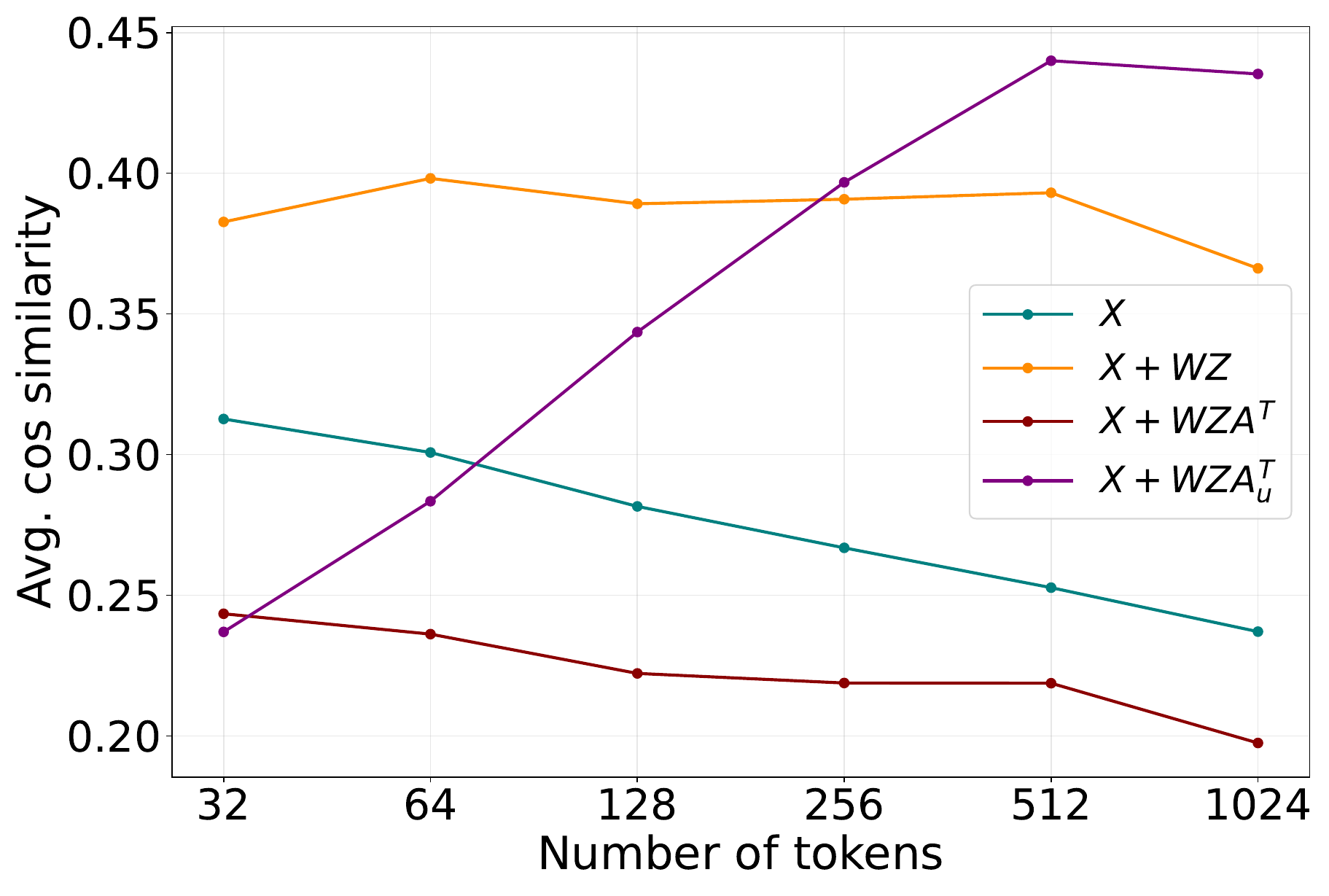}
    \end{subfigure}
    \hfill 
    \begin{subfigure}[b]{0.49\textwidth}
        \centering
        {\small Mistral-7B, C4, Head 24, Layer 30, Unif. $0.27$\par}
        \vspace{2pt}
        \includegraphics[width=\textwidth]{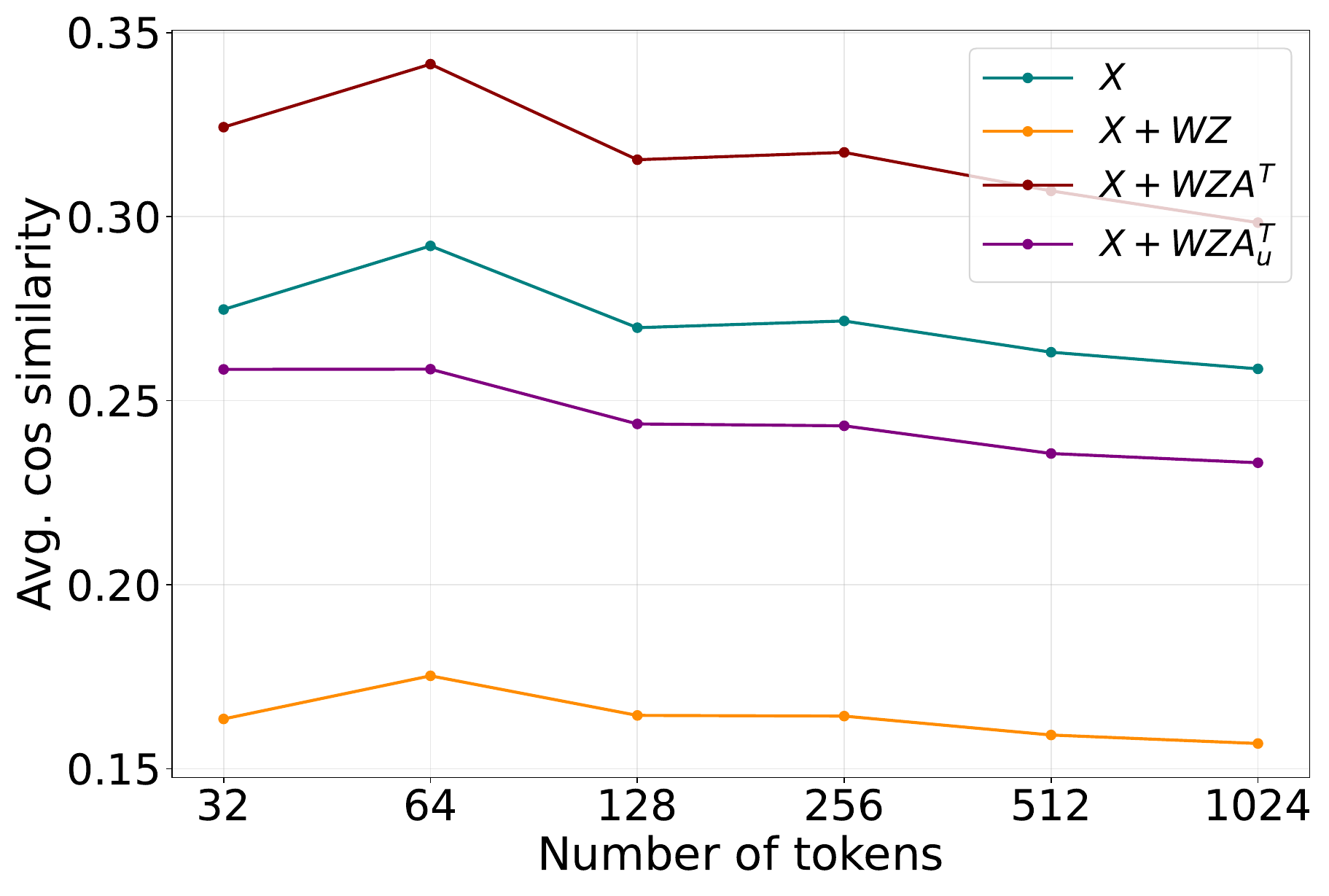}
    \end{subfigure}
\caption{Empirical average cosine similarity for selected heads from different models and datasets, shown for different components of the attention step. These panels illustrate heads that deviate from the typical behavior described in the main text. The reported uniformity coefficient is averaged across context lengths.}
    \label{fig:uncommon-heads}
\end{figure}

\subsection{Value mapping ranks}\label{appx:exp-value_ranks}
Figure~\ref{fig:value_ranks} shows the distributions of ranks of the matrices of value vectors per layer, pooled across heads, 50 random sequences and four datasets (\texttt{TinyStories} \cite{eldan2023tinystories}, \texttt{tinyshakespeare} \cite{karpathy2015char-rnn}, \texttt{WikiText} \cite{merity2017pointer}, and \texttt{CodeSearchNet-Python} \cite{husain2019codesearchnet}). Instead of raw ranks, we plot the rank of the matrix divided by $\min\{m,n\},$ where $m, n$ are the dimensions of the matrix. For each model, we choose the context length to be as large as the inner dimension of the value mappings. As can be seen from the plots, models exhibit very high ranks in the middle layers. The only dataset on which outliers exist is \texttt{CodeSearchNet-Python} on gpt2, because the samples contain a lot of repeated tokens. Note that Llama and Gemma models are immune to repeated tokens and the ranks do not suffer deficiency even on those samples. 

\begin{figure}[p]
    \centering
    \includegraphics[width=\linewidth]{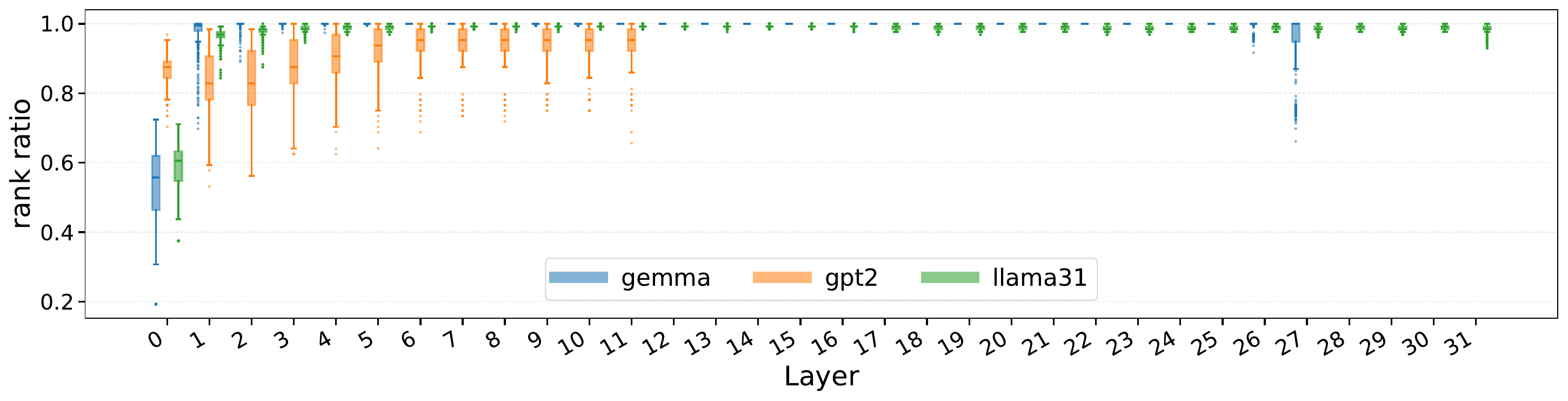}
    \includegraphics[width=\linewidth]{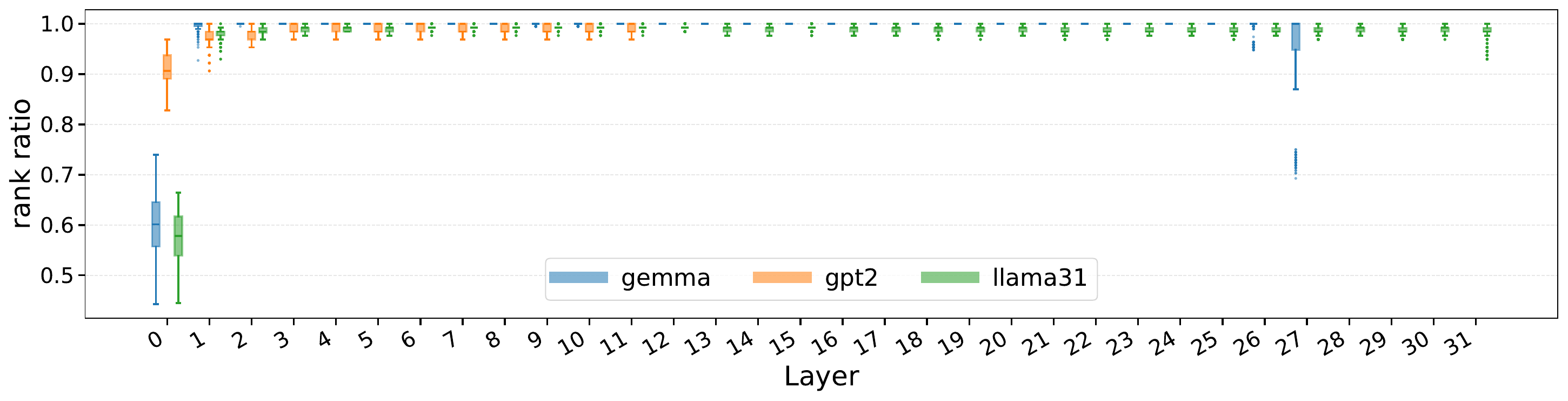}
    \includegraphics[width=\linewidth]{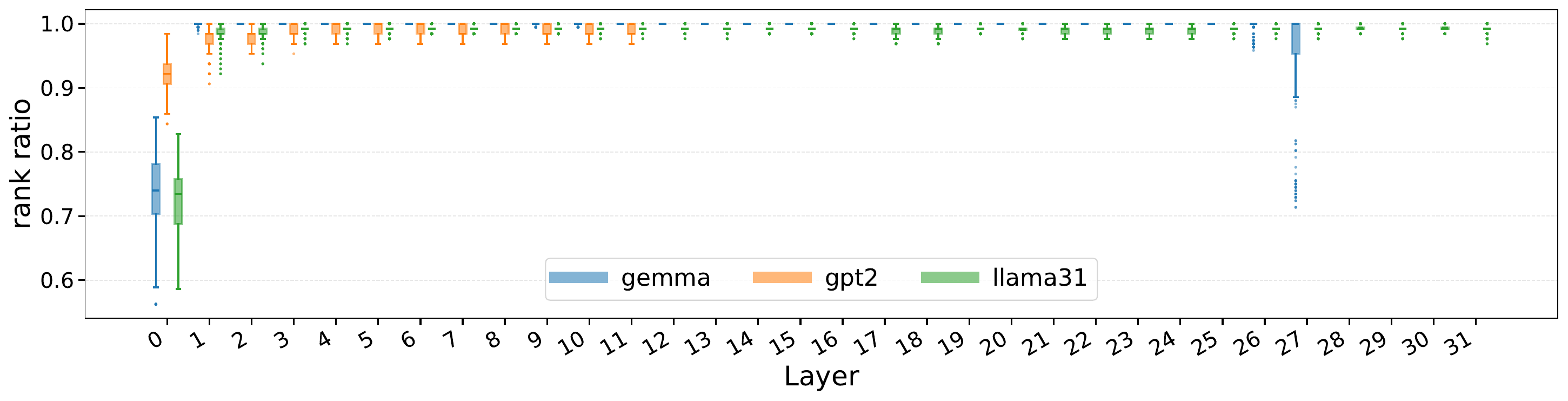}
    \includegraphics[width=\linewidth]{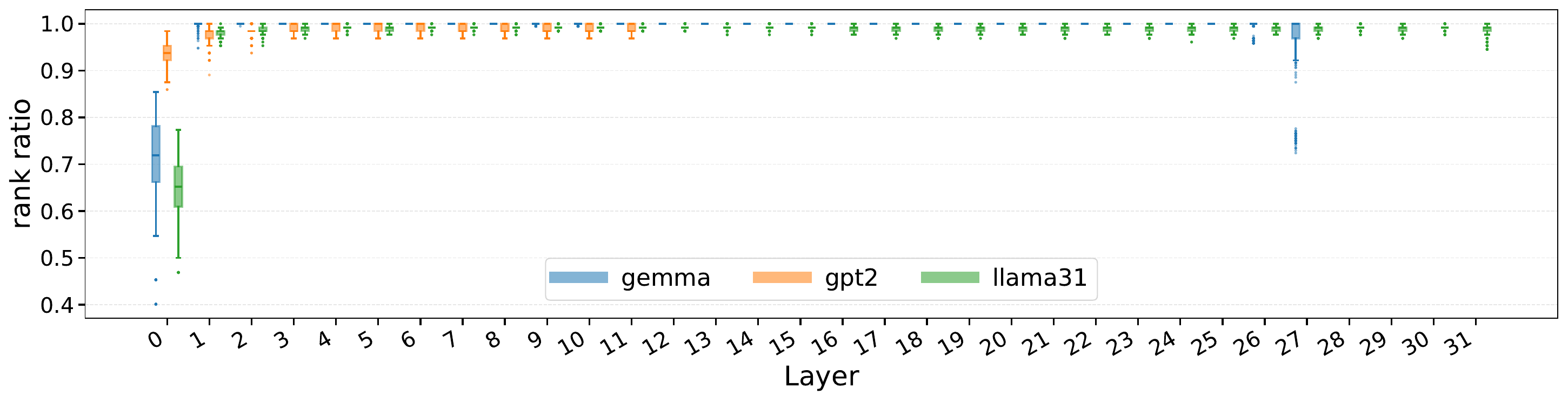}
    \includegraphics[width=\linewidth]{figures/peter/code_search_net_python_value_rank_multi_model_boxplot.pdf}
    \caption{Rank distributions across layers, pooled over heads, 50 sequences and 4 datasets. The whiskers of the box plot point to absolute minimum and maximum. The relative rank tolerance is $0.001.$ \textbf{Upper:} Average for all four datasets. \textbf{Lower four:} Individual results for \texttt{TinyStories}, \texttt{tinyshakespeare}, \texttt{WikiText} and  \texttt{CodeSearchNet-Python} (in this order).}
    \label{fig:value_ranks}
\end{figure}

\subsection{Attention heads following the patterns from Section~\ref{sec:sink_diag}}\label{app:patterns}

Figure~\ref{fig:sink_diag_attention_maps} displays heads whose patterns follow the assumptions in Theorems~\ref{thm:sink_vs_diag_positional} and~\ref{thm:sink_diag_generic}. As a source, we use the online tool \texttt{https://attention-motifs.github.io/v2/index.html} developed in \cite{ivanitskiy2025motifs}. The sequence we tested this on is: 

\textit{love planning. It is one of my obsessions. After making numerous planning mistakes, however, I would like to share with you some tips I’ve learned along the way. If you are planning your child’s. One of my favorite drinks when I return home is plantation iced tea. Last year, when I spent a week back home in Hawaii, I ended up drinking it as often as I found it on the menu. Now that the heat of summer is here, I’m dreaming of returning home to visit again. I really […]}

We note that the variance of the type of pattern as a function of the sequence is practically non-existent.

\begin{figure}
    \centering
    \includegraphics[width=0.35\linewidth]{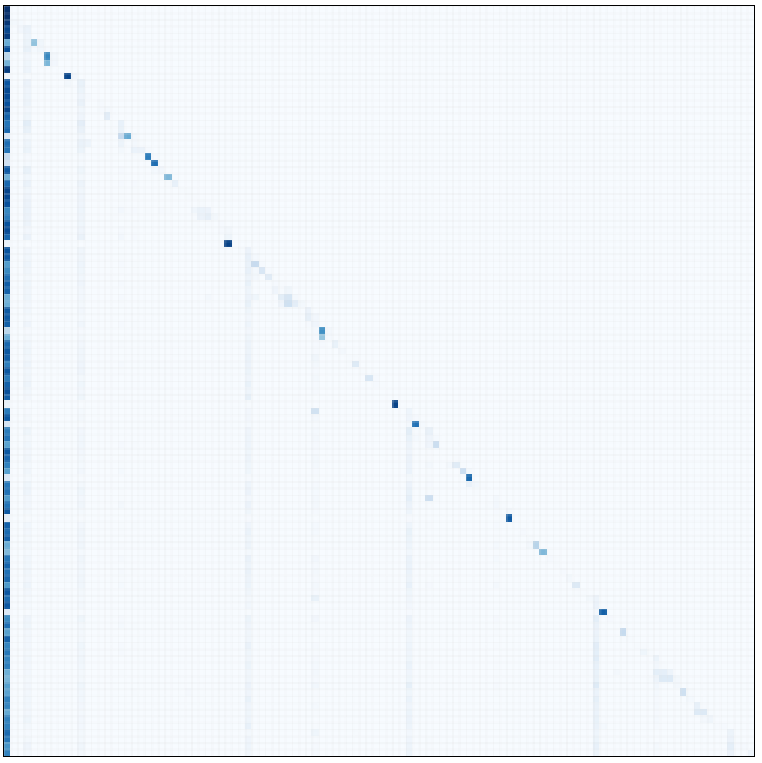}
    \includegraphics[width=0.35\linewidth]{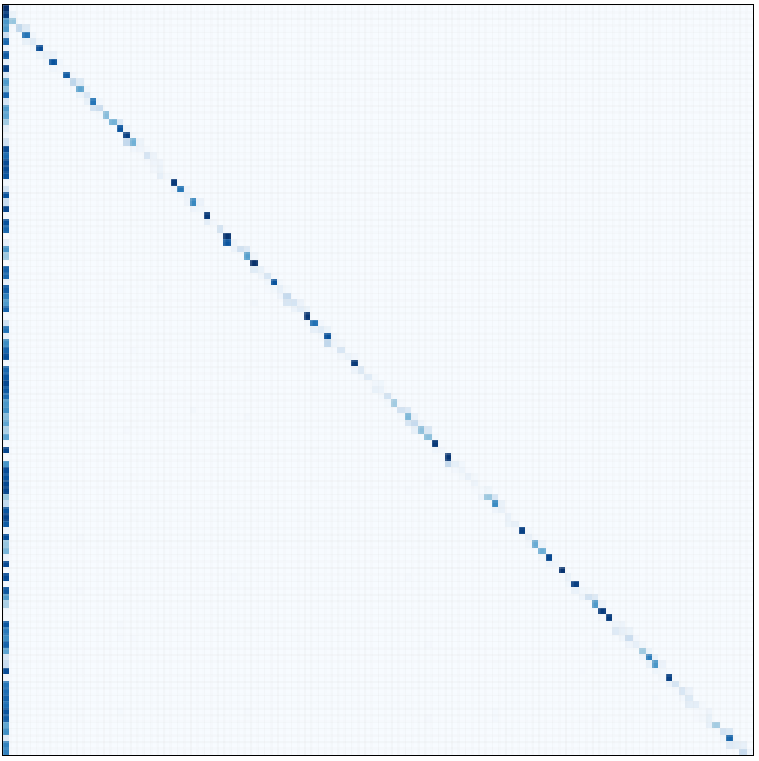}
    \includegraphics[width=0.35\linewidth]{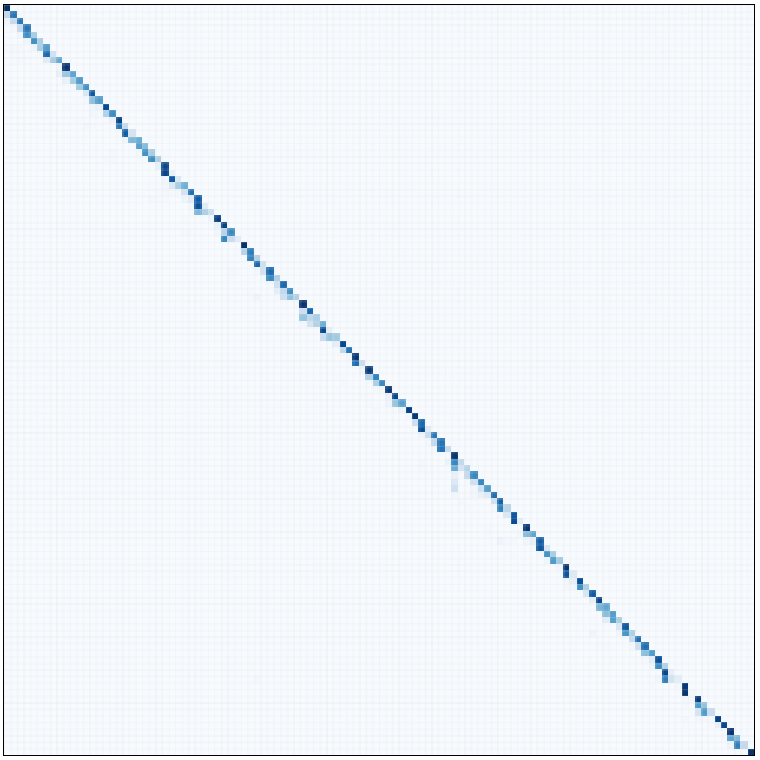}
    \includegraphics[width=0.35\linewidth]{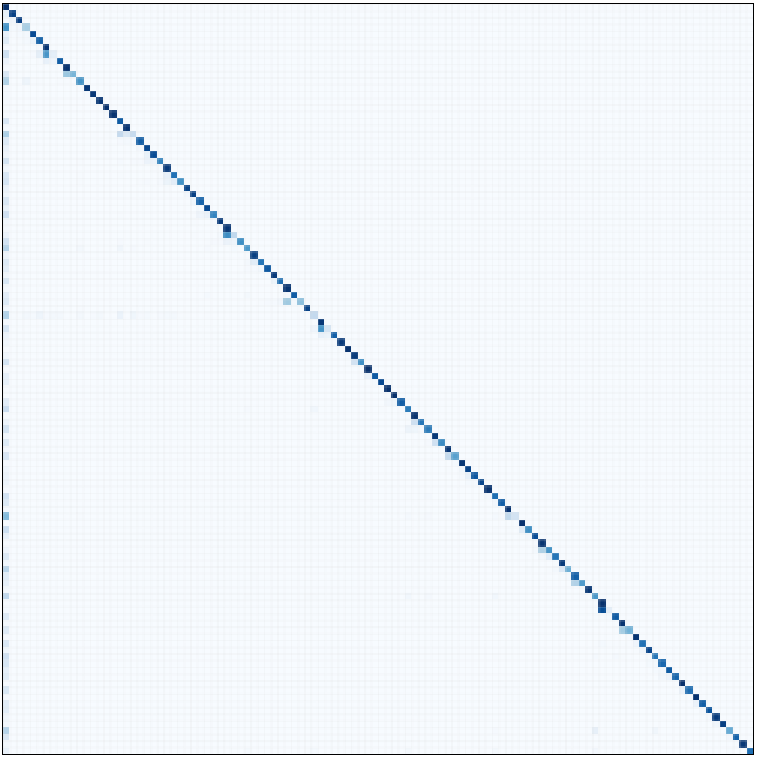}
    \includegraphics[width=0.35\linewidth]{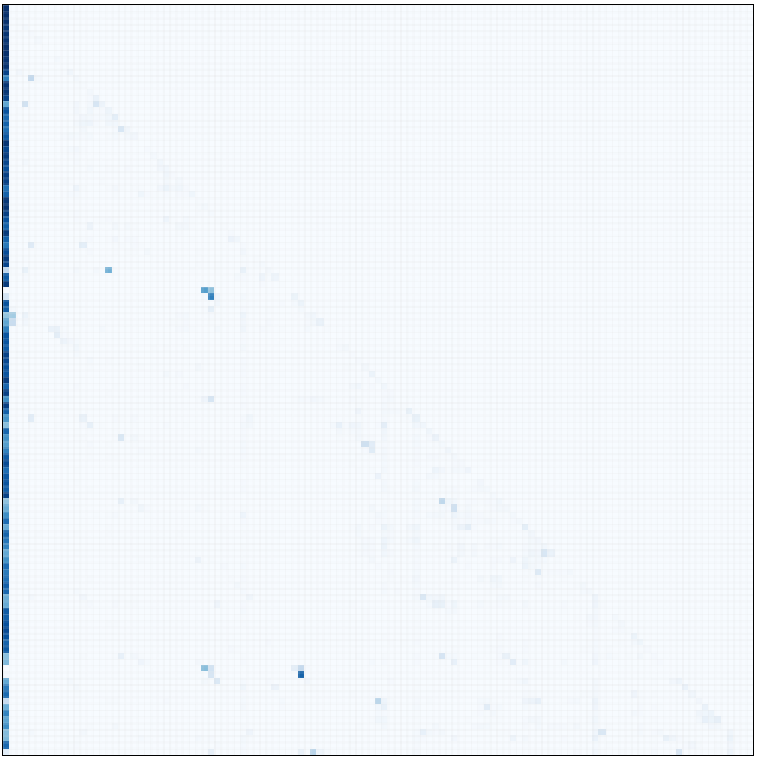}
    \includegraphics[width=0.35\linewidth]{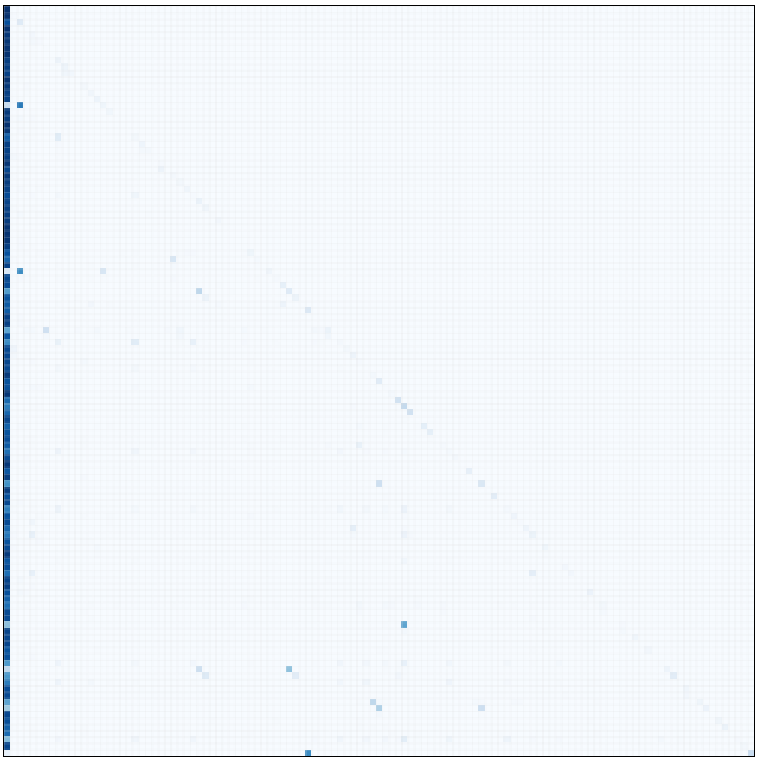}
    \includegraphics[width=0.35\linewidth]{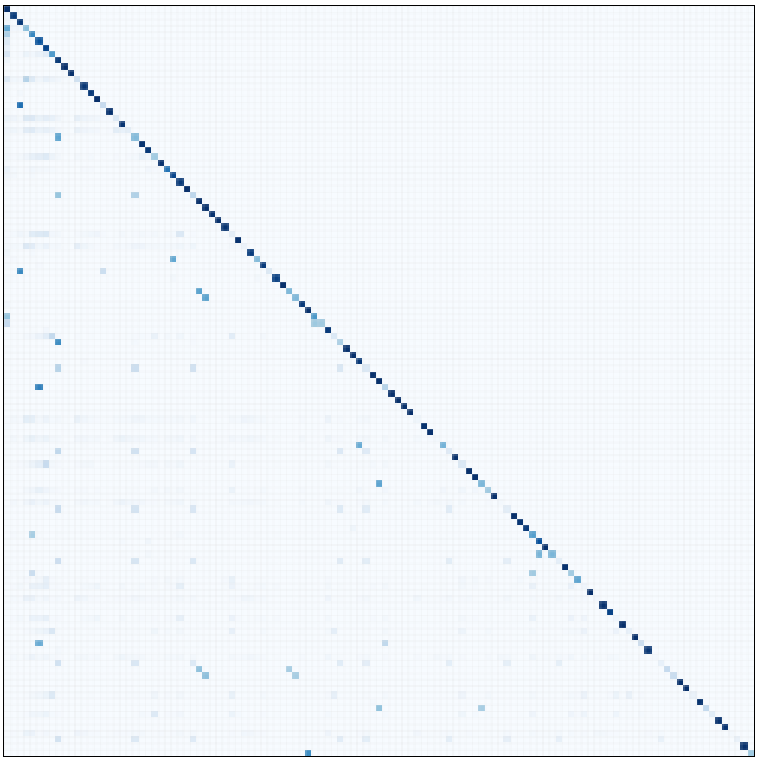}
    \includegraphics[width=0.35\linewidth]{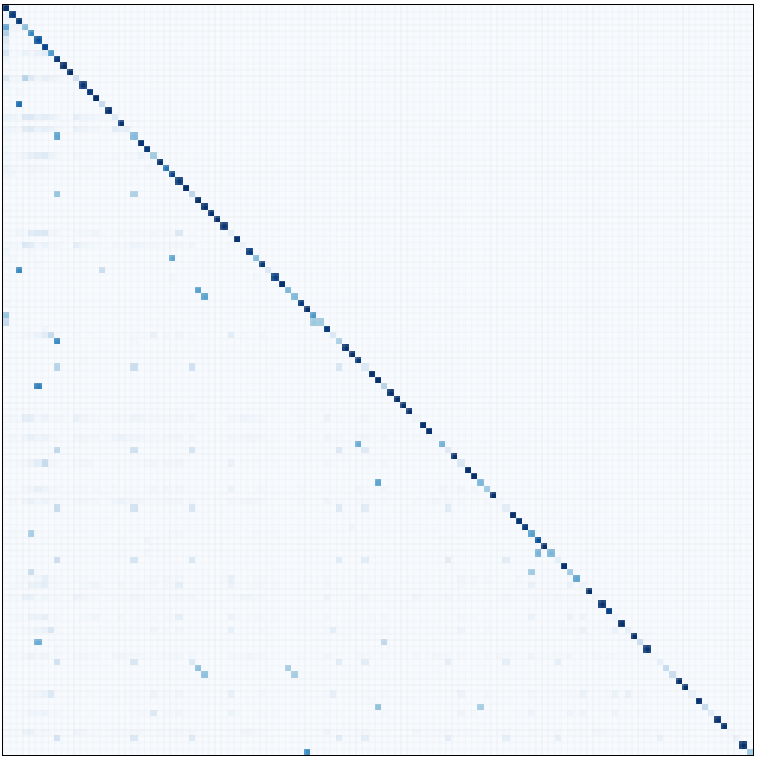}
    \caption{Attention patterns in pretrained transformers (taken randomly) from a pool of models available at \texttt{https://attention-motifs.github.io/v2/index.html}. The patterns satisfy the description of our distributions as copy-paste distribution sink (first row), copy-paste distribution diagonal (second row), copy clusters sink (Theorem~\ref{thm:sink_diag_generic}) (third row) and copy clusters diagonal (fourth row).}
    \label{fig:sink_diag_attention_maps}
\end{figure}




\end{document}